\newcommand{\hltodo}[1]{{\bf \hl{[#1]}}}
\newcommand{\R}{\mathbb{R}}
\newcommand{\confidenceset}{\mathcal{C}_t}
\newcommand{\bmu}{\boldsymbol{\mu}}
\newcommand{\blambda}{\boldsymbol{\lambda}}
\newcommand{\bw}{\boldsymbol{w}}
\newcommand{\pol}{\boldsymbol{\pi}}
\newcommand{\E}{\mathbb{E}}
\newcommand{\F}{\mathcal{F}}
\newcommand{\event}{\mathcal{E}}
\newcommand{\KL}[2]{d(#1, #2)}
\newcommand{\binaryKL}[2]{\mathbf{kl}(#1 || #2)}
\newcommand{\Alt}{\Lambda_\F}
\newcommand{\Cone}{\mathcal{N}}
\newcommand{\Domain}{\mathcal{D}}
\newcommand{\Assumption}[1]{{\bf Assumption #1:}}
\newcommand{\Scenario}[1]{{\bf Scenario #1:}}
\newcommand{\allocationset}{\Pi}
\newcommand{\chinf}{\text{ch}}
\newcommand{\optpol}{\pol^*}
\newcommand{\neighbors}{\mathcal{V}_{\F}(\optpol)}
\newcommand{\chartime}{T_\F}
\DeclareMathOperator*{\argmax}{arg\,max}
\DeclareMathOperator*{\argmin}{arg\,min}
\newtheorem{theorem}{Theorem}
\newtheorem{lemma}{Lemma}
\newtheorem{corollary}{Corollary}
\newtheorem{definition}{Definition}
\newtheorem{example}{Example}
\begin{document}

\runningauthor{Emil Carlsson, Debabrota Basu, Fredrik D. Johansson, Devdatt Dubhashi}

\twocolumn[

\aistatstitle{Pure Exploration in Bandits with Linear Constraints}

\aistatsauthor{Emil Carlsson$^1$, Debabrota Basu$^2$, Fredrik D. Johansson$^1$, Devdatt Dubhashi$^1$}

\aistatsaddress{1. Department of Computer Science and Engineering, Chalmers University of Technology, Sweden. \\
 2. Équipe Scool, Univ. Lille, Inria, CNRS, Centrale Lille, UMR 9189 - CRIStAL, France.} ]

\begin{abstract}
We address the problem of identifying the optimal policy with a fixed confidence level in a multi-armed bandit setup, when \emph{the arms are subject to linear constraints}. Unlike the standard best-arm identification problem which is well studied, the optimal policy in this case may not be deterministic and could mix between several arms. This changes the geometry of the problem which we characterize via an information-theoretic lower bound. We introduce two asymptotically optimal algorithms for this setting, one based on the Track-and-Stop method and the other based on a game-theoretic approach. Both these algorithms try to track an optimal allocation based on the lower bound and computed by a weighted projection onto the boundary of a normal cone. Finally, we provide empirical results that validate our bounds and visualize how constraints change the hardness of the problem. 
\end{abstract}

\section{Introduction}

 A classical problem in the multi-armed bandit framework is \emph{pure exploration}~\citep{lattimore_szepesvari_2020}, where the task of a learner is to answer some query about a set of actions, also known as arms, by iteratively choosing between the actions and receiving an immediate reward sampled from a distribution associated with the action. A very well-studied problem in this context is Best-Arm Identification (BAI), where a learner is trying to identify the arm with the highest expected reward~\citep{EvenDar2002PACBF, bubeck2009pure, Kalyanakrishnan12}. The BAI problem has many applications such as hyper-parameter tuning~\citep{li2017hyperband}, clinical trials~\citep{aziz2021multi}, communication networks~\citep{lindstaahl2022measurement} and user studies~\citep{losada2022day}. However, many real-world scenarios often involve \emph{constraints on the arms} that must be satisfied. For example, in recommender systems, one may need to ensure diversity and genre constraints~\citep{Kunaver17}, or fairness of exposure~\citep{wang2021fairness}. In clinical trials, one may need to account for toxicity constraints of the available treatments~\citep{brannath2009confirmatory,chen2021multi,demirel2022escada}. As a result, standard BAI algorithms are not perfectly fitted in these settings and might have large sample complexity as we show empirically later on in Section~\ref{sec:experiments}. 

In this paper, we introduce the problem of pure exploration in bandits with linear constraints where the goal is to identify, with a fixed confidence, a policy that maximizes the expected rewards over arms while satisfying some given constraints. A set of constraints may change the nature of the pure exploration problem fundamentally. In particular, the optimal policy \emph{may not be deterministic}, and \textit{finding the best arm may not be sufficient}. Let us consider the following example.
\begin{figure}[h]
    \centering
    \includegraphics[width=0.48\textwidth]{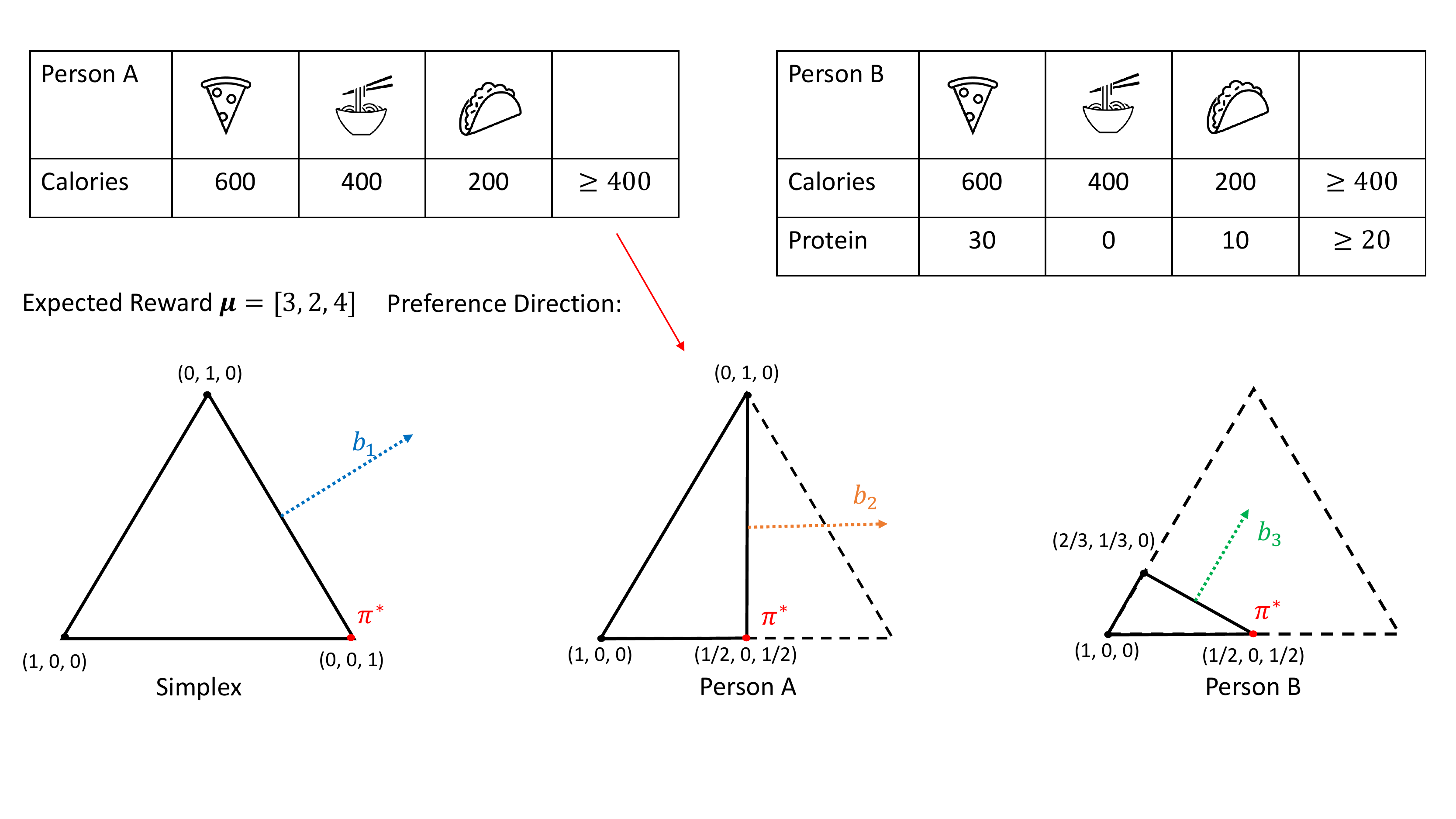}
    \caption{A Visual Representation of Example~\ref{ex:1}. Left figure with the full simplex represents the unconstrained problem. While the constraints of person A (middle) and person B (right) modify the problem to be harder and easier than the unconstrained one.}\label{fig:intro_fig}
\end{figure}

\begin{example}[Optimal meal plan]\label{ex:1}
    Two people, A and B, are searching for a meal plan $\pol$ that maximizes taste, i.e. expected reward $\bmu^\top \pol$, while satisfying some nutrition constraints. Without any constraints this setting reduces to BAI and can be viewed as searching for the optimal policy over the probability simplex. However, as illustrated in Figure~\ref{fig:intro_fig}, the nutrition constraints alter the set of feasible sets and a person might have to mix between several dishes to satisfy the constraints while maximizing the reward. In Figure~\ref{fig:intro_fig}, the red arrow indicates the preference direction and the red dot corresponds to the optimal policy for each case. The dotted arrows, $\boldsymbol{b}_i$, corresponds to the normal of that boundary, i.e. the constraint causing the boundary, and as we will see later, in Figure~\ref{fig:projections}, the distance between $\bmu$ and $\boldsymbol{b}_i$ controls the hardness of the problem. For person A, the distance between $\boldsymbol{b}_2$ and $\bmu$ decreases compared to the unconstrained case, while it increases for person B. Thus, the problem of finding the optimal pure exploration policy gets easier for person B while harder for person A. This is quantified by the minimum number of samples required to identify the optimal policies for person A, B, and the unconstrained case (ref. Fig.~\ref{fig:projections}).
\end{example}

As illustrated in Figure~\ref{fig:intro_fig}, a learner may need to search for a \emph{stochastic policy} that allocates positive probabilities to multiple arms and this influences how an efficient learner should explore. Depending on the constraints, the learner’s task may become easier or harder, e.g. because the learner may need to explore several arms more extensively, or the constraints may remove several near-optimal policies, which makes the problem easier. These observations yield the following fundamental questions:

\textit{How do a specific set of constraints impact a pure exploration problem in terms of the minimum number samples required to identify the optimal policy?}

\textbf{Our Contributions.}  We define the problem of pure exploration in bandits with linear constraints and derive a corresponding lower bound on the sample complexity of any algorithm. We further derive an explicit lower bound for arms corresponding to Gaussian distributions, which shows that the hardness depends on the projection of $\bmu$ onto boundary of a normal cone, and that the lower bound diminishes with the increasing condition number of the constraints defining the optimal policy. Our results show that the lower bound can be thought of as a zero-sum game where the learner plays an exploration strategy and the adversary plays a constraint that is not active at the optimal policy. These insights allow us to modify the standard BAI algorithms, such as Track-and-Stop~\citep{garivier2016optimal} and the game-theoretic algorithm~\citep{Degenne}, and extend them to the constraint setting. We prove that our proposed algorithms are optimal in the asymptotic regime for the pure exploration problem with known linear constraints. Finally, we empirically evaluate the algorithms, both on synthetic and realistic data.

\subsection{Related work}
Now, we review some works on policy learning, a classical problem in decision-making~\citep{bechhofer1958sequential}, that deal with known or learned constraints on decisions and/or constraint exploration due to safety, fairness, or other preferences.


\textbf{Adapting to known constraints.} 
Constraints are often used to ensure safety in reinforcement learning, online learning and control~\citep{moldovan2012safe,gillulay2011guaranteed, Wan22, vayasz22}. In the bandit literature, some variants of the best-arm identification (BAI) problem impose constraints on the chosen arm, or on the exploration process. \citet{wang2022best, Camilleri22} studies the setting with unknown linear rewards under known safety constraints but only allow single coordinate actions. \citet{faizal2022constrained} consider BAI under fixed budget with known constraints on the arms. Their setting differs from ours in that \textit{we look for a best “policy” over arms with linear constraints rather than a single best arm}.

\textbf{Learning unknown constraints.} 
\citet{sui2015safe,sui2018stagewise} study online optimization of an unknown function $f$ with constraints on $f$, but without formal analysis. In the bandit literature, constraints are mostly studied in the regret-minimization setting. \cite{moradipari2021safe} and \cite{pacchiano2021stochastic} consider regret minimization in linear bandits under linear constraints from Bayesian and Frequentist perspectives, respectively. \citet{amani2019linear} study regret minimization in linear contextual bandits with unknown and unobserved linear constraints. \citet{wang2021fairness}  aims to minimize the fairness regret to ensure proportional exposure for each arm, which implies a known structure for the policies. Unlike these works, we focus on the pure exploration setting. \citet{lindner2022interactively} considers constrained linear best-arm identification arm are vectors with \emph{known} rewards and a single \emph{unknown} constraint (representing preferences) on the actions.

\textbf{Pure exploration algorithms.}
Our Constrained Track-and-Stop algorithm, (CTnS, Section~\ref{sec:tns}), follows the Track-and-Stop TnS) meta-scheme proposed by \citet{garivier2016optimal}. In TnS, one tracks an optimal allocation with respect to a lower bound and assumes that the current estimate is the true environment. This approach has been applied to various bandits, e.g., linear bandits~\citep{jedra2020optimal}, spectral bandits~\citep{Kocak21}, heavy-tailed bandits~\citep{agrawal2020optimal}, bandits with multiple correct answers~\citep{Degenne19}, and latent bandits~\citep{mwai2023fast}. The Constrained Game Explorer, (CGE, Section~\ref{sec:cge}), follows the gamification approach to pure-exploration, which treats the lower bound as a zero-sum game between an allocation player and instance player. This approach was first introduced by \citet{Degenne}, and later used for best-arm identification in linear bandits~\citep{degenne2020gamification} and combinatorial bandits~\citep{degenne2020gamification}. In particular, CGE is an extension of the sampling rule of~\citep{Degenne} to the case of known linear constraints.

\textbf{Transductive linear bandit.} Another related setup is the transductive linear bandit~\citep{fiez2019sequential}, where one set of arms, $\mathcal{A}$, are played during exploration while the goal is to detect the best arm in some other known set, $\mathcal{Z}$. This is related to our setting since we want to learn the best policy but only have access to arms. Hence, our model can be viewed as a natural special case of the transductive linear bandit where $\mathcal{A}$ is the standard basis and $\mathcal{Z}$ is the set of policies. However, the existing literature on transductive bandits does not study the impact of linear constraints that we explicitly study here and the resulting algorithms are different.

\textbf{Bandits with knapsacks.} Our work is also related to the bandit with knapsack~\citep{Badanidiyuru18, agrawal2016linear, immorlica2022adversarial}. In this model, there are upper bounds on the total amount of resources a learner can consume while interacting with the bandit and each arm has its own resource consumption. The goal is to minimize the cumulative regret and the learner has to stop once the resources are depleted. This is different from our setting since we consider the problem of finding the best policy and not regret minimization. Our constraints are also not budget constraints but constraints in the policy space.

\section{Problem formulation}

We consider a multi-armed bandit problem with $K$ arms that corresponds to reward distributions, $\{P_a\}_{a=1}^K$, with unknown means $\{\mu_a\}_{a=1}^K$ and support $\R$. At each time step $t$, a learner chooses to play one of the arms, $A_t \in [K]$, and observes an immediate reward $R_t$, drawn from the reward distribution $P_{A_t}$. The learner has access to a non-empty and compact set of feasible policies
\begin{align}\label{eq:constraint_set}
    \F \triangleq \left \{ \pol \in \Delta_{K-1}: B\pol \leq c \right\},
\end{align}
where $\Delta_{K-1}$ is the $K$-simplex and $B \in \R^{N \times K}$ and $\boldsymbol{c} \in \R^{N}$, are known parameters of the linear constraints. For the ease of the presentation, we absorb the simplex constraints in $B$ and $\boldsymbol{c}$. Hereafter, these variables refer to both the simplex constraints, and the additional linear constraints of the problem.
\textit{The goal of the learner is to recommend, with probability at least $1-\delta$, the \emph{unique} optimal policy} $\pol^*_{\bmu, \F}$ satisfying \begin{equation}\label{eq:lp}
\begin{aligned}
& \pol^*_{\bmu, \F} \triangleq \underset{\pol \in \F}{\text{arg max}}
& & \bmu^\top \pol.
\end{aligned}
\end{equation}
 When it is clear from the context, we denote $\pol_{\bmu, \F}^*$ as $\optpol$. We refer to such a learner as a $\delta$\emph{-PAC learner}. As $1-\delta$ quantifies the correctness of the learner, we also want it to be efficient, i.e. to detect the optimal policy \textit{fast}. 
Let $\tau_{\delta}$ denote the random \textit{stopping time} at which the learner stops interacting with the bandit and makes a recommendation with confidence $1-\delta$.
\textit{We aim to design a $\delta$-PAC learner that minimizes the expected stopping time $\E[\tau_\delta]$, a.k.a. sample complexity, needed to find the optimal policy.} 

Depending on the application, a learner can abide by the constraints of Equation~\eqref{eq:constraint_set} in two ways:
\begin{itemize}[leftmargin=*]
    \item  \Scenario{1: End-of-time constraint} The learner does not have to take the constraints into account during exploration. Only the final recommended policy needs to satisfy the constraints.
    \item  \Scenario{2: Anytime constraint} The exploration policy needs to satisfy the constraints \emph{in expectation} during exploration, i.e. the exploration policy $\bw_t$ needs to satisfy $\bw_t \in \F$.
\end{itemize}
For example, Scenario 1 arises while using a more sophisticated hardware to search for an optimal policy, that should satisfy some energy-constraints, before deploying it on a low-energy hardware. In contrast, Scenario 2 can be thought of as performing the search directly on the low-energy hardware. 
Now, we explicitly state the assumptions used in this study:
\begin{itemize}[leftmargin=*]
    \item \Assumption{1} The reward of each arm $i \in [K]$ is distributed according to a sub-Gaussian single-parameter exponential family parameterized by its unknown mean $\mu_i$.
    \item \Assumption{2} The vector of arm means, $\bmu$, lies in a bounded domain  $\Domain = [ \mu_{\min}, \mu_{\max}]^K$.
    \item \Assumption{3} The optimal solution $\pol^*_{\bmu, \F}$ to the linear program in Equation~\eqref{eq:lp} is unique. 
\end{itemize}
Assumptions 1 and 2 are standard in the literature~\citep{Degenne19,degenne2020gamification}. Assumption 3 is the analogue of assuming a unique best arm in the BAI problem, and it ensures that the optimum of Equation~\eqref{eq:lp} is an extreme point. Hence, the optimal policy $\pol^*_{\bmu, \F}$ always corresponds to an extreme point in the polytope $\F$. In Appendix~\ref{app:epsilon_good}, we discuss about relaxation to $\epsilon$-good policies.

\noindent\textbf{Notations.} Let $\allocationset$ denote the set of feasible exploration policies. Thus, for Scenario 1, $\allocationset = \Delta_{K-1}$, and  $\allocationset = \F$ for Scenario 2. We denote the KL-divergence between two single-parameter exponential family distributions with mean $x$ and $y$ as $\KL{x}{y}$. Additionally, if the random variables are Bernoulli, we denote the KL-divergence as $\binaryKL{x}{y}$.

\section{Lower bound}

Lower bounds on the sample complexity of a $\delta$-correct algorithm, i.e. $\E[\tau_\delta]$, is a driving force in designing good algorithms in the BAI literature~\citep{garivier2016optimal,Degenne19,agrawal2020optimal}.

Given a problem instance $\bmu$, a learner needs to collect enough information about the problem to be able to rule out all alternative instances, $\blambda$, for which we have $\max_{\pol \in \F} \blambda^\top \pol > \blambda^\top \optpol$ with confidence at least $1-\delta$ . We refer to this set of instances as the Alt-set and denote it as 
\begin{align}
    \Alt(\bmu) \triangleq \lbrace\blambda \in \Domain: \max\limits_{\pol \in \F} \blambda^\top \pol > \blambda^\top \optpol \rbrace.
\end{align}
\citet{garivier2016optimal} introduced general techniques for deriving lower bounds on the sample complexity of any $\delta$-PAC learner, which depends on the the distance from $\bmu$ to the closest $\blambda \in \Alt(\bmu)$ in an information-theoretic sense. 

\begin{figure}[h]
    \centering
    \includegraphics[width=0.45\textwidth]{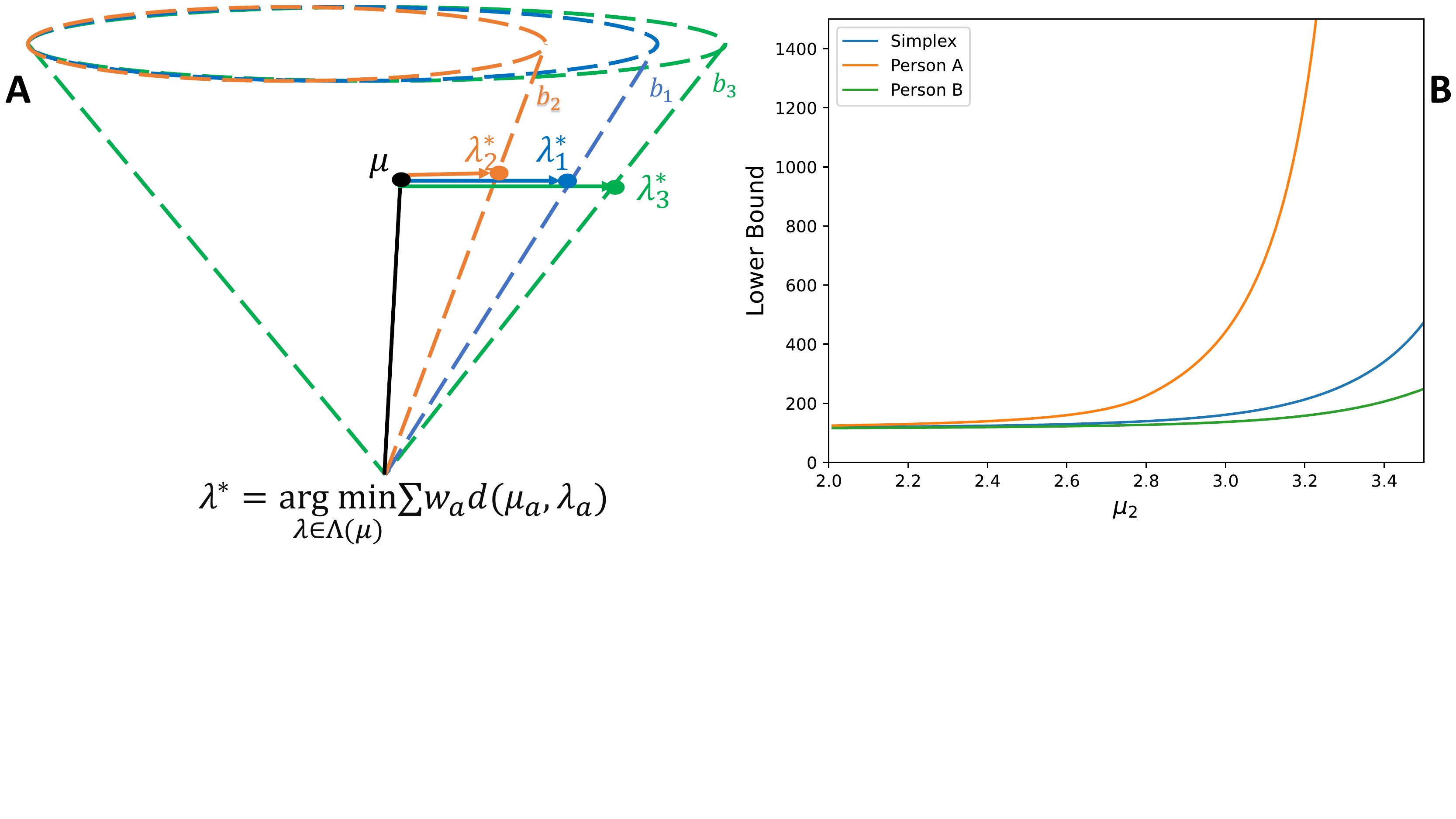}
    \caption{Computing the $\blambda$ satisfying Equation~\ref{eq:proj}, i.e. the \emph{most confusing instance}, can be viewed as an information-theoretic projection onto the boundary of the normal cone spanned by the active constraints at $\pol_{\bmu}$. In A) we see the different normal cones for the three different examples in Figure~\ref{fig:intro_fig}. In B) we have fixed $\mu_1$ and $\mu_3$, as in Figure~\ref{fig:intro_fig}, and plot the lower bound, assuming $N(0, 1)$ noise and with $\delta=0.1$, for increasing $\mu_2$ which mean that we are moving $\bmu$ closer to the  boundaries in A). We observe an inverse relationship between the distance to the boundary and the lower bound, properly characterized in Corollary~\ref{prop:lb_lower_bound}. }\label{fig:projections}
\end{figure}

We extend these general proof techniques and show that the expected stopping time of any $\delta$-PAC algorithm $\phi$ for BAI with linear constraints satisfies 

\begin{align}\label{eq:lb}
     \E_{\bmu, \phi} \left[ \tau_{\delta} \right] \geq \chartime(\bmu) \binaryKL{\delta}{1-\delta}.
\end{align}
where $\chartime(\bmu)$ is the \emph{characteristic time}, defined as 
\begin{align}\label{eq:char_time}
    \chartime^{-1}(\bmu) =\sup_{\bw \in \allocationset}\inf_{\blambda \in \Alt(\bmu)} \sum_{a=1}^K w_a \KL{\mu_a}{\lambda_a}.
\end{align}
The supremum in Equation~\eqref{eq:char_time} hints towards the existence of some optimal exploration policy $\bw$, which any optimal algorithm should try to track. This is exactly the idea behind the Track-and-Stop meta-scheme~\citep{garivier2016optimal} (details in Section~\ref{sec:tns}). In order to design algorithms achieving the lower bound in Equation~\eqref{eq:lb}, we need to solve the optimization problem in Equation~\eqref{eq:char_time}. This requires a more explicit characterization of $\Alt(\mu)$, continuity properties of the function $D(\bw, \bmu, \F) \triangleq \inf_{\blambda \in \Alt(\bmu)} \sum_{a=1}^K w_a \KL{\mu_a}{\lambda_a}$, and the set of optimal allocations $w^*(\bmu)$.

To derive an explicit expression for $\Alt(\mu)$, let $M$ be the number of active constraints for $\optpol$, $B_{\optpol} \in \R^{M \times K}$ be a submatrix of $B$ consisting of all these active constraints,  and $\boldsymbol{c}_{\optpol} \in \R^M$ the corresponding bounds in $\boldsymbol{c}$. Hence, there exists \emph{at least} $K$ linearly independent rows in $B_{\optpol}$, i.e. a matrix $\hat{B}_{\optpol} \in \R^{K \times K}$ and vector $\hat{\boldsymbol{c}}_{\optpol} \in \R^K$, such that $\optpol = \hat{B}_{\optpol}^{-1}\hat{\boldsymbol{c}}_{\optpol}$. Since our objective (Equation~\eqref{eq:lp}) is a linear program, we can leverage the optimality condition stating that  $\bmu$ must be in the normal cone of the optimal solution~\citep{Boyd04}. Hence, we express the Alt-set as 
$    \Alt(\bmu) = \left\{\blambda: \blambda \notin \Cone(\optpol) \right\}.$
Here, $\Cone(\optpol) := \left \{\blambda: \blambda = B_{\optpol}^\top \boldsymbol{v}, \boldsymbol{v} \in \R_{\geq 0}^M \right \}$ is the normal cone spanned by the active constraints for $\optpol$. 

Further, we say that $\pol'$ is a neighbor of $\optpol$ if it is an extreme point in $\F$ and shares $K-1$ active constraints with $\optpol$. We denote the set of all neighbors of $\optpol$ as $\neighbors$.
Hence, we can decompose the Alt-set into a union of a finite number of half-spaces 
$
    \Alt(\bmu) = \bigcup_{\pol' \in \neighbors} \left \{ \blambda: \blambda^\top (\optpol - \pol') < 0 \right \}.$
This formulation implies that if $\optpol$ is not an optimal policy for the instance $\blambda$, there must exist an direction for the simplex algorithm to follow to increase the expected reward, i.e. $\exists \pol' \in \neighbors:$ $\blambda^\top (\optpol - \pol') < 0$. This formulation of Alt-sets lead us to the observation that the most confusing instances in the Alt-set w.r.t. $\bmu$ lay on the boundary of the normal cone.

Specifically, Lemma~\ref{lm:proj} shows that the function $D(\bw, \bmu, \F)$ is a weighted projection onto the plane $\blambda^\top(\pol' - \optpol) = 0$ for some $\pol' \in \neighbors$, as shown in Figure~\ref{fig:projections}.

\begin{lemma}[Projection lemma]\label{lm:proj}
For any $\bw \in \allocationset$ and $\bmu$ it holds that
\begin{align}\label{eq:proj}
    D(\bw, \bmu, \F) = \min_{\pol' \in \neighbors} \min_{\blambda: \blambda^\top(\optpol - \pol') = 0} \sum_{a=1}^K w_a \KL{\mu_a}{\lambda_a}
\end{align}
\end{lemma}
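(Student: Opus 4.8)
The plan is to exploit the half-space decomposition $\Alt(\bmu) = \bigcup_{\pol' \in \neighbors} \{\blambda : \blambda^\top(\optpol - \pol') < 0\}$ established above, reduce the computation of $D(\bw,\bmu,\F)$ to a separate projection for each neighbor, and then show that each projection onto an open half-space is attained on its bounding hyperplane. Two facts drive the argument: (i) by optimality of $\optpol$ together with the uniqueness Assumption 3, the instance $\bmu$ lies \emph{strictly} on the non-alternative side of every hyperplane, i.e. $\bmu^\top(\optpol - \pol') > 0$ for all $\pol' \in \neighbors$ (strict inequality, since equality would make $\pol'$ a second optimum); and (ii) the separable map $f(\blambda) \triangleq \sum_a w_a \KL{\mu_a}{\lambda_a}$ is radially monotone around $\bmu$, where it attains its minimum value $0$.

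First I would split the infimum. Writing $H^-_{\pol'} \triangleq \{\blambda \in \Domain : \blambda^\top(\optpol - \pol') < 0\}$, the finiteness of $\neighbors$ gives $D(\bw,\bmu,\F) = \inf_{\blambda \in \bigcup_{\pol'} H^-_{\pol'}} f(\blambda) = \min_{\pol' \in \neighbors} \inf_{\blambda \in H^-_{\pol'}} f(\blambda)$. It therefore suffices to prove, for each fixed neighbor $\pol'$, that $\inf_{\blambda \in H^-_{\pol'}} f(\blambda) = \min_{\blambda : \blambda^\top(\optpol - \pol') = 0} f(\blambda)$, the right-hand minimum being taken over the bounding hyperplane $\partial H_{\pol'}$ intersected with the compact box $\Domain$, so that it is attained by continuity of $f$.

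Next I would prove the two inequalities. For ``$\leq$'', every point of $\partial H_{\pol'}$ is a limit of points of the open half-space $H^-_{\pol'}$, so continuity of $f$ yields $\inf_{H^-_{\pol'}} f \le \min_{\partial H_{\pol'}} f$. For ``$\geq$'' I invoke radial monotonicity: each coordinate divergence obeys the standard identity $\partial_\lambda \KL{\mu}{\lambda} = (\lambda - \mu)/V(\lambda)$ with $V(\lambda) > 0$ the local variance, so $\KL{\mu}{\cdot}$ is unimodal with minimum at $\mu$; consequently $s \mapsto f(\bmu + s\boldsymbol{d})$ is non-decreasing on $s \ge 0$ for every direction $\boldsymbol{d}$. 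Now take any $\blambda_0 \in H^-_{\pol'}$. Since $\bmu^\top(\optpol - \pol') > 0$ while $\blambda_0^\top(\optpol - \pol') < 0$, the segment $\{\bmu + s(\blambda_0 - \bmu) : s \in [0,1]\}$, which remains inside the convex box $\Domain$, crosses $\partial H_{\pol'}$ at some $\blambda_c = \bmu + s_c(\blambda_0 - \bmu)$ with $s_c \in (0,1)$. Radial monotonicity then gives $f(\blambda_c) \le f(\blambda_0)$, and as $\blambda_c \in \partial H_{\pol'}$ we obtain $\min_{\partial H_{\pol'}} f \le f(\blambda_0)$; taking the infimum over $\blambda_0$ establishes ``$\geq$'' and hence the per-neighbor equality, which combined with the decomposition yields the lemma.

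I expect the delicate step to be the ``$\geq$'' direction, since it asserts that one cannot strictly improve the objective by pushing into the interior of the alternative region; a purely topological argument does not suffice, and the geometry of the divergence must enter. One might be tempted to use convexity of $f$ in $\blambda$, but that can fail for general single-parameter exponential families in the mean parameterization, so the unimodality/radial-monotonicity property is the robust substitute that makes the crossing-point argument go through. Finally, I would stress that the strictness $\bmu^\top(\optpol - \pol') > 0$ (rather than $\ge 0$), furnished by Assumption 3, is essential: it guarantees the segment genuinely crosses the hyperplane, so that the boundary $\partial H_{\pol'}$ is the true bottleneck of the projection.
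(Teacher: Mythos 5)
Your proposal is correct and follows essentially the same route as the paper's proof: decompose $\Alt(\bmu)$ into the finite union of open half-spaces indexed by $\neighbors$, then show the infimum over each half-space is attained on its bounding hyperplane by passing from any $\blambda$ in the half-space to the point where the segment from $\bmu$ to $\blambda$ crosses the hyperplane, using monotonicity of $y \mapsto \KL{x}{y}$ away from $x$ (the paper phrases this coordinate-wise, your radial-monotonicity formulation is the same mechanism) together with a continuity/limit argument for the boundary. Your explicit appeal to Assumption 3 for the strict inequality $\bmu^\top(\optpol - \pol') > 0$, and your remark that convexity of the objective is not the right tool, are careful touches the paper leaves implicit, but the argument is the same.
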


To compute $D(\bw, \bmu, \F)$ from Equation~\eqref{eq:proj}, we need to have access to the true instance $\bmu$, which we do not have in reality. Rather, we sequentially obtain samples from the arms yielding an estimate $\hat{\bmu}_t$. Thus, we need $D(\bw, \bmu, \F)$ and $\bw^*(\bmu)$ to satisfy continuity properties (Theorem~\ref{lm:properties}) w.r.t $\bmu$, that ensures as the estimates $\hat{\bmu}_t$ converge to $\bmu$, $D(\bw, \hat{\bmu}_t, \F) \to D(\bw, \bmu, \F)$ and our empirical distribution of plays gets closer to some $\bw \in w^*(\mu)$.
\begin{theorem}\label{lm:properties}
Following properties are true for all $\bmu$ and $\F= \left \{ \pol \in \Delta_{K-1}: B \pol \leq c \right\}$ such that the problem $\max_{\pol \in \F} \bmu^\top \pol$ has a unique solution. 
\begin{itemize}[nosep,leftmargin=*]
    \item  The function $(\bw, \bmu) \mapsto D(\bw, \bmu, \F)$ is continuous.
    \item The function $\bmu \mapsto \chartime(\bmu)$ is continuous. 
    \item The set-valued function $\bmu \mapsto  w^*(\bmu)$ is upper hemicontinuous (definition in Appendix~\ref{app:useful}). 
    \item The set $ w^*(\bmu)$ is convex. 
\end{itemize}
\end{theorem}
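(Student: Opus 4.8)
The plan is to derive all four properties from one core fact: the map $(\bw,\bmu)\mapsto D(\bw,\bmu,\F)$ is \emph{jointly continuous} and \emph{concave in} $\bw$. Once joint continuity is available, the second and third bullets are immediate consequences of Berge's maximum theorem applied to $\chartime^{-1}(\bmu)=\sup_{\bw\in\allocationset}D(\bw,\bmu,\F)$, because $\allocationset$ (which equals $\Delta_{K-1}$ or $\F$) is a fixed nonempty compact convex set, so the feasible correspondence is constant and trivially continuous. Concavity in $\bw$ then yields convexity of $w^*(\bmu)$. Hence essentially all the difficulty is concentrated in establishing the joint continuity of $D$.

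\textbf{Joint continuity of $D$.} I would work from the finite-minimum representation of Lemma~\ref{lm:proj}, writing $D(\bw,\bmu,\F)=\min_{\pol'\in\neighbors} g_{\optpol,\pol'}(\bw,\bmu)$, where $g_{\pol_0,\pol'}(\bw,\bmu)\triangleq\min_{\blambda:\,\blambda^\top(\pol_0-\pol')=0}\sum_{a}w_a\KL{\mu_a}{\lambda_a}$ is a weighted information projection onto a fixed hyperplane. The first subtlety is that both $\optpol$ and the index set $\neighbors$ vary with $\bmu$. I would handle this by restricting to the open domain on which the linear program has a unique optimum: uniqueness at a vertex $\pol_0$ is exactly the condition $\bmu\in\operatorname{int}\Cone(\pol_0)$, an open condition, so on a neighbourhood of any such $\bmu$ the optimal vertex, and therefore $\neighbors$, stays constant. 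Since the polytope $\F$ is fixed, its vertices are finitely many fixed points, so locally $\optpol-\pol'$ takes finitely many fixed values and $D$ equals a finite minimum of the maps $g_{\pol_0,\pol'}$; a finite minimum of continuous functions is continuous, which reduces the problem to the continuity of each $g_{\pol_0,\pol'}$.

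For the projection value, the objective $\sum_a w_a\KL{\mu_a}{\lambda_a}$ is jointly continuous in $(\blambda,\bw,\bmu)$ and convex in $\blambda$, and the constraint is a fixed hyperplane. Because $\bmu$ ranges over the bounded set $\Domain$ and each $\KL{\mu_a}{\cdot}$ is coercive near the boundary of its parameter range, the minimiser over the coordinates with $w_a>0$ stays inside a fixed compact set; restricting the hyperplane to such a compact set makes the feasible correspondence compact-valued and (being constant) continuous, so Berge's theorem gives continuity of $g_{\pol_0,\pol'}$. The delicate bookkeeping I expect to be the main obstacle is the treatment of coordinates with $w_a=0$: there $\lambda_a$ is unconstrained and contributes nothing to the objective, so one must argue that these free coordinates can be confined to a bounded range without changing the value, and that the value remains continuous as weights cross zero.

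\textbf{From $D$ to the remaining bullets.} With joint continuity established, Berge's theorem applied to the supremum over the compact set $\allocationset$ shows that $\bmu\mapsto\chartime^{-1}(\bmu)$ is continuous and that $\bmu\mapsto w^*(\bmu)=\argmax_{\bw\in\allocationset}D(\bw,\bmu,\F)$ is nonempty, compact-valued and upper hemicontinuous, which is the third bullet. To pass from $\chartime^{-1}$ to $\chartime$ I would verify strict positivity: on the uniqueness domain $\bmu$ lies in the interior of the normal cone, hence at positive information distance from every bounding hyperplane, so any $\bw$ with all-positive coordinates already gives $D(\bw,\bmu,\F)>0$ and thus $\chartime^{-1}(\bmu)>0$; continuity of $\chartime=1/\chartime^{-1}$ then follows by composition. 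Finally, for each fixed $\blambda$ the map $\bw\mapsto\sum_a w_a\KL{\mu_a}{\lambda_a}$ is affine, so $D(\cdot,\bmu,\F)$ is an infimum of affine functions and therefore concave in $\bw$; the set of maximisers of a concave function over the convex set $\allocationset$ is convex, establishing the last bullet.
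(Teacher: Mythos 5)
Your proposal is correct, and it reaches the same conclusions as the paper while proving the central continuity step by a genuinely different route. The paper establishes joint continuity of $D$ directly: for a sequence $(\bw_t,\bmu_t)\to(\bw,\bmu)$ it notes, exactly as you do, that uniqueness of the optimal vertex is an open condition so that $\Alt(\bmu_t)=\Alt(\bmu)$ for large $t$, and then bounds $\left|\inf_{\blambda}D(\bw,\bmu,\blambda)-\inf_{\blambda}D(\bw_t,\bmu_t,\blambda)\right|$ by a bound on $\left|D(\bw,\bmu,\blambda)-D(\bw_t,\bmu_t,\blambda)\right|$ uniform over $\blambda$ (implicitly using compactness of $\Domain$); no decomposition of the Alt-set is needed for this step. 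You instead route the argument through Lemma~\ref{lm:proj}, writing $D$ as a finite minimum over $\neighbors$ of information projections onto fixed hyperplanes and applying Berge's theorem to each projection with a constant compact feasible correspondence. Your route uses more machinery but each piece is routine, and it makes the structural reason for continuity (finite min of continuous functions over a locally constant index set) explicit; the paper's route is shorter but its uniform-in-$\blambda$ estimate, stated for all $\blambda\in\R^K$, really needs the restriction $\Alt(\bmu)\subseteq\Domain$ to be rigorous. That same restriction also disposes of the one obstacle you flag: since the Alt-set is by definition contained in the compact box $\Domain$, the projection is always over a compact set and the $w_a=0$ coordinates cause no difficulty, so your "delicate bookkeeping" is in fact free. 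For the remaining bullets the two arguments coincide: Berge's maximum theorem with the constant correspondence $C(\bmu)=\allocationset$ gives continuity of $\chartime^{-1}$ and upper hemicontinuity of $w^*$, and concavity of $\bw\mapsto D(\bw,\bmu,\F)$ (an infimum of affine maps) gives convexity of $w^*(\bmu)$. Two of your additions improve on the paper's write-up: you verify strict positivity of $\chartime^{-1}$ before passing to the reciprocal $\chartime$, a step the paper leaves implicit, and your "infimum of affine functions, hence concave" is more precise than the paper's parenthetical claim that $D$ is linear in $\bw$.
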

\subsection{Lower bound for Gaussian distributions}
To gain further insights on how the constraints alter the lower bound in Equation~\eqref{eq:lb}, we consider the special case where all arms are Gaussian distributions with equal variance $\sigma^2$. 
This leads us to a close-form of the projection in Lemma~\ref{lm:proj} as in Theorem~\ref{cor:projection}.
\begin{theorem}
\label{cor:projection}
If the arms follow Gaussian distributions with identical variance $\sigma^2$ and $w_a > 0$ $\forall a$, we have that the projection $\min_{\blambda \in \Domain: \blambda^\top(\optpol - \pol') \leq 0} \sum_{a=1}^K w_a \KL{\mu_a}{\lambda_a}$ for any $\pol' \in \mathcal{V}_{\F}({\optpol})$ is satisfied by $
    \lambda_{a, \pol'} = \mu_a - \gamma \frac{(\optpol - \pol')_a}{w_a}, $
for $\gamma = \frac{\bmu^\top \left(\optpol - \pol' \right) }{\sum_a \frac{(\optpol - \pol')^2}{w_a}}$, and the characteristic time is
\begin{align*}
   \chartime(\bmu)^{-1} &= \max_{\bw \in \allocationset} \min_{\pol' \in \neighbors} \frac{1}{2\sigma^2} \frac{\left(\bmu^\top \left(\optpol - \pol' \right) \right)^2}{\sum_{a} \frac{1}{w_a}(\optpol - \pol')^2_a} \\
   &= \max_{\bw \in \allocationset} \min_{\pol' \in \neighbors} \frac{1}{2\sigma^2} \frac{ \|\optpol - \pol'\|^2_{\bmu\bmu^{\top}}}{ \|\optpol - \pol'\|^2_{\mathrm{Diag}(1/w_a)}}
\end{align*}
Here, $\mathrm{Diag}(1/w_a)$ is a diagonal matrix with $a$-th entry of the diagonal as $1/w_a$.
\end{theorem}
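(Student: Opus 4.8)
The plan is to exploit that, for equal-variance Gaussians, the divergence $\KL{\mu_a}{\lambda_a} = (\mu_a - \lambda_a)^2/(2\sigma^2)$ is a scaled squared distance, so for each fixed neighbor $\pol' \in \neighbors$ the inner problem of Lemma~\ref{lm:proj} becomes a $\bw$-weighted Euclidean projection of $\bmu$ onto a single half-space. Writing $\boldsymbol{g} := \optpol - \pol'$, I first observe that the constraint $\blambda^\top \boldsymbol{g} \le 0$ is active at the optimum: since $\optpol$ is the \emph{unique} maximizer of $\bmu^\top \pol$ over $\F$ and $\pol' \neq \optpol$ is a feasible extreme point, we have $\bmu^\top \boldsymbol{g} = \bmu^\top \optpol - \bmu^\top \pol' > 0$, so $\bmu$ lies strictly on the infeasible side. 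Hence the closest feasible point sits on the bounding hyperplane $\blambda^\top \boldsymbol{g} = 0$, which reconciles the half-space in this statement with the equality constraint of Lemma~\ref{lm:proj} and lets me drop the box $\Domain$ as long as the projection stays in its interior.

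Next I solve this equality-constrained convex quadratic by Lagrange multipliers. Forming $L(\blambda, \nu) = \sum_a \frac{w_a}{2\sigma^2}(\mu_a - \lambda_a)^2 + \nu\, \blambda^\top \boldsymbol{g}$ and setting $\partial L/\partial \lambda_a = 0$ gives $-\frac{w_a}{\sigma^2}(\mu_a - \lambda_a) + \nu g_a = 0$, i.e. $\lambda_a = \mu_a - (\sigma^2 \nu/w_a)\, g_a$. Absorbing the constant into a single scalar $\gamma := \sigma^2 \nu$ yields the claimed form $\lambda_{a, \pol'} = \mu_a - \gamma\, g_a/w_a$. Because the objective is strictly convex (all $w_a > 0$) and the constraint is linear, this stationary point is the global minimizer, and $\gamma \ge 0$ is exactly the nonnegativity required by the inequality KKT conditions.

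I then pin down $\gamma$ by substituting the stationary $\blambda$ into the active constraint $\sum_a \lambda_a g_a = 0$, which gives $\sum_a \mu_a g_a - \gamma \sum_a g_a^2/w_a = 0$, hence $\gamma = \bmu^\top \boldsymbol{g}\, /\, \sum_a (g_a^2/w_a)$, matching the stated value. Plugging back, each residual is $\mu_a - \lambda_a = \gamma\, g_a/w_a$, so the optimal value telescopes to $\frac{1}{2\sigma^2}\sum_a w_a (\gamma g_a/w_a)^2 = \frac{\gamma^2}{2\sigma^2}\sum_a g_a^2/w_a = \frac{1}{2\sigma^2}\,(\bmu^\top \boldsymbol{g})^2\, /\, \sum_a (g_a^2/w_a)$, which is precisely the per-neighbor term appearing in the characteristic time.

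Finally I assemble the global expression: combining Equation~\eqref{eq:char_time} with Lemma~\ref{lm:proj} gives $\chartime^{-1}(\bmu) = \sup_{\bw \in \allocationset} D(\bw,\bmu,\F) = \max_{\bw \in \allocationset} \min_{\pol' \in \neighbors}[\text{per-neighbor value}]$, which is the first displayed line, and the second line is a rewriting of the quadratic forms via $(\bmu^\top(\optpol-\pol'))^2 = \|\optpol - \pol'\|^2_{\bmu\bmu^\top}$ and $\sum_a (\optpol-\pol')_a^2/w_a = \|\optpol - \pol'\|^2_{\mathrm{Diag}(1/w_a)}$ by definition of these weighted (semi-)norms. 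The only genuinely delicate point is the reduction of the half-space projection to the hyperplane projection, i.e. verifying that the constraint is active; this rests on the uniqueness Assumption~3 and on the normal-cone characterization of optimality used earlier to derive $\Alt(\bmu)$, and it simultaneously guarantees $\gamma > 0$ so that the minimizer indeed lands in the intended half-space.
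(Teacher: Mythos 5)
Your proposal is correct and follows essentially the same route as the paper: reduce via Lemma~\ref{lm:proj} to a per-neighbor weighted quadratic projection onto the hyperplane $\blambda^\top(\optpol - \pol') = 0$, solve it with a Lagrange multiplier, and assemble the characteristic time from Equation~\eqref{eq:char_time}. The only (minor) differences are that you determine $\gamma$ by substituting the stationary point into the primal constraint rather than maximizing the dual function $L(\gamma)$ as the paper does, and you explicitly verify that the half-space constraint is active (via $\bmu^\top(\optpol - \pol') > 0$ from uniqueness of $\optpol$) and that $\gamma > 0$ --- a KKT point the paper leaves implicit, so your write-up is if anything slightly more careful.
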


In the classical BAI setting, i.e. we only have simplex constraints, the expressions in Theorem~\ref{cor:projection} reduces to the BAI results of \citet{Kaufmann16}, see Appendix~\ref{app:lb} for a derivation. From Theorem~\ref{cor:projection}, we further derive a lower and an upper bound on the characteristic time. Let us define $d_{\pol'} \triangleq \min_{\blambda: \blambda^\top(\optpol - \pol')=0} ||\bmu - \blambda||_2$ and note that this is the distance between $\bmu$ and the hyperplane $\optpol - \pol'=0$, see Figure~\ref{fig:projections} for illustration. 
\begin{corollary}
    \label{prop:lb_lower_bound}
The characteristic time $\chartime(\bmu)$ satisfies the following bounds:
\begin{align}
   \min_{\pol' \in \mathcal{V}_{\F}(\optpol)} \frac{2\sigma^2}{d_{\pol'}^2}
  \leq \chartime(\bmu) \leq \min_{\pol' \in \mathcal{V}_{\F}(\optpol)} \frac{2\sigma^2 K}{d_{\pol'}^2}.
\end{align}

\end{corollary}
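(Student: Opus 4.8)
The plan is to feed the closed form of $\chartime(\bmu)^{-1}$ from Theorem~\ref{cor:projection} into elementary norm comparisons, reducing the whole statement to sandwiching a single scalar quantity. First I would record the point-to-hyperplane distance in closed form: since the hyperplane $\{\blambda:\blambda^\top(\optpol-\pol')=0\}$ passes through the origin,
\[ d_{\pol'}^2 = \frac{\big(\bmu^\top(\optpol-\pol')\big)^2}{\|\optpol-\pol'\|_2^2}. \]
Abbreviating $\boldsymbol{v}\triangleq\optpol-\pol'$ and $g(\bw,\pol')\triangleq \big(\bmu^\top\boldsymbol{v}\big)^2\big/\sum_a v_a^2/w_a$, Theorem~\ref{cor:projection} reads $2\sigma^2\,\chartime(\bmu)^{-1}=\max_{\bw\in\allocationset}\min_{\pol'\in\neighbors} g(\bw,\pol')$, so everything reduces to comparing $g(\bw,\pol')$ with $d_{\pol'}^2$ and then propagating through the $\max$--$\min$.

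Next I would establish the two elementary bounds on $g$. Because any allocation lies in $\allocationset\subseteq\Delta_{K-1}$, we have $w_a\le 1$, hence $\sum_a v_a^2/w_a\ge\sum_a v_a^2=\|\boldsymbol{v}\|_2^2$, which gives $g(\bw,\pol')\le d_{\pol'}^2$ for every feasible $\bw$. Conversely, evaluating at the uniform allocation $\bw=(1/K,\dots,1/K)$ gives $\sum_a v_a^2/w_a = K\|\boldsymbol{v}\|_2^2$ and therefore the exact identity $g(\mathrm{unif},\pol')=d_{\pol'}^2/K$.

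I would then propagate both bounds through the $\max$--$\min$. For the lower bound on $\chartime(\bmu)$, the weak max--min inequality together with $g\le d_{\pol'}^2$ gives
\[ \max_{\bw}\min_{\pol'} g(\bw,\pol') \le \min_{\pol'}\max_{\bw} g(\bw,\pol') \le \min_{\pol'\in\neighbors} d_{\pol'}^2, \]
so $2\sigma^2\chartime^{-1}\le\min_{\pol'} d_{\pol'}^2$ and, reciprocating, $\chartime(\bmu)\ge 2\sigma^2/\min_{\pol'} d_{\pol'}^2$. For the upper bound I would use the uniform allocation as a feasible witness of the supremum,
\[ \max_{\bw}\min_{\pol'} g(\bw,\pol') \ge \min_{\pol'} g(\mathrm{unif},\pol') = \tfrac{1}{K}\min_{\pol'\in\neighbors} d_{\pol'}^2, \]
giving $\chartime(\bmu)\le 2\sigma^2 K/\min_{\pol'} d_{\pol'}^2$. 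Both bounds are thus governed by the closest neighbour $\argmin_{\pol'} d_{\pol'}$, i.e. the most confusing direction, as claimed.

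The step I expect to be the main obstacle is the upper bound under Scenario~2, where $\allocationset=\F$ and the uniform allocation need not be feasible; there the witness argument collapses and one must instead produce some feasible $\bw\in\F$ whose weighted squared norm $\sum_a v_a^2/w_a$ stays within a factor $K$ of $\|\boldsymbol{v}\|_2^2$ simultaneously for all $\pol'\in\neighbors$ (or argue that the constrained supremum forfeits at most the factor $K$). The remaining pieces are routine given the two norm comparisons; the only care needed is to keep the reciprocation direction straight, since $\max_{\bw}\min_{\pol'}$ is increasing in each $g$ while $\chartime$ is its reciprocal.
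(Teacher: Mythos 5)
Your proof is correct and follows essentially the same route as the paper's: both start from the closed form in Theorem~\ref{cor:projection}, get the lower bound on $\chartime(\bmu)$ by showing $\sum_a (\optpol-\pol')_a^2/w_a \ge \|\optpol-\pol'\|_2^2$ (the paper reaches this via a Lagrange optimization over the simplex giving $\|\optpol-\pol'\|_1^2$ and then $\|\cdot\|_1\ge\|\cdot\|_2$; your $w_a\le 1$ shortcut is marginally more elementary), and get the upper bound by plugging in the uniform allocation as a feasible witness. The obstacle you flag at the end is real but is shared by the paper's own proof, which also silently assumes the uniform allocation lies in $\allocationset$ (i.e., Scenario~1); likewise, both your argument and the paper's actually yield $\chartime(\bmu)\le 2\sigma^2 K/\min_{\pol'\in\neighbors} d_{\pol'}^2$, which matches the displayed upper bound only under the reading where the minimum is applied to $d_{\pol'}^2$ in the denominator rather than to the whole fraction.
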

Corollary~\ref{prop:lb_lower_bound} implies a lower bound of \begin{align*}
    \E[\tau] \geq \min_{\pol' \in \mathcal{V}_{\F}(\optpol)} \frac{2\sigma^2}{d_{\pol'}^2} \binaryKL{\delta}{1-\delta}
\end{align*}

\textbf{Impact of constraints: geometric view.} We first observe that, since the distance-to-projection $d_{\pol'}= \frac{\bmu^\top \left(\optpol - \pol' \right)}{\|\optpol - \pol' \|_2}$, the problem becomes easier when the direction of the reward vector $\bmu$ is aligned with the deviation in policy $\pol^* - \pol'$. Especially, if we only consider deterministic policies, i.e. BAI problem, $d_{\pol'} = \mu_1 - \mu_a = \Delta_a$ where $\mu_1$ is the best arm, $a$ is the arm played by $\pol'$ and we retrieve the lower bound of \citet{Kaufmann16}. 

\textbf{Impact of constraints: constrained optimization view.} We relate the lower bound more explicitly to the constraint matrix $B$ by using the fact that any neighbor $\pol' \in \neighbors$ can be reached from $\optpol$ via an 1-rank update on a matrix $\hat{B}_{\optpol} \in \R^{K \times K}$ consisting of $K$ active constraints at $\optpol$ that are linearly independent. Thus, we only need to change one row in $\hat{B}_{\optpol}$ and one element in the corresponding $\hat{\boldsymbol{c}}_{\optpol}$ to get $B'$ and $\boldsymbol{c}'$ such that $\pol' = B'^{-1}\boldsymbol{c}'$. This results in the lower bound on the sample complexity presented in Corollary~\ref{prop:spectral}.
\begin{corollary}\label{prop:spectral}
For any $\pol' \in \neighbors$, let $\hat{B}_{\pol^*}\in \R^{K \times K}$ be a set of active and linearly independent constraints at $\pol^*$ such that the active constraints at $\pol'$ can be achieved by a one-rank update on $\hat{B}_{\pol^*}$. Let $r'$ be the row in $\hat{B}_{\pol^*}$ that is changed during this one-rank update. 

\textit{Part (a):} Let $\boldsymbol{\Delta} \in \R^K$ denote the vector of the sub-optimality gaps, i.e. $\Delta_a =  \mu_1 - \mu_a$, of each arm, then
\begin{align}\label{eq:slack_diff}
    \chartime(\bmu)^{-1} = \max_{\bw \in \allocationset} \min_{\pol' \in \neighbors} \frac{1}{2\sigma^2} \frac{ \left(\boldsymbol{\Delta}^\top \hat{B}_{\optpol}^{-1}\boldsymbol{e}_{r'} \right)^2}{ \|\hat{B}_{\optpol}^{-1}\boldsymbol{e}_{r'}\|^2_{\mathrm{Diag}(1/w_a)}}
\end{align}
\textit{Part (b):} Let $\kappa^2$ be the condition number of a matrix $\hat{B}_{\optpol} \in \R^{K \times K}$ consisting of $K$ linearly independent active constraints at $\optpol$, then the sample complexity of any $\delta$-PAC learner is lower bounded as 
\begin{align}\label{eq:kappa_bound}
    \E[\tau] = \Omega\left(\frac{H}{\kappa^2} \binaryKL{\delta}{1-\delta}\right)
\end{align}
with $H = \frac{2\sigma^2}{\sum_{a\neq a^{*}} \Delta_a^2}$.
\end{corollary}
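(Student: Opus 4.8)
The plan is to prove the two parts in sequence, obtaining Part (a) as an \emph{exact} algebraic reformulation of Theorem~\ref{cor:projection} and Part (b) as a spectral relaxation of Part (a) fed into the generic lower bound~\eqref{eq:lb}. The crux of Part (a) is the observation that $\optpol - \pol'$ is proportional to the column $\hat{B}_{\optpol}^{-1}\boldsymbol{e}_{r'}$. I would argue this geometrically. Since $\pol' \in \neighbors$ shares $K-1$ active constraints with $\optpol$, every row $\boldsymbol{b}_i^\top$ of $\hat{B}_{\optpol}$ except the changed row $r'$ is active at both points, so $\boldsymbol{b}_i^\top(\optpol - \pol') = \hat{c}_i - \hat{c}_i = 0$ for all $i \neq r'$. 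Writing $\boldsymbol{g} \triangleq \hat{B}_{\optpol}^{-1}\boldsymbol{e}_{r'}$, the identity $\hat{B}_{\optpol}\boldsymbol{g} = \boldsymbol{e}_{r'}$ reads precisely $\boldsymbol{b}_i^\top \boldsymbol{g} = 0$ for $i \neq r'$. Hence both $\optpol - \pol'$ and $\boldsymbol{g}$ span the one-dimensional subspace orthogonal to $\{\boldsymbol{b}_i\}_{i \neq r'}$, giving $\optpol - \pol' = \alpha \boldsymbol{g}$ for some scalar $\alpha \neq 0$ (nonzero because $\optpol \neq \pol'$).

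Next I substitute this into the ratio of Theorem~\ref{cor:projection}. Both the numerator $(\bmu^\top(\optpol - \pol'))^2$ and the denominator $\|\optpol - \pol'\|^2_{\mathrm{Diag}(1/w_a)}$ are homogeneous of degree two in $\optpol - \pol'$, so the factor $\alpha^2$ cancels and the ratio equals $(\bmu^\top \boldsymbol{g})^2 / \|\boldsymbol{g}\|^2_{\mathrm{Diag}(1/w_a)}$. To replace $\bmu$ by $\boldsymbol{\Delta}$, I use that $\optpol, \pol' \in \Delta_{K-1}$ forces $\boldsymbol{1}^\top(\optpol - \pol') = 0$, hence $\boldsymbol{1}^\top \boldsymbol{g} = 0$; writing $\bmu = \mu_1 \boldsymbol{1} - \boldsymbol{\Delta}$ then yields $\bmu^\top \boldsymbol{g} = -\boldsymbol{\Delta}^\top \boldsymbol{g}$ and therefore $(\bmu^\top \boldsymbol{g})^2 = (\boldsymbol{\Delta}^\top \boldsymbol{g})^2$. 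Taking $\max_{\bw \in \allocationset} \min_{\pol' \in \neighbors}$ on both sides reproduces Equation~\eqref{eq:slack_diff}.

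For Part (b), I would start from \eqref{eq:slack_diff} and bound $\chartime(\bmu)^{-1}$ from above \emph{uniformly} in $\bw$ and $\pol'$, so that the outer $\max_{\bw}\min_{\pol'}$ is automatically controlled. By Cauchy--Schwarz the numerator obeys $(\boldsymbol{\Delta}^\top \boldsymbol{g})^2 \leq \|\boldsymbol{\Delta}\|_2^2 \|\boldsymbol{g}\|_2^2 \leq \|\boldsymbol{\Delta}\|_2^2 / \sigma_{\min}(\hat{B}_{\optpol})^2$, using $\boldsymbol{g} = \hat{B}_{\optpol}^{-1}\boldsymbol{e}_{r'}$; the denominator obeys $\|\boldsymbol{g}\|^2_{\mathrm{Diag}(1/w_a)} \geq \|\boldsymbol{g}\|_2^2 \geq 1/\sigma_{\max}(\hat{B}_{\optpol})^2$, the first step because $w_a \leq 1$ on the simplex and the second again from $\boldsymbol{g} = \hat{B}_{\optpol}^{-1}\boldsymbol{e}_{r'}$. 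Dividing the two bounds gives, for every $\bw$ and every $r'$, a ratio at most $\|\boldsymbol{\Delta}\|_2^2\,\sigma_{\max}(\hat{B}_{\optpol})^2/\sigma_{\min}(\hat{B}_{\optpol})^2 = \kappa^2 \|\boldsymbol{\Delta}\|_2^2$, where $\kappa^2$ is the condition number $\sigma_{\max}(\hat{B}_{\optpol})^2/\sigma_{\min}(\hat{B}_{\optpol})^2$. Since this bound is independent of $\bw$, the inner minimum and outer maximum preserve it, so $\chartime(\bmu)^{-1} \leq \kappa^2 \|\boldsymbol{\Delta}\|_2^2 / (2\sigma^2) = \kappa^2/H$ with $H = 2\sigma^2/\sum_{a \neq a^*}\Delta_a^2$. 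Finally, inserting $\chartime(\bmu) \geq H/\kappa^2$ into $\E[\tau] \geq \chartime(\bmu)\binaryKL{\delta}{1-\delta}$ from Equation~\eqref{eq:lb} delivers the claimed $\Omega(H/\kappa^2 \cdot \binaryKL{\delta}{1-\delta})$.

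I expect the main obstacle to lie in the two places where I silently invoke structure. In Part (a), one must handle non-degeneracy carefully: the argument that the $K-1$ shared active rows pin down a \emph{single} direction, and that a valid $\hat{B}_{\optpol}$ with $K$ linearly independent rows exists, needs Assumption~3 (extreme-point optimum) to guarantee $\alpha \neq 0$ and that every neighbor is reachable by a one-rank row swap. In Part (b), the subtlety is that the numerator and denominator bounds must hold \emph{simultaneously for all} $r'$ using the \emph{same} matrix $\hat{B}_{\optpol}$; it is exactly the decoupling of $\boldsymbol{\Delta}^\top \boldsymbol{g}$ via Cauchy--Schwarz (rather than retaining the aligned inner product) that discards the correlation between $\boldsymbol{\Delta}$ and $\boldsymbol{g}$ and thereby injects the condition number $\kappa^2$, making the resulting bound interpretable but looser than the alignment-aware estimate.
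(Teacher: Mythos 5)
Your proof is correct and takes essentially the same approach as the paper's: it rests on the proportionality $\optpol - \pol' \propto \hat{B}_{\optpol}^{-1}\boldsymbol{e}_{r'}$ (which the paper derives by explicit rank-one slack algebra, $\pol' - \optpol = \hat{B}_{\optpol}^{-1}(\xi\boldsymbol{e}_{r'})$, rather than your orthogonal-complement argument), the degree-zero homogeneity that cancels the scalar, the substitution of $\boldsymbol{\Delta}$ for $\bmu$ via $\boldsymbol{1}^\top(\optpol-\pol')=0$, and, for Part (b), the same Cauchy--Schwarz and singular-value bounds producing $\kappa^2$ before invoking Equation~\eqref{eq:lb}. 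The only cosmetic difference is that you eliminate the allocation weights directly using $w_a \leq 1$, whereas the paper routes Part (b) through Corollary~\ref{prop:lb_lower_bound}, where the weights were already optimized away.
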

Corollary~\ref{prop:spectral} relates constraints, arm sub-optimality, and sample complexity. Equation~\eqref{eq:slack_diff} links sample complexity to perturbations of the optimal policy.  Naturally, if a large perturbation of the optimal policy is only slightly sub-optimal, the sample complexity will be large. In contrast, if a small pertubation is bound to cause the resulting policy to be highly sub-optimal it is easier to detect the optimal policy. Equation~\eqref{eq:slack_diff} also reinterprets the lower bound as a zero-sum game where the agent plays an allocation and an adversary switches an active constraint at $\optpol$ to a non-active one.

Equation~\eqref{eq:kappa_bound} provides a looser bound based on a \textit{suboptimality gap based complexity measure} $H$, and \textit{the condition number}, $\kappa^2$ of the active-constraint matrix, which measures sensitivity of the optimal policy to perturbations. A high $\kappa^2$ implies that small perturbations of the optimal policy will cause a large change of the slack corresponding to the active constraints, making exploration easier. A low $\kappa^2$ means policy perturbations have a smaller impact on the slack making neighboring policies less distinguishable from the optimal one.

\section{Algorithms}\label{ref:algos}

In this section, we focus on extending the classical pure exploration algorithms to the setting of pure exploration with linear constraints.

\textbf{Algorithm design.} We begin by observing that any pure exploration algorithm consists of three components: \textit{a stopping rule}, \textit{a recommendation rule}, and \textit{a sampling strategy}. 
The stopping rule consists of a condition deciding when to halt sampling further. The recommendation rule decides what policy to recommend as the optimal policy. The sampling rule decides which arm to sample next given the history of arms sampled and intermediate policies computed. 

\paragraph{Component 1: Chernoff's stopping rule with constraints.}
As a stopping rule, we extend the Chernoff's stopping rule~\citep{garivier2016optimal}. We first introduce the $\emph{confidence set}$ $\confidenceset(\delta) := \left\{\blambda: \sum_{a=1}^K  N_{a, t} d(\hat{\mu}_{a, t}, \lambda_a) \leq c(t, \delta)\right\}
$,
where $c(t, \delta)$ is a threshold defined in Lemma~\ref{cor:stopping}.
\begin{lemma}[\citet{garivier2016optimal}]\label{cor:stopping}
For any $\alpha > 1$ there exists a constant $C(\alpha, K)$ such that for $c(t, \delta) = \log \frac{t^\alpha C(\alpha, K)}{\delta}$ we have for any $t \in \mathbb{N}$ $
    P\left( \bmu \notin \confidenceset(\delta) \right) \leq \delta.
$
\end{lemma}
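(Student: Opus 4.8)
This lemma is exactly the boundary-crossing concentration result of \citet{garivier2016optimal}, so the plan is to reconstruct their self-normalized deviation argument for the weighted sum of KL-divergences. The first step is to rewrite the bad event: $\bmu \notin \confidenceset(\delta)$ holds if and only if $\sum_{a=1}^K N_{a,t}\,\KL{\hat{\mu}_{a,t}}{\mu_a} > c(t,\delta)$. Because the sampling rule is adaptive, the counts $N_{a,t}$ are random and the summands are not independent across a fixed number of draws, so a plain Chernoff bound on a sum of fixed length is unavailable. Instead I would exploit the single-parameter exponential-family structure (Assumption~1): for each arm, $N_{a,t}\,\KL{\hat{\mu}_{a,t}}{\mu_a}$ equals the log generalized likelihood ratio $\sup_{\nu} \log \frac{L_{a,t}(\nu)}{L_{a,t}(\mu_a)}$ comparing the maximum-likelihood fit $\hat{\mu}_{a,t}$ against the true mean $\mu_a$, where $L_{a,t}$ is the arm-$a$ likelihood. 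Exponentiating, $\exp\!\big(N_{a,t}\,\KL{\hat{\mu}_{a,t}}{\mu_a}\big)$ is a supremum over candidate means of per-arm likelihood-ratio processes, each of which is a non-negative martingale of unit expectation in the corresponding natural parameter.

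The core tool is then the method of mixtures (Laplace's method). For each arm I would place a prior over the natural parameter and integrate the likelihood-ratio martingale, producing a non-negative supermartingale $M_{a,t}$ whose value lower-bounds $\exp\!\big(N_{a,t}\,\KL{\hat{\mu}_{a,t}}{\mu_a}\big)$ up to a multiplicative prefactor that, by a Gaussian approximation of the mixing integral around the MLE, grows only like $\sqrt{N_{a,t}} \le \sqrt{t}$. Applying Ville's maximal inequality to $M_{a,t}$ turns the supermartingale property directly into a \emph{time-uniform} deviation bound for a single arm; the stated pointwise-in-$t$ conclusion then follows a fortiori. I would combine the $K$ arms by multiplying the per-arm supermartingales in a common filtration, so that the bad event is contained in $\{\prod_a M_{a,t} \ge e^{c(t,\delta)}/\mathrm{poly}(t)\}$ and Ville's inequality yields a bound of the form $\mathrm{poly}(t)\,e^{-c(t,\delta)}$. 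Substituting $c(t,\delta)=\log\frac{t^\alpha C(\alpha,K)}{\delta}$ gives $\mathrm{poly}(t)\,\delta/(t^\alpha C(\alpha,K))$, which is at most $\delta$ once $t^\alpha C(\alpha,K)$ dominates the polynomial prefactor for every $t$.

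The main obstacle is controlling that prefactor \emph{sharply and uniformly}. The naive mixture argument above contributes one $\sqrt{t}$ factor per arm, i.e. a degree-$K/2$ polynomial, which only absorbs into the threshold when $\alpha$ is taken large; obtaining the stated flexibility of \emph{any} $\alpha>1$ requires the finer deviation inequality of \citet{garivier2016optimal}, in which a peeling/chaining argument over the geometrically-spaced values of each $N_{a,t}\in\{0,\dots,t\}$ reduces the effective degree of the boundary crossing and lets the remaining constants be folded into $C(\alpha,K)$. Two points must be handled carefully here: the Laplace approximation has to be made rigorous as a genuine supermartingale lower bound rather than an asymptotic equivalence, and the $K$ per-arm crossings must be peeled simultaneously so that a single pair $(\alpha, C(\alpha,K))$ works for all $t$. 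This simultaneous, uniform control over counts and over time is precisely what the polynomial-in-$t$ threshold $t^\alpha$ (rather than a threshold scaling only with $\log(1/\delta)$) is engineered to provide.
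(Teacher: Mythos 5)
Your proposal is correct in substance and lands in the same place as the paper, but it is a more self-contained route. The paper's own proof is essentially two lines: it invokes the deviation inequality of \citet{Magureanu14}, namely $P\big(\sum_{a} N_{a,t}\, \KL{\hat{\mu}_{a,t}}{\mu_a} \geq \gamma\big) \leq e^{-\gamma}\big(\lceil \gamma\log t\rceil\gamma/K\big)^K e^{K+1}$ for $\gamma > K+1$, then takes a union bound over $t\in\mathbb{N}$ and chooses $C(\alpha,K)$ to absorb the series of polylogarithmic prefactors against $t^{-\alpha}$; this actually yields the stronger time-uniform guarantee $P(\exists t:\ \bmu\notin\confidenceset(\delta))\leq\delta$, which is what the $\delta$-PAC property of the stopping rule needs. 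What you propose is to re-derive that cited inequality from scratch via the GLR interpretation of the self-normalized KL sum, mixture (Laplace) martingales, Ville's inequality, and peeling over the counts; this is indeed how such bounds are proved (the $\lceil\gamma\log t\rceil^K$ factor in the cited bound is exactly the number of peeling cells), and your diagnosis of the crux is exactly right: a naive mixture gives a $t^{K/2}$ prefactor, which would force $\alpha\gtrsim K/2$, and peeling is what reduces the prefactor to polylogarithmic so that every $\alpha>1$ works. Your route buys self-containedness and an explanation of where the polylog prefactor comes from; the paper's buys brevity and rigor-by-citation. Two small corrections: the peeled inequality should be attributed to \citet{Magureanu14} (which \citet{garivier2016optimal} also rely on), not to \citet{garivier2016optimal} themselves; and Ville's inequality applies to a fixed level, so with the time-dependent threshold $e^{c(t,\delta)}/\mathrm{poly}(t)$ you still need a union bound over $t$ (or a stitching argument) for time-uniformity --- though for the pointwise-in-$t$ statement as literally written in the lemma, Markov's inequality applied at each fixed $t$ already suffices.
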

Lemma~\ref{cor:stopping} implies that Chernoff's stopping rule is a $\delta$-PAC stopping rule, and we stop when 
\begin{align}\label{eq:stopping}
    \inf_{\blambda \in \Alt(\hat{\bmu}_t)} \sum_{a=1}^K  N_{a, t} d(\hat{\mu}_{a, t}, \lambda_a) > c(t, \delta).
\end{align}
This means that the confidence set is a subset of the normal cone spanned by the active constraints at $\pol_{\hat{\bmu}_t}^*$. The details of the constant in Lemma~\ref{cor:stopping} are deferred to Appendix~\ref{app:upper}. Note that one can also derive a stopping rule via the concentration results of \citet{Kaufmann21}. 

\textbf{Component 2: Recommendation rule.} We recommend the solution of the linear programming (Equation~\eqref{eq:lp}) with the empirical means of the arms at the stopping time, $\pol_{\hat{\bmu}_t}^* = \argmax_{\pol \in \F} \hat{\bmu}_t^\top \pol$. Since the empirical means might not always be within the pre-specified range $\Domain$, we let $\hat{\bmu}_t$ denote the Euclidean projection of the empirical means onto $\Domain$.
\begin{algorithm}[t]
\caption{ Constrained Track-and-Stop (CTnS)}\label{alg:ctns}
\begin{algorithmic}[1]
\Require Confidence level $\delta$, constraints $(B, c)$, exploration set $\allocationset$
\State Play each arm once.
\While{$c(t, \delta) > D(\boldsymbol{N}/t, \hat{\bmu}_t, \F) $} \Comment{{\color{red}Weighted projection via Lemma~\ref{lm:proj}}}
\State Compute $\bw_t^* \in \argmax_{\bw \in \allocationset} D(\bw, \hat{\bmu}_t, \F)$ \Comment{{\color{red} Solve for optimal $\bw$ w.r.t. the constraints} }
\State Play $A_t \in \argmin_a N_{a, t} - \sum_{s=1}^t w_{a, s, \epsilon_s}^{*}$ and observe reward $R_t$
\EndWhile
\State Recommend $\pol_{\hat{\bmu}_t}^* = \argmax_{\pol \in \F} \hat{\bmu}_t^\top \pol$
\end{algorithmic}
\end{algorithm}
\begin{algorithm}[t]
\caption{Constrained Game Explorer (CGE)}\label{alg:gt}
\begin{algorithmic}[1]
\Require Confidence level $\delta$, constraints $(B, c)$, exploration set $\allocationset$

\While{$c(t, \delta) > D(\boldsymbol{N}/t, \hat{\bmu}_t, \F) $} \Comment{{\color{red}Weighted projection via Lemma~\ref{lm:proj}}}
\State Get allocation $\bw_t$ from regret minimizer \Comment{{\color{red} Running Adagrad over $\allocationset$} }
\State  Compute best-response $\blambda_t$ w.r.t. $\bw_t$ and $\hat{\bmu}_t$ \Comment{{\color{red}Weighted projection via Lemma~\ref{lm:proj}}}
\State Compute confidence intervals $\forall a$ $[\alpha_{t, a}, \beta_{t, a} ] = \left\{\xi: N_{a, t} \KL{\hat{\mu}_{a, t}}{\xi} \leq f(t) \right\}$
\State $\forall a$ $U_t^a := \max \left\{\frac{f(t)}{N_{a, t}}, \max_{\xi \in \{\alpha_t^a, \beta_t^a\}} \KL{\xi}{\lambda_{a, t}}\right\}$
\State Update AdaGrad with $l(w_t) = \sum_{a=1}^K w_a U_{a, t} $
\State Play $A_t \in \argmin_a N_{a, t} - \sum_{s=1}^t w_{a, s, \epsilon_s}^{*}$ and observe reward $R_t$
\EndWhile
\State Recommend $\pol_{\hat{\bmu}_t}^* = \argmax_{\pol \in \F} \hat{\bmu}_t^\top \pol$
\end{algorithmic}
\end{algorithm}

\textbf{Component 3a: CTnS.}\label{sec:tns}

First, we present our Constrained Track-and-Stop Algorithm (CTnS, Algorithm~\ref{alg:ctns}), which is an adaptation of the Track-and-Stop (TnS) framework~\citep{garivier2016optimal} to the linear constraint setting with aforementioned stopping and recommendation rules. In Algorithm~\ref{alg:ctns}, we highlight, in {\color{red}red}, the computations that we modify from the original schematic to account for the linear constraints.  The algorithm starts by playing each arm once. Then, until the stopping rule in Equation~\eqref{eq:stopping} fires, it performs \emph{C-tracking}~\citep{garivier2016optimal}. 
This means that we perform a $\max-\min$ oracle call (Line 3), and solve the problem in Equation~\eqref{eq:char_time} w.r.t our current estimate of the means $\hat{\bmu}_t$ to get an optimal allocation $\bw^*_t$. This step leverage our novel projection result in Lemma~\ref{lm:proj}. We track the optimal allocation via $A_t \in \argmin_a N_{a, t} - \sum_{s=1}^t w_{a, s, \epsilon_s}^{*}$,
where $w_{a, t, \epsilon_t}^{*}$ is the projection of $\bw^*_t$ onto $\allocationset \bigcap \{\bw: w_a > \epsilon_t \forall a \}$, and $\epsilon_t = \frac{1}{2 \sqrt{K^2 + t}}$. Note that $\frac{1}{t}\sum_{s=1}^t w_{a, s, \epsilon_s}^{*} \in \allocationset$ due to the convexity of the set of feasible exploration policies/allocations.

\begin{theorem}[Upper bound for CTnS]\label{thm:ctns}
    For any $\alpha > 1$ and $c(t, \delta)$ be defined as in Lemma~\ref{cor:stopping}, we have that the expected stopping time of CTnS satisfies \begin{align*}
        \lim_{\delta \rightarrow 0} \frac{\E[\tau]}{\log \frac{1}{\delta}} \leq \chartime(\bmu), \, \forall \bmu \in \Domain.
    \end{align*}
\end{theorem}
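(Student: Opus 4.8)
The plan is to follow the Track-and-Stop asymptotic-optimality template of \citet{garivier2016optimal}, adapting each step to the constrained exploration set $\allocationset$. Correctness is immediate: by Lemma~\ref{cor:stopping} the Chernoff stopping rule in Equation~\eqref{eq:stopping} is $\delta$-PAC, so I only need to control the expected stopping time. The engine of the proof is the identity $t\, D(\boldsymbol{N}/t, \hat{\bmu}_t, \F) = \inf_{\blambda \in \Alt(\hat{\bmu}_t)} \sum_a N_{a,t} \KL{\hat{\mu}_{a,t}}{\lambda_a}$, which turns the stopping condition into the statement that sampling halts once $t\, D(\boldsymbol{N}/t, \hat{\bmu}_t, \F)$ exceeds the slowly growing threshold $c(t,\delta)$. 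Thus it suffices to show that $D(\boldsymbol{N}/t, \hat{\bmu}_t, \F)$ is eventually close to $\chartime^{-1}(\bmu) = \sup_{\bw \in \allocationset} D(\bw, \bmu, \F)$.

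First I would establish almost-sure consistency of the estimates. The forced exploration built into the tracking rule --- playing the $\epsilon_t$-floored projection $w^*_{a,t,\epsilon_t}$ with $\epsilon_t = 1/(2\sqrt{K^2+t})$ --- guarantees $N_{a,t} \to \infty$ for every arm, hence $\hat{\bmu}_t \to \bmu$ a.s. Next, invoking the tracking guarantee of C-tracking together with the structural properties of Theorem~\ref{lm:properties}, I would argue that the empirical allocation $\boldsymbol{N}/t$ converges to the optimal set $\bw^*(\bmu)$. Here the convexity of $\bw^*(\bmu)$ and upper hemicontinuity of $\bmu \mapsto \bw^*(\bmu)$ let the tracking of a \emph{moving} target $\bw^*_t \in \argmax_{\bw \in \allocationset} D(\bw, \hat{\bmu}_t, \F)$ still push $\boldsymbol{N}/t$ toward $\bw^*(\bmu)$; continuity of $(\bw,\bmu)\mapsto D$ then yields $D(\boldsymbol{N}/t, \hat{\bmu}_t, \F) \to \chartime^{-1}(\bmu)$.

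To convert this into a bound on $\E[\tau]$, I would fix $\xi > 0$ and define the deterministic threshold $T_0(\delta) = \inf\{ T : T(\chartime^{-1}(\bmu) - \xi) \geq c(T,\delta)\}$. Using the consistency above, I would exhibit a random time $T_\xi$ (with $\E[T_\xi] < \infty$, controlled by exponential concentration of $\hat{\bmu}_t$ around $\bmu$) after which $D(\boldsymbol{N}/t, \hat{\bmu}_t, \F) \geq \chartime^{-1}(\bmu) - \xi$ holds uniformly. On the event $\{T_\xi \leq t\}$ the stopping condition must have fired by time $\max\{T_\xi, T_0(\delta)\}$, giving the pointwise bound $\tau \leq \max\{T_\xi, T_0(\delta)\}$ and hence $\E[\tau] \leq T_0(\delta) + \E[T_\xi]$. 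Since $c(t,\delta) = \alpha \log t + \log(C(\alpha,K)/\delta)$, a standard inversion gives $\lim_{\delta \to 0} T_0(\delta)/\log(1/\delta) = 1/(\chartime^{-1}(\bmu) - \xi)$, while $\E[T_\xi]$ is a $\delta$-free constant. Taking $\delta \to 0$ and then $\xi \to 0$ yields $\lim_{\delta\to0}\E[\tau]/\log(1/\delta) \leq \chartime(\bmu)$.

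The main obstacle is the convergence of the empirical allocation in the \emph{constrained} setting (the second step). Unlike unconstrained BAI, the exploration set $\allocationset$ may be the full constrained polytope $\F$ (Scenario 2), and the forced-exploration floor must be realised by projecting onto $\allocationset \cap \{\bw : w_a > \epsilon_t\}$ while keeping the running average feasible; I expect the delicate part is verifying that this projection preserves both feasibility and the tracking guarantee simultaneously, and that the oracle solutions $\bw^*_t$ for the moving estimates $\hat{\bmu}_t$ stay in a neighbourhood of the set-valued limit $\bw^*(\bmu)$ --- which is exactly where upper hemicontinuity and convexity from Theorem~\ref{lm:properties} are indispensable.
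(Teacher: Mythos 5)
Your proposal is correct and follows essentially the same route as the paper's proof: a Garivier--Kaufmann-style argument combining forced-exploration concentration of $\hat{\bmu}_t$, tracking of the \emph{set} $w^*(\bmu)$ via the convexity and upper hemicontinuity of Theorem~\ref{lm:properties} (the paper makes this precise by substituting the tracking lemma of \citet{Degenne19} for the original one), and inversion of the threshold $c(t,\delta)$ to obtain the asymptotic bound. Your bookkeeping ($\tau \leq \max\{T_\xi, T_0(\delta)\}$ with $\E[T_\xi] < \infty$) is a cosmetic variant of the paper's good-event decomposition $\E[\tau_\delta] \leq T_\epsilon + T_\delta + \sum_T P(\event_T^c)$, and you correctly isolate the one genuinely new difficulty relative to unconstrained BAI.
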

The proof of Theorem~\ref{thm:ctns} can be found in Appendix~\ref{app:ctns} and follows the same structure as the sample complexity proof of the original TnS in \citet{garivier2016optimal}. However, the optimal allocation does not necessarily have to be unique. rather, we use the upper hemicontinuity and convexity of $w^*(\bmu)$, while modifying the tracking lemma originally used by~\citet{garivier2016optimal} with the tracking result of~\citet{Degenne19}. This change allows to track a set of optimal solutions in absence of a unique optimum.

\textbf{Component 3b: CGE.}\label{sec:cge}
Track-and-Stop algorithms, like CTnS, tend to be computationally inefficient for larger problems since they requires a $\max-\min$ call at each iteration. To mitigate this, we adopt the approach of \citet{Degenne}, and treat the optimization problem in Equation~\eqref{eq:char_time} as a two player zero-sum game. This results in the Constrained Game Explorer (CGE), in Algorithm~\ref{alg:gt}.  Instead of solving for an optimal $\bw_t$ at each $t$, as in CTnS, we play one game between an allocation player, who plays $\bw$ to maximize $\sum_{a=1}^K w_a \KL{\hat{\mu}_{a, t}}{\lambda_a}$, and an instance player, who plays the confusing instance $\blambda$ to minimize $\sum_{a=1}^K w_a \KL{\hat{\mu}_{a, t}}{\lambda_a}$. We deploy an instance of AdaGrad~\citep{Duchi11} as the allocation player is taken to be, which enjoys sub-linear regret over any bounded domain when losses are convex, and the instance player is taken to be a best-response w.r.t. the allocation $\bw_t$. The best-response is computed via Lemma~\ref{lm:proj}.
The loss provided to AdaGrad at each time step is $\sum_{a=1}^K w_{a, t} U_{a, t}$, where $U_{a, t}$ induces optimism as $U_{a, t} \triangleq \max_{\xi \in \{\alpha_{a, t}, \beta_{a, t}\}} N_{a, t}\KL{\xi}{\lambda_{a, t}}$. Here, $(\alpha_{a, t}, \beta_{a, t})$ are the endpoints of the confidence interval around $\hat{\mu}_{a, t}$, i.e. $[\alpha_{t, a}, \beta_{t, a} ] = \left\{\xi: N_{a, t} \KL{\mu _{a, t}}{\xi} < f(t) \right\}$, and $f(t) = 3\log t + \log \log t$.
We apply the same tracking as in CTnS.

\begin{theorem}[Upper bound for CGE]\label{thm:cge}
    The expected sample complexity of CGE satisfies
    $\E[\tau] \leq T_0(\delta) + CK,$
    where $T_0(\delta) := \max \left\{t \in \mathbb{N}: t \leq \chartime(\bmu)c(t, \delta) + O(\sqrt{tQ}) + O(\sqrt{t \log t}) \right\}$. $C_\mu$ is problem-dependent constant, C is a universal constant and $Q$ is an upper bound on the losses provided to Adagrad.
\end{theorem}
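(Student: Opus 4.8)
The plan is to follow the game-theoretic analysis of \citet{Degenne}, adapting each step to the constrained Alt-set $\Alt(\bmu) = \bigcup_{\pol' \in \neighbors}\{\blambda: \blambda^\top(\optpol - \pol') < 0\}$ and to the best response computed through the projection Lemma~\ref{lm:proj}. The argument decomposes into four pieces: a concentration event, the no-regret guarantee of AdaGrad, a lower bound on the best fixed allocation via the characteristic time, and a final inversion that produces $T_0(\delta)$.

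First I would fix a high-probability event $\event$ on which the true means stay inside their confidence intervals, so that for every arm $a$ and round $t$ the optimistic loss $U_{a,t}$ built from the endpoints $\{\alpha_{a,t}, \beta_{a,t}\}$ upper bounds the true per-arm divergence $N_{a,t}\KL{\mu_a}{\lambda_{a,t}}$. This is precisely the purpose of the $\max_{\xi \in \{\alpha_{a,t},\beta_{a,t}\}}$ construction together with $f(t) = 3\log t + \log\log t$, and it guarantees that the optimistic game over-estimates the true game value. On the complement $\event^c$, whose probability is summable in $t$, the sampling process contributes only a bounded additive term, which becomes the constant $CK$ in the statement.

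Next, I would invoke the no-regret property of AdaGrad for the allocation player. Since $\bw \mapsto \sum_a w_a U_{a,t}$ is linear, hence convex, over the bounded domain $\allocationset$, AdaGrad attains regret $O(\sqrt{tQ})$ with $Q$ bounding the losses, giving $\sum_{s=1}^t \sum_a w_{a,s} U_{a,s} \geq \sup_{\bw \in \allocationset}\sum_{s=1}^t \sum_a w_a U_{a,s} - O(\sqrt{tQ})$. Because the tracking rule $A_t \in \argmin_a (N_{a,t} - \sum_s w^*_{a,s})$ keeps the counts $N_{a,t}$ within a bounded distance of the cumulative allocations, I can transfer this between the played counts and the allocations up to a lower-order term. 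I would then lower bound the cumulative reward of the best fixed allocation by evaluating it at the optimal allocation $\bw^*(\bmu)$: using the optimism from Step 1 together with the continuity of $D(\bw, \cdot, \F)$ and the upper hemicontinuity and convexity of $\bw^*(\bmu)$ from Theorem~\ref{lm:properties}, each per-round value is, up to an $O(\sqrt{t\log t})$ concentration error accumulated across rounds, at least $D(\bw^*, \bmu, \F) = \chartime^{-1}(\bmu)$. Chaining these inequalities bounds the stopping statistic $\inf_{\blambda \in \Alt(\hat{\bmu}_t)}\sum_a N_{a,t}\KL{\hat{\mu}_{a,t}}{\lambda_a}$ from below by $t\,\chartime^{-1}(\bmu) - O(\sqrt{tQ}) - O(\sqrt{t\log t})$.

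Finally, the stopping rule fires once this lower bound exceeds $c(t,\delta)$, i.e. once $t \geq \chartime(\bmu)\,c(t,\delta) + O(\sqrt{tQ}) + O(\sqrt{t\log t})$; taking the largest $t$ violating this inequality gives exactly $T_0(\delta)$, and adding the $\event^c$ contribution yields $\E[\tau] \leq T_0(\delta) + CK$. The main obstacle I anticipate is Step 3 in the constrained setting: I must show that the instance player's best response, obtained by projecting onto a face $\{\blambda: \blambda^\top(\optpol - \pol') = 0\}$ for some neighbor $\pol' \in \neighbors$, remains a valid confusing instance, and that the optimistic surrogate $U_{a,t}$ correctly dominates the true divergence for this constrained projection, so that the game value played with $\hat{\bmu}_t$ and the empirical neighbor set converges to $\chartime^{-1}(\bmu)$. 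This is where the union-of-half-spaces geometry of $\Alt(\bmu)$ and the set-valued regularity of $\bw^*(\bmu)$ enter, and it is the step that genuinely departs from the unconstrained gamification proof of \citet{Degenne}.
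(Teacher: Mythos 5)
Your overall architecture matches the paper's proof: a good event whose complement has summable probability (giving the $CK$ term), the AdaGrad regret bound with benchmark taken over $\allocationset$ (the key constrained adaptation, which you place correctly), optimism $U_{a,s} \geq \KL{\mu_a}{\lambda_{a,s}}$, a max--min evaluation at a fixed allocation to bring in $\chartime^{-1}(\bmu)$, and inversion of the stopping condition to define $T_0(\delta)$. However, there is a genuine gap exactly at the point you flag as an ``anticipated obstacle'' and leave unresolved: the best response $\blambda_s$ is computed against $\Alt(\hat{\bmu}_s)$, i.e., by projecting onto the faces of the normal cone at $\pol^*_{\hat{\bmu}_s}$. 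Your chain ``each per-round value is at least $D(\bw^*,\bmu,\F) = \chartime^{-1}(\bmu)$'' requires $\blambda_s \in \Alt(\bmu)$, which holds only on rounds where $\pol^*_{\hat{\bmu}_s} = \optpol$. The good event does not guarantee this: the confidence widths $f(s)/N_{a,s}$ can be large when counts are small, so the empirical recommendation can be wrong even under $\event_T$, and on those rounds the per-round inequality simply fails. Moreover, the tools you propose for this step --- continuity of $D$ and upper hemicontinuity/convexity of $w^*(\bmu)$ from Theorem~\ref{lm:properties} --- are not used in the paper's CGE proof at all (they belong to the CTnS analysis) and do not resolve the issue, since they give no finite-time control on how often $\pol^*_{\hat{\bmu}_s} \neq \optpol$.

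The paper closes this gap (Step 11 of its proof) with a counting argument based on Chernoff information: if $\pol^*_{\hat{\bmu}_s} \neq \optpol$, then $\hat{\bmu}_s \in \Alt(\bmu)$, so some arm $a$ satisfies $\chinf(\hat{\mu}_{a,s}, \mu_a) \geq \epsilon$ for a problem-dependent $\epsilon > 0$; under the good event this forces $f(s)/N_{a,s} \geq \epsilon$, hence $U_{a,s} \geq \epsilon$, so the optimistic game value accumulates at least linearly in the number of wrong-recommendation rounds. Since the accumulated confidence widths are themselves bounded by $O(\sqrt{t \log t})$, the number of such rounds is $O(\sqrt{t \log t})$ and can be absorbed into the lower-order terms of $T_0(\delta)$. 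Without this (or an equivalent) argument, your proof only controls the stopping statistic on the subsequence of rounds with correct empirical recommendation, which is not enough to conclude $\E[\tau] \leq T_0(\delta) + CK$.
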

The full proof of Theorem~\ref{thm:cge} can be found in Appendix~\ref{app:cge}. We simply follow the steps of the proof of Theorem 2 in \citet{Degenne} and apply specifics of our setting when applicable. 

\textit{Theorem~\ref{thm:ctns} and~\ref{thm:cge} show that CTnS and CGE are asymptotically optimally}, i.e. upper bound on their sample complexities match the lower bound of constrained pure exploration for small enough $\delta$.

\section{Experimental analysis}\label{sec:experiments}

We evaluate our algorithms using the threshold $c(t, \delta)= \log \frac{1 + \log \log t}{\delta}$, which is commonly done in the literature ~\citep{garivier2016optimal}, and we set $f(t) = \log t$ in CGE. As benchmarks we will use the lower bound, Equation~\ref{eq:lb}, as well as a learner that samples from the optimal allocation, given by the lower bound, at all time steps. We also consider a learner that draws arms from the uniform distribution and in scenarios where the uniform distribution is not in the set of feasible exploration policies we project it onto the set and sample from the resulting distribution. 

In addition, we consider a na\"ive adaptation of Track-and-Stop~\citep{Kaufmann16}, which we call the \emph{Projected-Track-and-Stop} (PTnS). The \textit{PTnS algorithm computes the allocation as if it was solving the classical BAI problem and projects the allocation back to the feasible set when necessary}. Comparing CGE and CTnS with PTnS demonstrates (a) the importance of tracking the constrained lower bound to design an efficient algorithm, and, (b) the desired efficiency cannot be achieved just by tracking the unconstrained lower bound and projecting the corresponding allocation policy to the constrained set. Appendix \ref{app:add} contains additional experiments.\footnote{Code available at: \href{https://github.com/e-carlsson/constraint-pure-exploration}{https://github.com/e-carlsson/constraint-pure-exploration}}

\newpage

\begin{figure}[t!]
    \centering
    \begin{tabular}{cc}
     \subfloat[Characteristic time of the BAI problem as we vary $\mu_4$ and $\mu_5$.]{\includegraphics[width=0.23\textwidth]{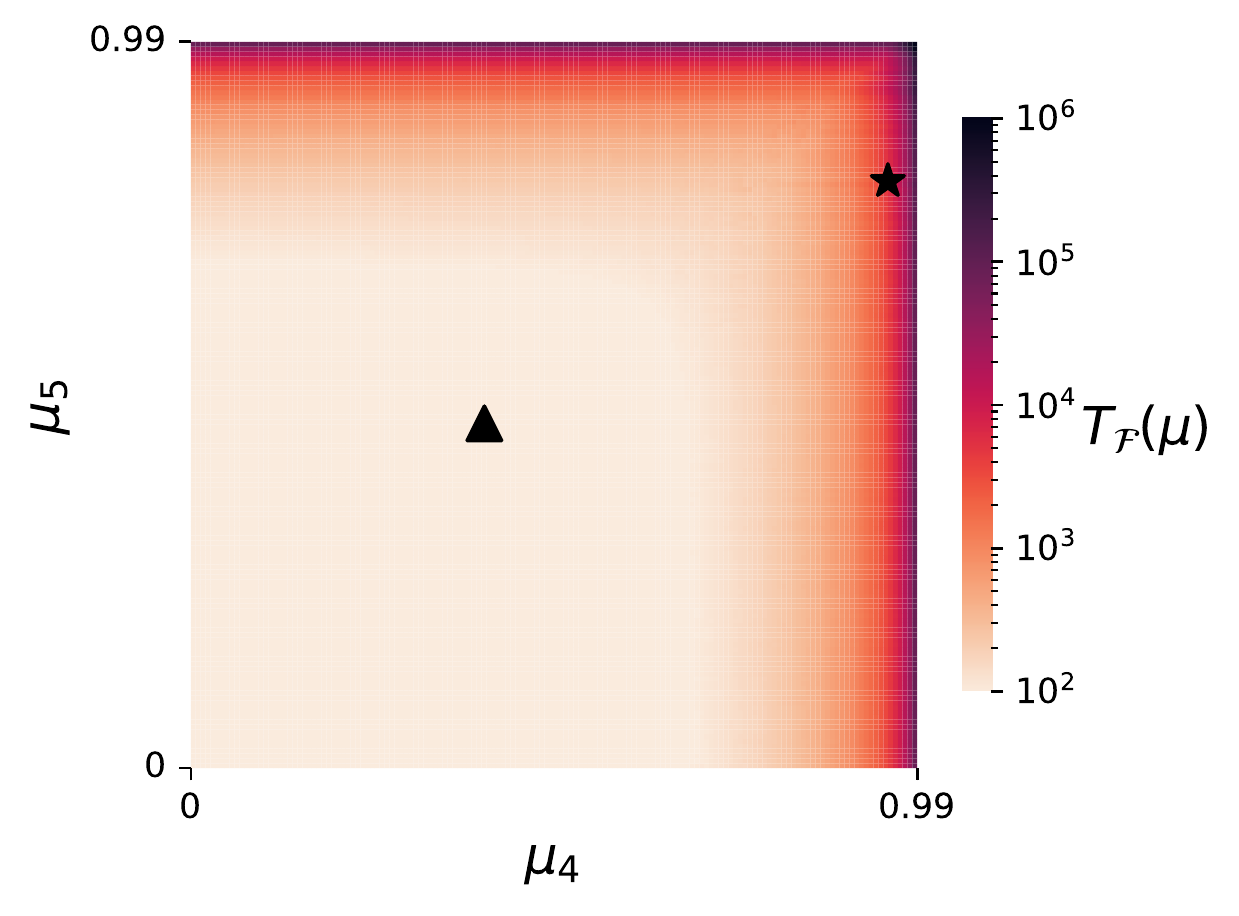}\label{fig:lb1}}   &
     \subfloat[Characteristic time of the constraint pure-exploration problem as we vary $\mu_4$ and $\mu_5$]{\includegraphics[width=0.23\textwidth]{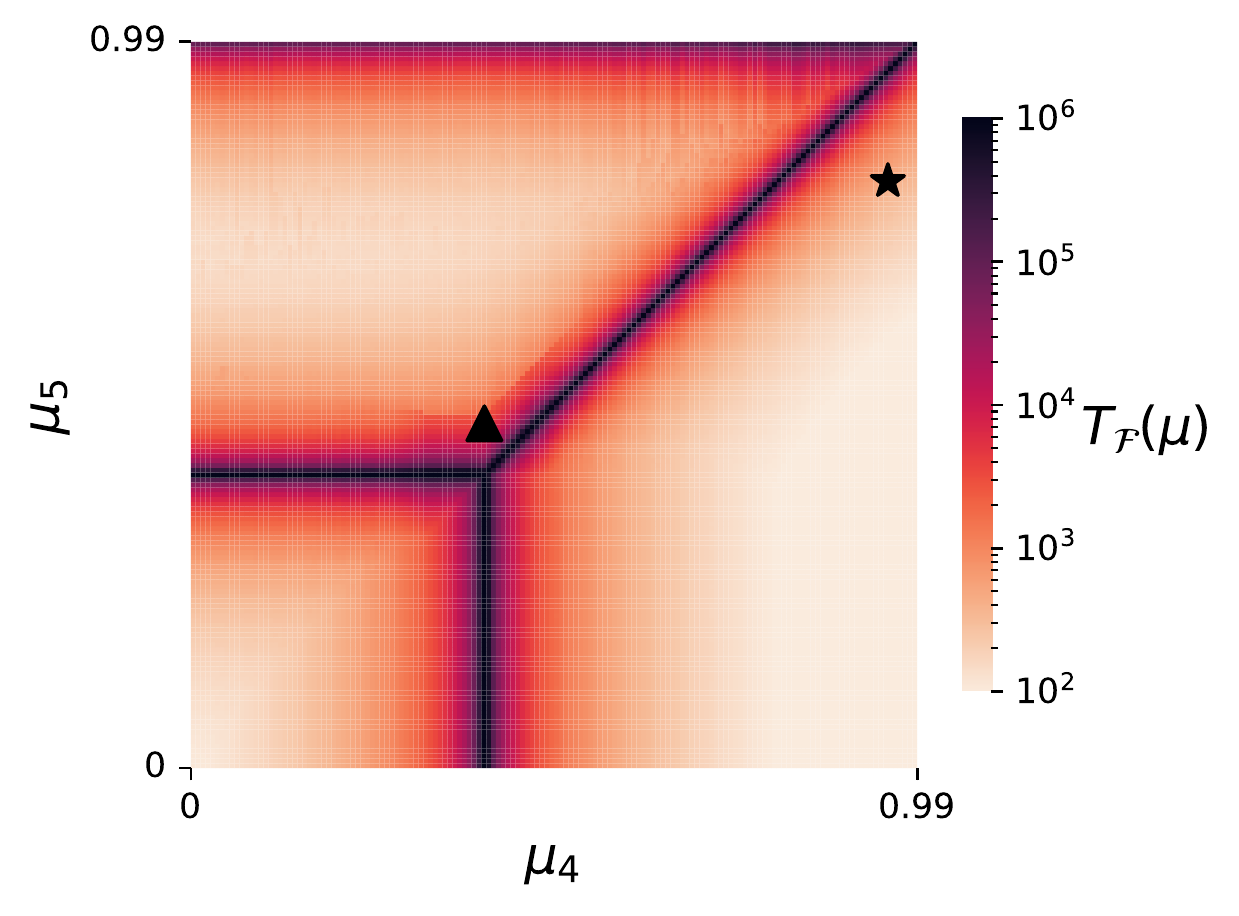}\label{fig:lb2}}
     \\
    \subfloat[Results (for $1000$ random seeds) on the instance highlighted as a triangle in Figure~\ref{fig:lb1} and Figure~\ref{fig:lb2}, $\bmu=(1, 0.5, 0.4, 0.4, 0.5)$, with constraints and $\delta=0.1$.]{\includegraphics[width=0.23\textwidth]{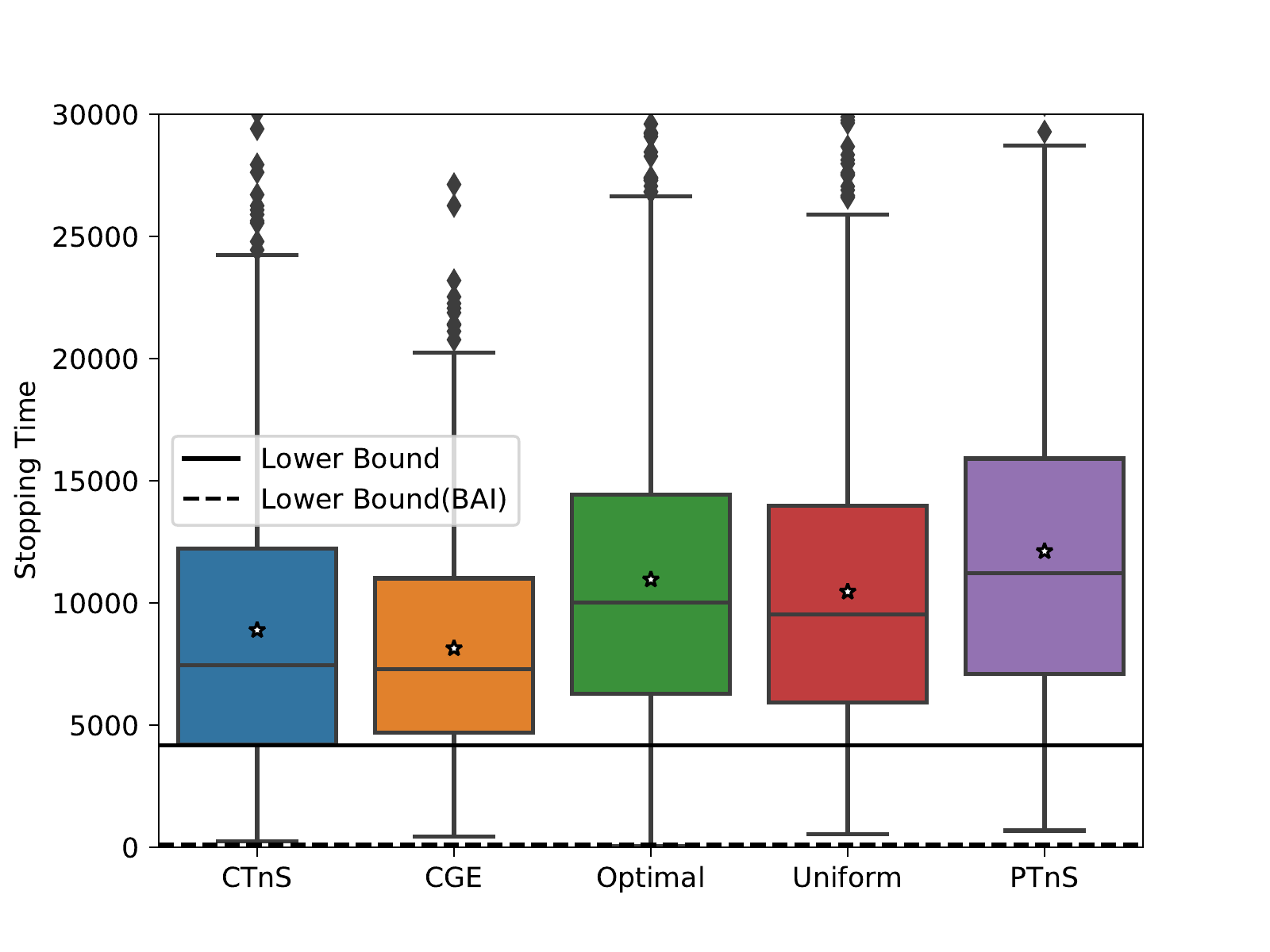}\label{fig:exp1}} &
     \subfloat[Results (for $1000$ random seeds) on the instance highlighted as a star in Figure~\ref{fig:lb1} and Figure~\ref{fig:lb2}, $\bmu=(1, 0.5, 0.4, 0.95, 0.8)$, with constraints and $\delta=0.1$.]{\includegraphics[width=0.23\textwidth]{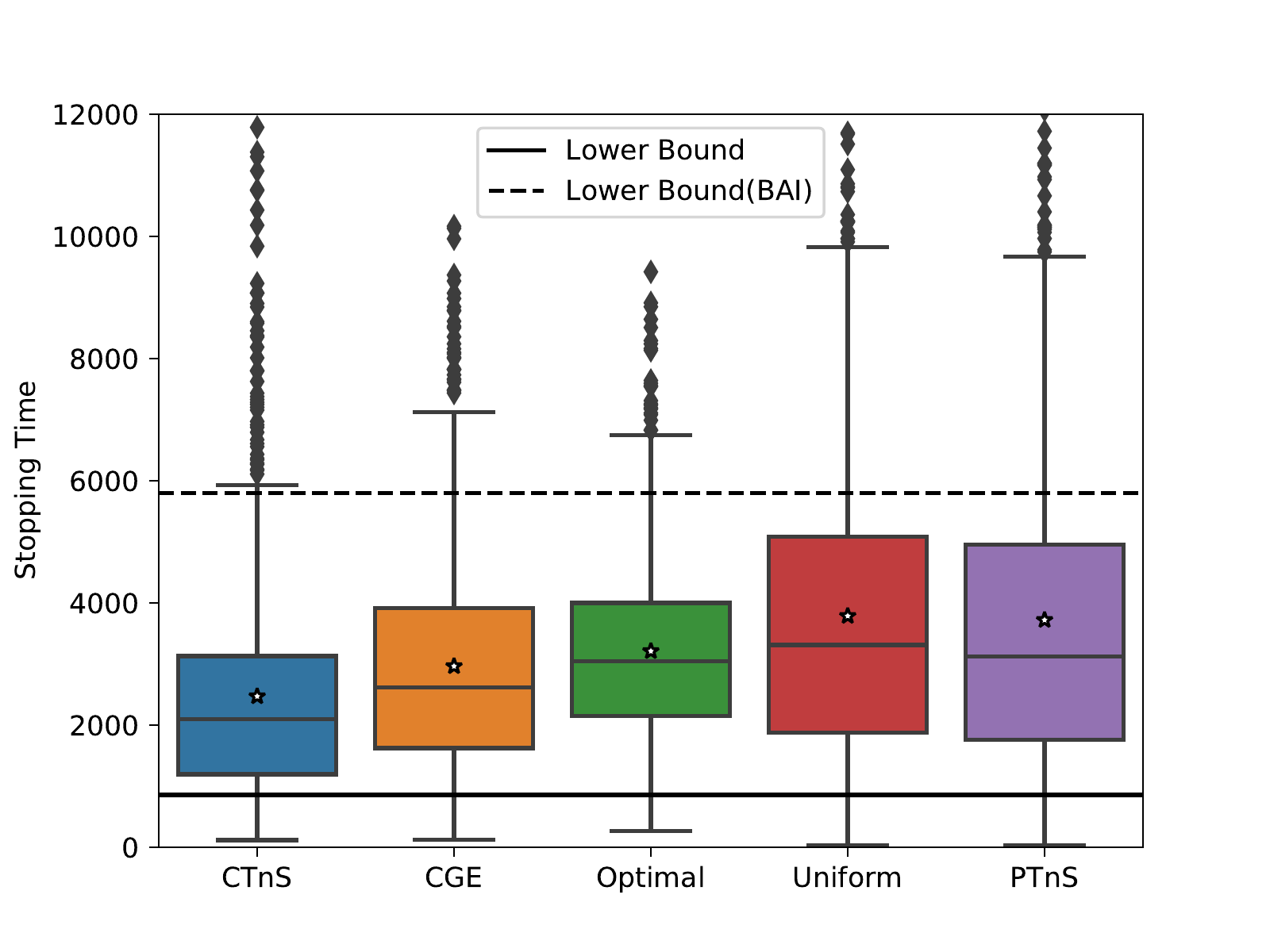}\label{fig:exp2}}
        \end{tabular}
    \caption{Figure~\ref{fig:lb1} and~\ref{fig:lb2} illustrate the hardness of the problem, i.e. the Characteristic time, changes in the $5$ arm instance $\bmu=(1.0, 0.5, 0.4, \mu_4, \mu_5)$ as we vary $\mu_4$ and $\mu_5$. Figure~\ref{fig:lb1} corresponds to the hardness in the BAI while Figure~\ref{fig:lb2} is the constraint setting with constraints $\pol_1 + \pol_2 \leq 0.5$ and $\pol_3 + \pol_4 \leq 0.5$. We clip the characteristic time at $10^6$ for visual purposes.}\label{fig:heatmap_exp}
\end{figure}

\textbf{Observation 1: Constraints alter the hardness of the problem.} In Figures ~\ref{fig:lb1} and \ref{fig:lb2} we illustrate how the hardness of a bandit instance $\bmu$ may differ once we introduce constraints, assuming anytime constraints. We consider the instance $\bmu=(1.0, 0.5, 0.4, \mu_4, \mu_5)$ and plot how the characteristic time $\chartime(\bmu)$ changes as we vary $\mu_4$ and $\mu_5$, Figure~\ref{fig:lb1} corresponds to the classical BAI, i.e. no constraints, and in Figure~\ref{fig:lb2} we have introduced the two constraints $\pol_1 + \pol_2 \leq 0.5$ and $\pol_3 + \pol_4 \leq 0.5$. We have highlighted two instances, one where the BAI problem is easy but the constraint problem is hard (black triangle) and one where the reverse is true (black star). We run the algorithms on these two instances in Figure~\ref{fig:exp1} and \ref{fig:exp2}, assuming anytime constraints, and observe that both algorithms operate close to the lower bound and outperforms the uniform allocation strategy. We also observe that the algorithms perform equally or better than the optimal learner, this is an interesting phenomena and have been observed earlier in other pure exploration scenarios~\citep{Degenne}. The PTnS does not account for the constraints, as well as CTnS and CGE, and has a sample complexity on par with uniform sampling. 

\begin{figure}[t!]
    \centering
    \begin{tabular}{cc}
    \subfloat[Anytime Constraints]{\includegraphics[width=0.23\textwidth]{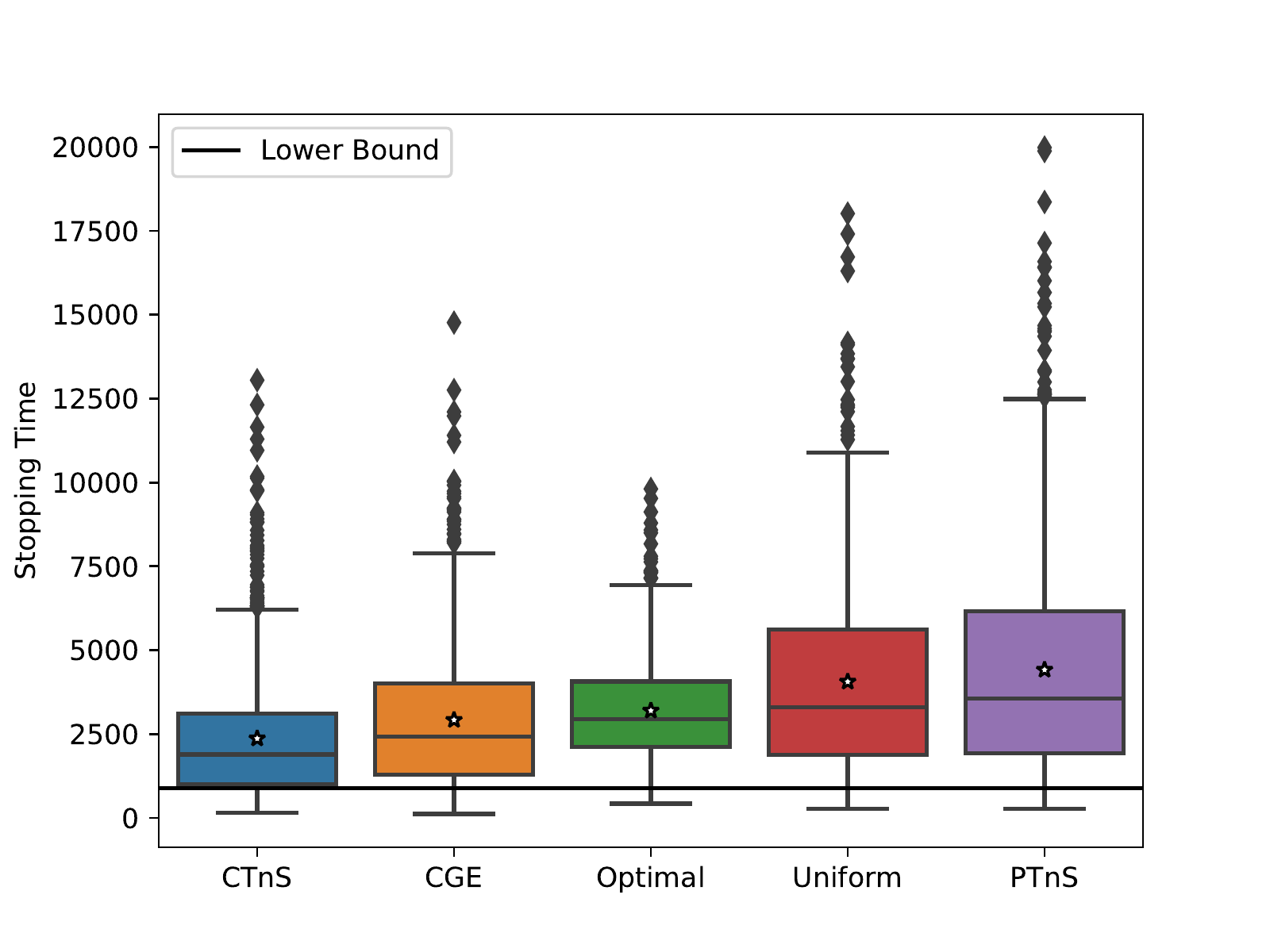}\label{fig:ptns_any}} &
    \subfloat[End-of-Time Constraints]{\includegraphics[width=0.23\textwidth]{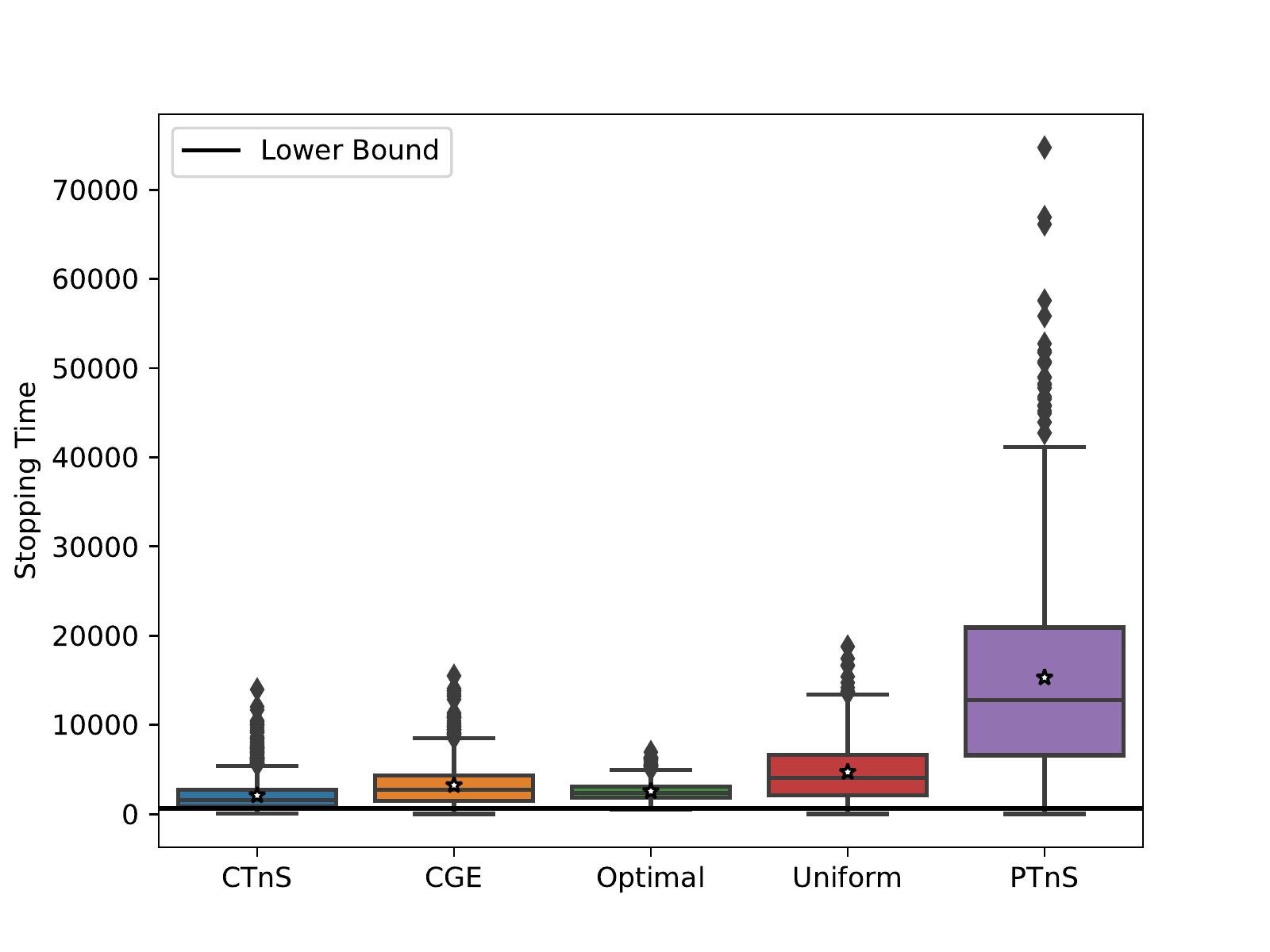}\label{fig:ptns_end}}
        \end{tabular}
    \caption{Problem instance with $8$ Gaussian arms with $\sigma=1$. The arm means are $\mu=[1.0, 0.7, 0.3, 0.0, -0.5, -1.0, -2.0, -3.0]$ and we have one constraint $7\pi_1 + 7\pi_2 + \pi_3 \leq 0.5$. The optimal policy is $\pi_3 = \pi_4=0.5$. Results for $\delta=0.1$ and $1000$ random seeds.}
    \label{fig:ptns}
\end{figure}

\textbf{Observation 2: Na\"ive projection cause high sample complexity.}
In Figure \ref{fig:ptns}, we consider an eight-armed bandit with Gaussian reward distributions.  We observe that PTnS performs the worst on this instance, specially in the end-of-time setting where it is outperformed by uniform sampling. This because in a BAI problem with the same $\bmu$ the hardness of the problem lies separating arm $1$ and $2$ but this doesn't have to be the case in the constraint bandit. The sub-optimality of PTnS in Figure~\ref{fig:ptns_any}, the anytime scenario, illustrates that na\"ively projecting the allocation onto the feasible set won't account for the constraints in a meaningful way. In Appendix~\ref{app:ptns} we further discuss these examples and compute the optimal allocations and the allocations PTnS converge to for each scenario. 
\begin{figure}[t!]
    \centering
    \begin{tabular}{cc}
     \subfloat[Anytime Constraints]{\includegraphics[width=0.23\textwidth]{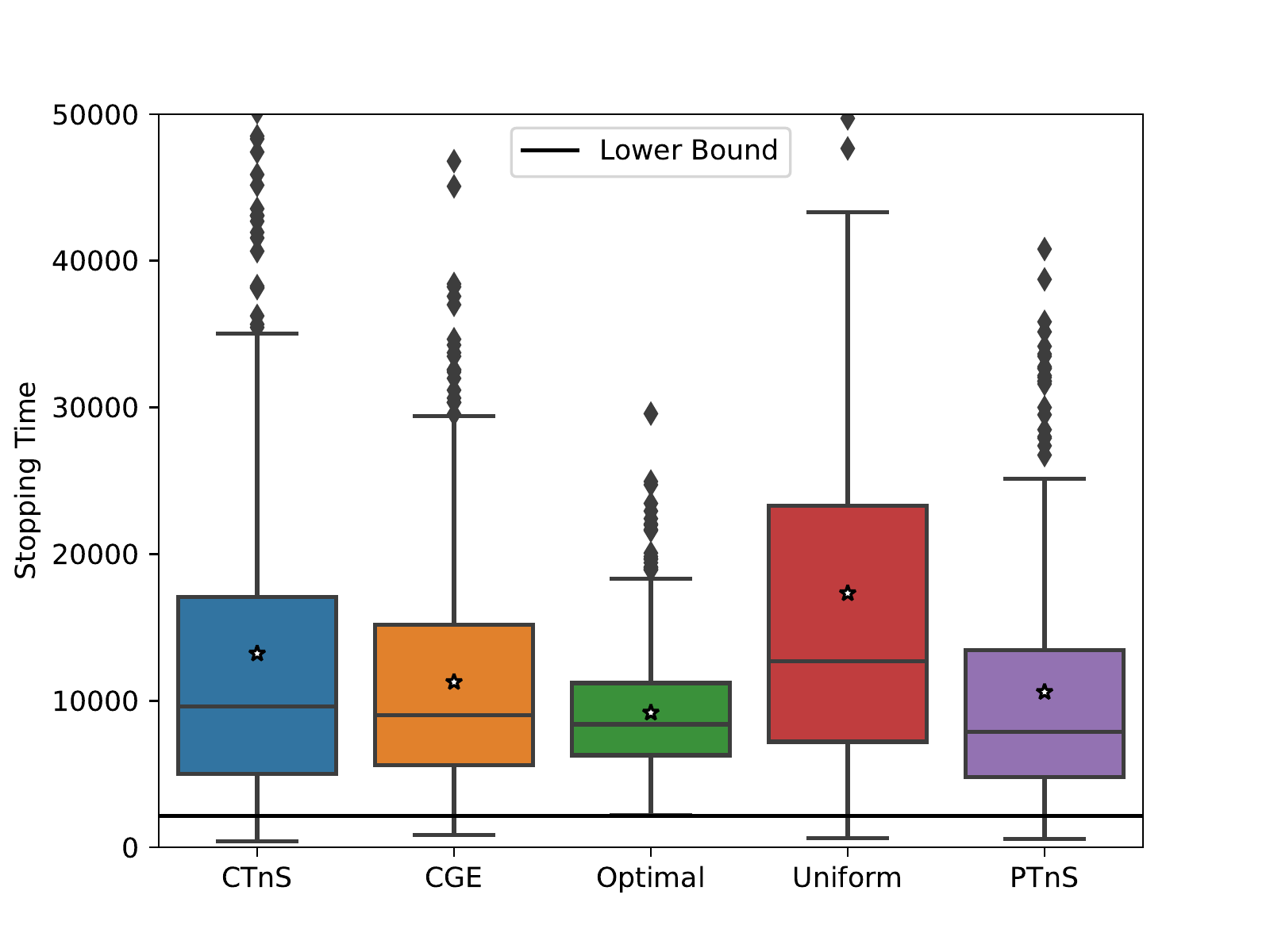}\label{fig:imdb_any} } &
     \subfloat[End-of-Time Constraints]{\includegraphics[width=0.23\textwidth]{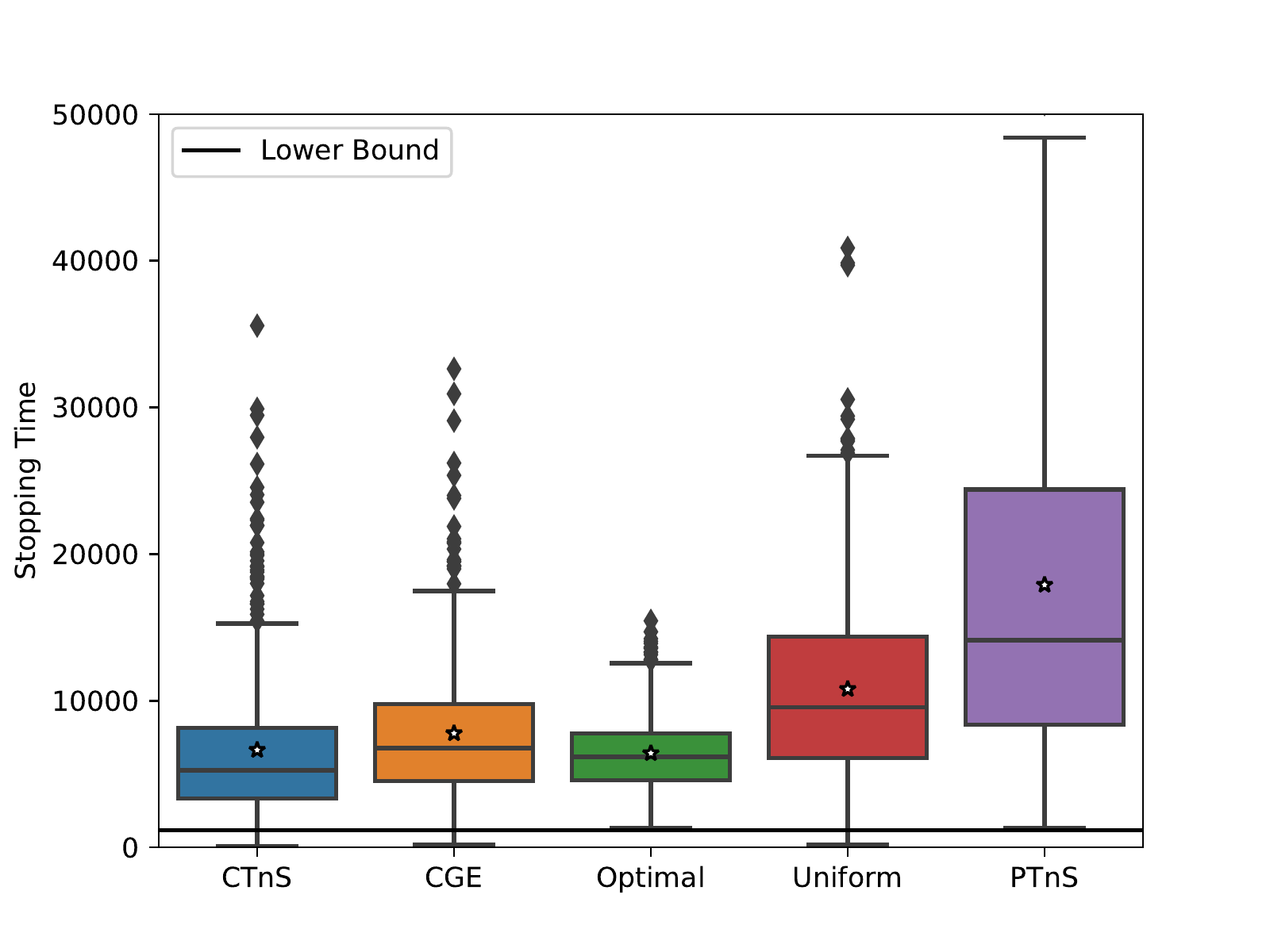}\label{fig:imdb_end}}
        \end{tabular}
    \caption{Experiments on IMDB dataset with $12$ movies and $\delta=0.1$. Each experiment was performed over $500$ random seeds.}\label{fig:exp_imdb} 
\end{figure}

\textbf{IMDB movie recommendation environment.} We construct a semi-synthetic task based on the widely used IMDB 50K Movie Dataset~\citep{imdb} which contains metadata on $k_0=50000$ movies including association with one or more of $d=23$ genres, as indicated by a binary matrix $X \in \{0,1\}^{m\times d}$. In our setting, actions correspond to recommending one out of a subset of $k \leq k_0$ movies. To create reward distributions for each movie, we simulate a population of $n_u=600$ users, each assigned $n_f=5$ favorite genres $f_i$ with weights $w_{if_i} = [20, 10, 5, 2, 2]$ and let $w_{ia}=0$ for $a\not\in f_i$. A score $s_{ia}$ for user $i$ and movie $a$ is created as follows, $
s_{ij} = \mbox{clip}(\lfloor \tilde{s}_{ia}/\sum_{a\in f_i}w_{ia}\cdot \sigma_0 + \sigma_1 \epsilon_{ia} \rceil \; ; 1, 5) \;\mbox{ where }\; \tilde{s}_{i\cdot} = w_i X^\top + w_0, $
$\epsilon_{ia} \sim U(0,1)$, $\sigma_0=5, \sigma_1=3$, and $\lfloor x \rceil$ indicates rounding of $x$ to the nearest integer. We construct the bandit environment by letting each movie $a$ be represented by an arm with reward $R_a \sim \mathcal{N}(\hat{\mu}_{s_{\cdot a}}, \hat{\sigma}^2_{s_{\cdot a}})$ determined by the mean and standard deviation of user reviews for the movie. We sample a subset of movies and search for the optimal policy that allocates at most $0.3$ to action movies, at least $0.3$ to drama movies and at least $0.3$ on family movies. Note that one movie might belong to more than one category. We present the result in Figure~\ref{fig:exp_imdb} for both the anytime scenario and the end of time scenario. We observe that CTnS and CGE outperform the uniform allocation strategy, which has a very high variance. We also observe a bigger difference between the algorithms under end of time constraints, this is reasonable since the set of plausible exploration policies is larger for that scenario. If the set of exploration policies is limited, there is little room for an algorithm to be adaptive. This is also captured in the fact that the lower bound for anytime constraints is always higher or equal to the bound for end-of-time constraints.

\section{Conclusions and future directions}

In this paper, we study the problem of pure exploration in bandits with linear constraints. We provide a generic lower bound for this setting that depends on an information-theoretic projection onto the boundary of the normal cone spanned by the active constraints at the optimal policy. We derive a closed-form lower bound for the case of Gaussian distributions and provide geometric insights into how constraints can make a problem easier or harder. Furthermore, we leverage the projection-based computation of the confusing instances to modify TnS~\citep{garivier2016optimal} and GE~\citep{Degenne} to corresponding CTnS and CGE versions for pure exploration in constraint bandits. We empirically evaluate the algorithms on synthetic and real data to assess the impact of constraints on the hardness of the problem.

One interesting future direction is learning when reward and constraints are unknown or partially unknown. Another future direction we deem very interesting is bandit with non-linear constraints as this would change this structure of the normal cone and the resulting projection.

\balance

\section*{Acknowledgements}
Emil Carlsson is funded by Chalmers AI Research Centre (CHAIR) and the Sweden-America foundation (SweAm). Fredrik D. Johansson is funded in part by the Wallenberg AI, Autonomous Systems and Software Program (WASP) funded by the Knut and Alice Wallenberg Foundation. Debabrota Basu acknowledges the Inria-Kyoto University Associate Team ``RELIANT'', the ANR young researcher (JCJC) award for the REPUBLIC project (ANR-22-CE23-0003-01), and the CHIST-ERA project CausalXRL (ANR-21-CHR4-0007).

The computations were enabled by resources provided by the National Academic Infrastructure for Supercomputing in Sweden (NAISS) partially funded by the Swedish Research Council through grant agreement no. 2022-06725.

\clearpage
\bibliography{lib}

\clearpage

\section*{Checklist}
 \begin{enumerate}

 \item For all models and algorithms presented, check if you include:
 \begin{enumerate}
   \item A clear description of the mathematical setting, assumptions, algorithm, and/or model. {\bf Yes}
   \item An analysis of the properties and complexity (time, space, sample size) of any algorithm. {\bf Yes}
   \item (Optional) Anonymized source code, with specification of all dependencies, including external libraries. {\bf Yes}
 \end{enumerate}

 \item For any theoretical claim, check if you include:
 \begin{enumerate}
   \item Statements of the full set of assumptions of all theoretical results. {\bf Yes}
   \item Complete proofs of all theoretical results.{\bf Yes}
   \item Clear explanations of any assumptions. {\bf Yes}  
 \end{enumerate}

 \item For all figures and tables that present empirical results, check if you include:
 \begin{enumerate}
   \item The code, data, and instructions needed to reproduce the main experimental results (either in the supplemental material or as a URL). {\bf Yes}
   \item All the training details (e.g., data splits, hyperparameters, how they were chosen). {\bf Yes}
         \item A clear definition of the specific measure or statistics and error bars (e.g., with respect to the random seed after running experiments multiple times). {\bf Yes}
         \item A description of the computing infrastructure used. (e.g., type of GPUs, internal cluster, or cloud provider). {\bf Yes, see appendix.}
 \end{enumerate}

 \item If you are using existing assets (e.g., code, data, models) or curating/releasing new assets, check if you include:
 \begin{enumerate}
   \item Citations of the creator If your work uses existing assets. {\bf Not Applicable}
   \item The license information of the assets, if applicable. {\bf Not Applicable}
   \item New assets either in the supplemental material or as a URL, if applicable. {\bf Not Applicable}
   \item Information about consent from data providers/curators. {\bf Not Applicable}
   \item Discussion of sensible content if applicable, e.g., personally identifiable information or offensive content. {\bf Not Applicable}
 \end{enumerate}

 \item If you used crowdsourcing or conducted research with human subjects, check if you include:
 \begin{enumerate}
   \item The full text of instructions given to participants and screenshots. {\bf Not Applicable}
   \item Descriptions of potential participant risks, with links to Institutional Review Board (IRB) approvals if applicable. {\bf Not Applicable}
   \item The estimated hourly wage paid to participants and the total amount spent on participant compensation. {\bf Not Applicable}
 \end{enumerate}

 \end{enumerate}

\appendix
\onecolumn
\part*{Appendix}

\section{Notations}
\renewcommand{\arraystretch}{1.5}
\begin{longtable}{p{2cm} p{.5cm} p{9.5cm}}
\caption{Notations}\label{tab:Notation}\\
\hline
 $K$ &$:$ & Number of arms.\\
 $\delta$ &$:$ & Confidence parameter.\\
$ \KL{x}{y}$ &$:$ & KL-divergence between two random variables with means $x$ and $y$.\\
$ \binaryKL{x}{y}$ &$:$ & KL-divergence between two Bernoulli random variables with means $x$ and $y$.\\
$\Domain$ &$\triangleq$ & $[\bmu_{\min}, \bmu_{\max}]^K$, i.e. the range of expected rewards \\
$\bmu$ &$:$ & True reward vector, $\bmu \in \Domain$.\\
$\hat{\bmu}_t$ &$:$ & Empirical means at time $t$ projected onto $\Domain$.\\
 $B$ &$:$ & Matrix defining the linear constraints, i.e. $B \pol \leq \boldsymbol{c}$.\\
 $\boldsymbol{c}$ &$:$ & Vector defining the upper bound in the linear constraints, $B \pol \leq \boldsymbol{c}$.\\
 $\Delta_{K-1}$ &$:$ & Simplex in $K$ dimensions.\\
 $\F$ &$\triangleq$ & $\{\pol \in \Delta_{K-1}: B\pol \leq \boldsymbol{c}\}$, i.e. the constrained policy space.\\
 $\pol$ &$:$ & A feasible policy over $K$ arms, i.e. $\pol \in \F$.\\
$\optpol$ or $\pol_{\bmu}^*$ &$:$ & Unique optimal policy for bandit instance $\bmu$, defined as $\pol_{\bmu}^* \triangleq \optpol \triangleq \argmax_{\pol \in \F} \bmu^\top \pol$. \\
$\neighbors$ &$:$ & Set of extreme points for $\pol'$, which share $K-1$ linearly independent constraints with $\optpol$. \\
$\Cone(\optpol)$ &$:$ & Normal cone spanned by the active constraints at $\optpol$. \\
$\Alt(\bmu)$ &$\triangleq$ & $\{\blambda \in \Domain: \max_{\pol \in \F} \blambda^\top \pol > \blambda^\top \pol_{\bmu}^*\}$, i.e. the set of alternative bandit instances. \\
 $\tau$ &$:$ & Random stopping time of a pure exploration algorithm.\\
 $\allocationset$ & $:$ & Set of possible exploration policies/allocations. \\
 $\chartime(\bmu)^{-1}$ & $\triangleq$ & $\sup_{\bw \in \allocationset} \inf_{\blambda \in \Alt(\bmu)} \sum_{a=1}^K w_a \KL{\mu_a}{\lambda_a}$, the characteristic time for the constrained policy space\\
 $D(\bw, \bmu, \F)$ & $:$ & Shorthand for $\inf_{\blambda \in \Alt(\bmu)} \sum_{a=1}^K w_a \KL{\mu_a}{\lambda_a}$. \\
 $D(\bw, \bmu, \blambda)$ & $:$ & Shorthand for $\sum_{a=1}^K w_a \KL{\mu_a}{\lambda_a}$. \\
 $w^*(\bmu)$ & $:$ & Set of optimal allocations for bandit instance $\bmu$. \\
 $H$ &$\triangleq$  &$\frac{2\sigma^2}{\|\boldsymbol{\Delta}\|_2^2}$ quantifies complexity of bandit instance $\bmu$\\
\bottomrule
\end{longtable}

\newpage
\section{Lower bound on sample complexity}\label{app:lb}

The following lemma by \citet{Kaufmann16} provides a general information-theoretic inequality that applies to any bandit model.
\begin{lemma}[\citet{Kaufmann16}]\label{lm:garivier}
    Let $\bmu$ and $\blambda$ be two bandit models with $K$ arms such that $\mu_a$ and $\lambda_a$ are mutually continuous. For any almost surely finite stopping time $\tau$ we have \begin{align}
        \sum_{a=1}^K \E_{\bmu}[N_{a, \tau}] \KL{\mu_a}{\lambda_a} \geq  \binaryKL{P_{\bmu}(\event)}{P_{\blambda}(\event)}
    \end{align}
    where $\event$ is any measurable event with respect to the filtration generated by the observed history. 
\end{lemma}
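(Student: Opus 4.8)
The plan is to prove the inequality by a change-of-measure argument resting on two pillars: a Wald-type identity that evaluates the expected log-likelihood ratio between the two bandit models, and the data-processing inequality for KL-divergence applied to the binary partition $\{\event, \event^c\}$. First I would introduce the log-likelihood ratio of the observed trajectory. Writing $p_\theta$ for the reward density with mean $\theta$ (these are mutually absolutely continuous by the mutual-continuity hypothesis, so the ratio is well-defined), set
\begin{align*}
    L_t = \sum_{s=1}^t \log\frac{p_{\mu_{A_s}}(R_s)}{p_{\lambda_{A_s}}(R_s)}.
\end{align*}
This $L_t$ is precisely the logarithm of the Radon--Nikodym derivative relating $P_{\bmu}$ and $P_{\blambda}$ restricted to the history filtration $(\mathcal{G}_t)_{t \geq 0}$, so that $\E_{\bmu}[L_t]$ equals the KL-divergence between the two laws restricted to $\mathcal{G}_t$.

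Next, for the Wald identity I would form the process $M_t \triangleq L_t - \sum_{a=1}^K N_{a,t}\KL{\mu_a}{\lambda_a}$ and verify it is a $(\mathcal{G}_t)$-martingale under $P_{\bmu}$: conditioned on $\mathcal{G}_{s-1}$ the action $A_s$ is determined, and in model $\bmu$ the reward satisfies $R_s \sim P_{\mu_{A_s}}$, so the conditional mean of the $s$-th increment of $L_t$ is exactly $\KL{\mu_{A_s}}{\lambda_{A_s}}$, which cancels the matching increment of the sum. Applying the optional stopping theorem at the almost surely finite stopping time $\tau$ yields
\begin{align*}
    \E_{\bmu}[L_\tau] = \sum_{a=1}^K \E_{\bmu}[N_{a,\tau}]\KL{\mu_a}{\lambda_a}.
\end{align*}

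Finally, for the lower bound I would invoke the data-processing inequality. The indicator $\omega \mapsto \I\{\omega \in \event\}$ is a $\mathcal{G}_\tau$-measurable map, and KL-divergence contracts under any such channel, so the divergence between the pushed-forward Bernoulli laws $\mathrm{Ber}(P_{\bmu}(\event))$ and $\mathrm{Ber}(P_{\blambda}(\event))$ is at most the divergence between the full laws restricted to $\mathcal{G}_\tau$, i.e. $\binaryKL{P_{\bmu}(\event)}{P_{\blambda}(\event)} \leq \E_{\bmu}[L_\tau]$. Chaining this with the Wald identity gives the claim. I expect the main obstacle to be the careful treatment of the random time $\tau$: justifying optional stopping requires controlling the integrability of $L_\tau$ (or a standard localization argument), and the data-processing step must be stated at the stopped $\sigma$-algebra $\mathcal{G}_\tau$ rather than at a fixed horizon. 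The almost-sure finiteness of $\tau$ together with the mutual continuity of the arm distributions on the bounded domain $\Domain$ is precisely what makes both of these steps go through.
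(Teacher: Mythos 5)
Your proposal is correct and follows essentially the same route as the proof of this result in its original source, \citet{Kaufmann16}, which the paper imports as a cited lemma without reproving it: the log-likelihood ratio $L_t$, the Wald-type identity $\E_{\bmu}[L_\tau] = \sum_{a=1}^K \E_{\bmu}[N_{a,\tau}]\KL{\mu_a}{\lambda_a}$ at the almost surely finite stopping time, and the data-processing (contraction) inequality applied to the indicator of $\event \in \mathcal{G}_\tau$ are exactly the three ingredients of that proof. Your closing caveats (integrability/localization for optional stopping, and stating data processing at the stopped $\sigma$-algebra) are the right points to be careful about and are handled the same way in the original reference.
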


From Lemma~\ref{lm:garivier} we can directly derive a lower bound on the expected stopping time of any $\delta$-PAC algorithm in the constraint multi-armed bandit setting. We present this lower bound in Theorem~\ref{thm:lb} and the proof is virtually the same as the proof for the lower bound in \citet{garivier2016optimal}. We present it here for completeness.
\begin{theorem}[Lower bound on sample complexity under constraints]\label{thm:lb}
The stopping time $\tau$ of any $\delta$-PAC learner satisfy \begin{align}
     \E_{\bmu}[\tau] \geq \chartime(\bmu)\binaryKL{\delta}{1 - \delta}.
\end{align}

\end{theorem}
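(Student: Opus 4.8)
The plan is to follow the change-of-measure lower-bound argument of \citet{garivier2016optimal,Kaufmann16}, adapting only the final normalization step so that the induced allocation respects the feasible exploration set $\allocationset$. Throughout I assume $\E_{\bmu}[\tau] < \infty$, since otherwise the claimed bound is vacuous. First I would fix an arbitrary alternative $\blambda \in \Alt(\bmu)$ and take $\event$ to be the event that the learner recommends $\optpol$ upon stopping, which is measurable with respect to the history generated up to $\tau$. Under $\bmu$ the $\delta$-PAC property gives $P_{\bmu}(\event) \geq 1-\delta$. Under $\blambda$, the defining inequality of the Alt-set, $\max_{\pol \in \F}\blambda^\top\pol > \blambda^\top\optpol$, means $\optpol$ is \emph{not} optimal for $\blambda$; since the learner is $\delta$-PAC on $\blambda$ as well, it recommends $\optpol$ with probability at most $\delta$, i.e. $P_{\blambda}(\event) \leq \delta$. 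Applying Lemma~\ref{lm:garivier} with this $\event$ yields
\begin{align*}
\sum_{a=1}^K \E_{\bmu}[N_{a,\tau}]\,\KL{\mu_a}{\lambda_a} \geq \binaryKL{P_{\bmu}(\event)}{P_{\blambda}(\event)}.
\end{align*}

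Next I would lower-bound the right-hand side using the monotonicity of the binary relative entropy: $x\mapsto\binaryKL{x}{y}$ is nondecreasing for $x\geq y$ and $y\mapsto\binaryKL{x}{y}$ is nonincreasing, so from $P_{\bmu}(\event)\geq 1-\delta \geq \delta \geq P_{\blambda}(\event)$ we obtain $\binaryKL{P_{\bmu}(\event)}{P_{\blambda}(\event)} \geq \binaryKL{1-\delta}{\delta} = \binaryKL{\delta}{1-\delta}$. Hence for every $\blambda \in \Alt(\bmu)$,
\begin{align*}
\sum_{a=1}^K \E_{\bmu}[N_{a,\tau}]\,\KL{\mu_a}{\lambda_a} \geq \binaryKL{\delta}{1-\delta}.
\end{align*}

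I would then normalize by $\E_{\bmu}[\tau]$. Set $w_a \triangleq \E_{\bmu}[N_{a,\tau}]/\E_{\bmu}[\tau]$; since $\sum_a N_{a,\tau} = \tau$, the vector $\bw$ lies in $\Delta_{K-1}$, and I claim $\bw \in \allocationset$ in both scenarios. For Scenario 1 this is immediate because $\allocationset = \Delta_{K-1}$. For Scenario 2, the anytime constraint keeps each per-step exploration distribution $\bw_t$ in the convex set $\F$; writing $\E_{\bmu}[N_{a,\tau}] = \E_{\bmu}\big[\sum_{t=1}^{\tau} w_{t,a}\big]$, summing $B\bw_t \leq \boldsymbol{c}$ over $t\leq\tau$ and then taking expectations gives $B\bw \leq \boldsymbol{c}$, so $\bw \in \F = \allocationset$. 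Dividing the last display by $\E_{\bmu}[\tau]$ and taking the infimum over $\blambda \in \Alt(\bmu)$ gives
\begin{align*}
\frac{\binaryKL{\delta}{1-\delta}}{\E_{\bmu}[\tau]} \leq \inf_{\blambda \in \Alt(\bmu)}\sum_{a=1}^K w_a\,\KL{\mu_a}{\lambda_a} = D(\bw,\bmu,\F) \leq \sup_{\bw'\in\allocationset} D(\bw',\bmu,\F) = \chartime^{-1}(\bmu),
\end{align*}
and rearranging yields $\E_{\bmu}[\tau] \geq \chartime(\bmu)\binaryKL{\delta}{1-\delta}$, as desired.

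The step I expect to be the only genuine departure from the unconstrained proof of \citet{garivier2016optimal}, and hence the main point to get right, is verifying $\bw\in\allocationset$ under the anytime constraint of Scenario 2; the remaining change-of-measure and monotonicity arguments are entirely standard. The crucial structural fact is that $\F$ is a convex polytope, so the per-step feasibility $B\bw_t\leq\boldsymbol{c}$ survives time-averaging and expectation-normalization, which legitimizes taking the supremum defining $\chartime^{-1}(\bmu)$ over $\allocationset=\F$ rather than over the full simplex.
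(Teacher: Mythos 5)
Your proposal is correct and follows essentially the same change-of-measure argument as the paper's proof: apply Lemma~\ref{lm:garivier} to the recommendation event, lower-bound the resulting binary KL by $\binaryKL{\delta}{1-\delta}$, normalize by $\E_{\bmu}[\tau]$ to define $w_a = \E_{\bmu}[N_{a,\tau}]/\E_{\bmu}[\tau]$, take the infimum over $\Alt(\bmu)$, and pass to the supremum over $\allocationset$. The only place you go beyond the paper's write-up is in explicitly verifying that $\bw \in \allocationset$ under Scenario 2 (using linearity of the constraints and convexity of $\F$), a step the paper leaves implicit when it ``further maximises over $w_a$''; this is a welcome tightening rather than a divergence.
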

\begin{proof}
Let $\bmu$ and $\blambda \in \Alt(\bmu)$ be two bandit models with $K$ arms such that they do not share optimal policy, i.e. $\pol_{\bmu}^* \neq \pol_{\blambda}^*$. 

Let $\event$ denote the event of recommending $\pol_{\bmu}^*$ for any bandit instance at stopping using some $\delta$-PAC algorithm. Then using Lemma~\ref{lm:garivier}, and $\delta$-correctness of $\pol_{\bmu}^*$ for $\bmu$, we have 
\begin{align*}
    \sum_{a=1}^K \E_{\bmu}[N_{a, \tau}] \KL{\mu_a}{\lambda_a} \geq  \binaryKL{1- \delta}{\delta} = \binaryKL{\delta}{1 - \delta}.
\end{align*}
Further, we multiple and divide by $\E_{\bmu}[\tau]$ which yields 
\begin{align*}
    \sum_{a=1}^K \E_{\bmu}[N_{a, \tau}] \KL{\mu_a}{\lambda_a} &= \E_{\bmu}[\tau]\sum_{a=1}^K \frac{\E_{\bmu}[N_{a, \tau}]}{\E_{\bmu}[\tau]} \KL{\mu_a}{\lambda_a} \\
    &= \E_{\bmu}[\tau]\sum_{a=1}^K w_a \KL{\mu_a}{\lambda_a} \geq  \binaryKL{\delta}{1 - \delta}\: ,
\end{align*}
where $w_a \triangleq \frac{\E_{\bmu}[N_{a, \tau}]}{\E_{\bmu}[\tau]}$, and $\sum_{a=1}^K w_a =1$.

Since the above inequality is true for any $\blambda \in \Alt(\bmu)$, we have 
\begin{align*}
   \inf_{\blambda \in \Alt(\bmu)}  \E_{\bmu}[\tau]\sum_{a=1}^K w_a \KL{\mu_a}{\lambda_a} = \E_{\bmu}[\tau] \inf_{\blambda \in \Alt(\bmu)}  \sum_{a=1}^K w_a \KL{\mu_a}{\lambda_a} \geq  \binaryKL{\delta}{1 - \delta} \: .
\end{align*}
The equality is due to the fact that $\E_{\bmu}[\tau]$ is independent of $\blambda$.

Now, we further maximise over $w_a$ to get 
\begin{align*}
    \E_{\bmu}[\tau] \sup_{\bw \in \allocationset} \inf_{\blambda \in \Alt(\bmu)} \sum_{a=1}^K w_a \KL{\mu_a}{\lambda_a} \geq  \binaryKL{\delta}{1 - \delta}.
\end{align*}
Finally, using the definition of the characteristic time $\chartime(\bmu)$ yields 
\begin{align*}
    \E_{\bmu}[\tau] \geq \chartime(\bmu)\binaryKL{\delta}{1 - \delta} \: .
\end{align*}
\end{proof}

\subsection{Projection lemma for ${D(\bw, \bmu, \F)}$: Proof of Lemma~\ref{lm:proj}}
To derive the key properties of the optimal solution and the set of optimal allocations, as presented in Lemma~\ref{lm:properties}, we first explicate the set of optimal solutions, and then, use  Berge's theorem (Theorem~\ref{thm:berge}). 

\textbf{Step 1:} Recall that 
\begin{align*}
    \Alt(\bmu) = \left\{\blambda \in \Domain: \blambda \notin \Cone(\optpol) \right\} \: ,
\end{align*}
where the normal cone is expressed as 
\begin{align*}
    \Cone(\optpol) = \bigcap_{\pol' \in \neighbors} \left \{ \blambda \in \Domain: \blambda^\top (\optpol - \pol') \geq 0 \right \} \: .
\end{align*}
This is due to the fact that if $\optpol$ is not the optimal policy under the environment $\blambda$, there exists an improving direction in the simplex algorithm, i.e. a neighbor $\pol'$, such that $\blambda^\top (\optpol - \pol') < 0$. 

Now, since the set of alternative hypotheses is the compliment of the normal cone, we write 
\begin{align}\label{eq:decomposed_cone}
    \Alt(\bmu) = \bigcup_{\pol' \in \neighbors} \left \{ \blambda: \blambda^\top (\optpol - \pol') < 0 \right \}.
\end{align}
Applying Equation~\eqref{eq:decomposed_cone} in $D(\bw, \bmu, \F)$ leads to,
\begin{align*}
    D(\bw, \bmu, \F) &= \inf_{\blambda \in \Alt(\bmu)} \sum_{a=1}^K w_a \KL{\mu_a}{\lambda_a}
    = \min_{\pol' \in \neighbors} \inf_{\blambda: \blambda^\top(\pol' - \optpol) <0} \sum_{a=1}^K w_a \KL{\mu_a}{\lambda_a} \: . 
\end{align*}

\textbf{Step 2:} What remains to be shown is that the $\inf$ is attained by some $\lambda$ on $\blambda^\top(\pol' - \optpol) = 0$. 

For some $\pol'  \in \neighbors$ take an arbitrary $\blambda' \in \left\{\blambda: \blambda^\top(\pol' - \optpol) <0\right\}$. There exists an $\blambda'' \in \left \{ \blambda \in \Domain: \blambda^\top (\optpol - \pol') = 0 \right \}$ such that $|\mu_a - \lambda_a'| \geq |\mu_a - \lambda_a''|$ $\forall a$ due to the convexity of $\Domain$.  The mapping $y \rightarrow \KL{x}{y}$ is an increasing function on the domain $y>x$ and a decreasing function on $y < x$ which implies that 
\begin{align}\label{eq:close_boundary}
    \sum_{a=1}^K w_a \KL{\mu_a}{\lambda_a'} \geq \sum_{a=1}^K w_a \KL{\mu_a}{\lambda_a''}.
\end{align}
There exists a sequence $\{\blambda_t\}_{t=1}^\infty \subset \left\{ \blambda: \blambda^\top (\optpol - \pol') < 0 \right \} $ such that $\blambda_0 = \blambda'$ and $\lim_{t\rightarrow \infty} \blambda_t= \blambda''$. Hence, we can for any $\blambda'$ get arbitrary close to some $\blambda''$ such that Equation~\eqref{eq:close_boundary} holds. 

Due to continuity of $\KL{x}{.}$, the $\inf$ is attained by some $\blambda'' \in \left \{ \blambda \in \Domain: \blambda^\top (\optpol - \pol') = 0 \right \}$. Hence, we conclude the proof.

\subsection{Properties of $D(\bw, \bmu, \F)$: Proof of Theorem~\ref{lm:properties}}
\underline{Property (a-b).}  We first note that the function $D(\bw, \bmu, \blambda) \triangleq \sum_{a=1}^K w_a \KL{\mu_a}{\lambda_a}$ is continuous in all elements. Take any $(\bw, \bmu)$ such that the optimal policy in $\F$ is unique.  Let $(\bw_t, \bmu_t)_{t\geq 1}$ be a sequence in $\allocationset \times \Domain$ such that 
\begin{align*}
      (\bw_t, \bmu_t) \xrightarrow{t \rightarrow \infty} (\bw, \bmu).
  \end{align*}
  Further,  for any $\epsilon > 0$ there exists a $t'\geq 1$ such that $||(\bw, \bmu) - (\bw_t, \bmu_t) ||_2 < \epsilon $ and $\Alt(\mu) = \Alt(\mu_t)$ $\forall t \geq t'$. By continuity of $D(\bw, \bmu, \blambda)$ we have that for any $\epsilon' > 0$ there exists exists an $t'' \geq 1$ such that for $t\geq t''$, we have 
  \begin{align*}
      |D(\bw_t, \bmu_t, \blambda) - D(\bw, \bmu, \blambda)| \leq \epsilon', \forall \lambda \in \R^K.
  \end{align*}
  Thus, by taking $t \geq t', t''$ leads to 
  \begin{align*}
      |D(\bw, \bmu, \F) - D(\bw_t, \bmu_t, \F)| &= \left|\inf_{\lambda \in \Alt(\mu)}D(\bw, \bmu, \lambda) - \inf_{\lambda \in \Alt(\mu_t)}D(\bw_t, \bmu_t, \lambda) \right| \\
  & \leq \left|\inf_{\lambda \in \Alt(\mu)}\left(D(\bw, \bmu, \lambda) - D(\bw_t, \bmu_t, \lambda)\right) \right| \\
  & \leq \epsilon' \: ,
  \end{align*}
  which establishes the continuity properties.
  
\underline{Property (c).}  The upper hemicontinuity of $w^*(\bmu)$ and continuity of $D(\bmu, \F)$ follows from Berge's maximum theorem, see Theorem~\ref{thm:berge}, by letting $f(x, \theta) = D(\bw, \bmu, \F)$ and $C(\theta)=\allocationset$. As a consequence of Berge's theorem (Theorem~\ref{thm:berge}), we substitute the $\sup_w$ with $\max_w$.

\underline{Property (d).}  The convexity of the set $w^*(\bmu)$ follows from the fact that it is the set of optimal solutions to $\max_{w \in \allocationset} D(\bw, \bmu, \F)$ and $D(\bw, \bmu, \F)$ is concave (Specifically, it is linear in $\bw$).
  
\subsection{Projective representation of characteristic time for Gaussians: Proof of Theorem~\ref{cor:projection}}
For two bandit instances $\bmu$ and $\blambda$ consisting of Gaussian distributions with same variance $\sigma^2$, we have 
\begin{align*}
   D(\bw, \bmu, \F) &= \min_{\blambda: \blambda^\top (\optpol - \pol') = 0} \sum_{a=1}^K w_a\frac{1}{2\sigma^2}(\mu_a - \lambda_a)^2 \: .
\end{align*}
Now, by introducing the Lagrange multiplier $\gamma$, we obtain 
\begin{align}\label{eq:lagrangian}
   L(\gamma, \blambda) \triangleq \frac{1}{2\sigma^2}\sum_{a=1}^K w_a(\mu_a - \lambda_a)^2 - \gamma \blambda^\top (\optpol - \pol').
\end{align}
For brevity, we denote $v \triangleq (\optpol - \pol')$.

Computing the gradient $\nabla_{\lambda} L(\gamma, \blambda)$ and equating it to $0$ yields \begin{align*}
    \lambda_a &= \mu_a + \frac{\gamma \sigma^2}{w_a} v_a.
\end{align*}
Substituting $\lambda_a$ in Equation~\eqref{eq:lagrangian} yields
\begin{align}
    L(\gamma) = \min_\lambda L(\gamma, \blambda) &= \frac{\sigma^2 \gamma^2}{2} \sum_{a=1}^K \frac{v_a^2}{w_a} -
    \gamma \bmu^\top v - \sum_{a=1}^K \frac{\gamma^2 \sigma^2}{w_a} v_a^2 \notag \\
    &= -\frac{\sigma^2 \gamma^2}{2} \sum_{a=1}^K \frac{v_a^2}{w_a} -
    \gamma \bmu^\top v \: . \label{eq:L_gamma}
\end{align}
Maximizing over $\gamma$ yields 
\begin{align*}
    \gamma = \frac{- \bmu^\top v}{\sigma^2 \sum_a \frac{v_a^2}{w_a}} \: ,
\end{align*}
and putting it back in Equation~\eqref{eq:L_gamma} gives the final expression of $\lambda_a$
\begin{align}
    \lambda_a = \mu_a - \frac{v_a}{w_a} \left( \frac{\bmu^\top v }{\sum_a \frac{v_a^2}{w_a}}\right)  \: .
\end{align}

\newpage
\subsection{Upper and lower bounding the characteristic time: Proof of Corollary~\ref{prop:lb_lower_bound}}
\textbf{Lower bound on the characteristic time:} To lower bound $\chartime(\bmu) $, we need to upper bound the RHS in Equation~\eqref{eq:char_time}, i.e. $\chartime(\bmu)^{-1} = \sup_{\bw} \min_{\lambda} \sum_{a} w_a \KL{\mu_a}{\lambda_a}$. 

\textbf{Step 1:} We first observe that 

$$\sup_{\bw} \min_{\lambda} \sum_{a} w_a \KL{\mu_a}{\lambda_a} = \max_{\bw} \min_{\lambda} \sum_{a} w_a \KL{\mu_a}{\lambda_a},$$ 

due to Berge's theorem. 
Further, the max-min inequality gives  

$$\max_{\bw} \min_{\lambda} \sum_{a} w_a \KL{\mu_a}{\lambda_a} \leq  \min_{\lambda} \max_{\bw} \sum_{a} w_a \KL{\mu_a}{\lambda_a}.$$ 

\textbf{Step 2:} We proceed to upper bound $\max_{\bw} \sum_{a} w_a \KL{\mu_a}{\lambda_a}$ for each neighbor $\pol' \in \neighbors$ independently. 

For a fixed $\pol' \in \mathcal{V}_{\F}(\optpol)$, Theorem~\ref{cor:projection} tells us that \begin{align*}
       \min_{\lambda: \lambda^\top(\optpol - \pol')=0} \sum_{a=1}^K w_a \KL{\mu_a}{\lambda_a} &= \frac{\gamma^2 }{2\sigma^2} \sum_{a=1}^K \frac{(\optpol - \pol')^2_a}{w_a}  \\
     &= \left(\frac{\bmu^\top \left(\optpol - \pol' \right) }{\sum_a \frac{(\optpol - \pol')^2}{w_a}} \right)^2 \frac{1}{2\sigma^2} \sum_{a=1}^K \frac{(\optpol - \pol')^2_a}{w_a} \\
     &= \frac{1}{2\sigma^2} \frac{\left(\bmu^\top \left(\optpol - \pol' \right) \right)^2}{\sum_{a=1}^K \frac{(\optpol - \pol')^2_a}{w_a}}.
\end{align*}

\textbf{Step 3:} We further minimize the expression $\frac{(\optpol - \pol')^2_a}{w_a}$ under the constraint $\sum_a w_a = 1$. 

Using Langrange multiplier technique, we get 
\begin{align*}
     w_a = \frac{|(\optpol - \pol')|_a}{\sum_{a=1}^K|(\optpol - \pol')|_a}
\end{align*}
which yields that $\frac{(\optpol - \pol')^2_a}{w_a} \geq \|\optpol - \pol'\|_1^2$. Hence, 
\begin{align*}
    \frac{1}{2\sigma^2} \frac{\left(\bmu^\top \left(\optpol - \pol' \right) \right)^2}{\sum_{a=1}^K \frac{(\optpol - \pol')^2_a}{w_a}} \leq \frac{1}{2\sigma^2} \frac{\left(\bmu^\top \left(\optpol - \pol' \right) \right)^2}{\|\optpol - \pol'\|_1^2} \leq \frac{1}{2\sigma^2} \frac{\left(\bmu^\top \left(\optpol - \pol' \right) \right)^2}{\|\optpol - \pol'\|_2^2}.
\end{align*}
Here, the last part is exactly , $\frac{1}{2}d^2_{\pol'}$, i.e. the squared distance between $\bmu$ and the hyperplane $\optpol - \pol=0$.

Thus, we conclude the lower bound.

\textbf{Upper bound on the characteristic time:} To obtain the upper bound, we aim to lower bound the inverse $\chartime(\bmu)^{-1} = \sup_{\bw} \min_{\lambda} \sum_{a} w_a \KL{\mu_a}{\lambda_a} $. 

We let ${w}_a = \frac{1}{K}, \forall a$, and observe that

$$\max_{w} \min_{\lambda} \sum_a w_a \KL{\mu_a}{\lambda_a} \geq \min_{\lambda} \frac{1}{K}\sum_a \KL{\mu_a}{\lambda_a}.$$ 
For some $\pol' \in \mathcal{V}(\optpol)$ and using Theorem~\ref{cor:projection} with $w_a=\frac{1}{K}, \forall a$, we get 
\begin{align*}
    \frac{1}{K}\sum_a \KL{\mu_a}{\lambda_a} &= \frac{1}{2\sigma^2 K } \frac{\left(\bmu^\top \left(\optpol - \pol' \right) \right)^2}{\|\optpol - \pol' \|_2^2} = d_{\pol'}^2 \frac{1}{2\sigma^2K}
\end{align*}
This concludes the upper bound on the characteristic time.

\subsection{Impact of linear constraints: Proof of Corollary~\ref{prop:spectral}}
\textbf{Step 1: Neighboring policies and rank-1 update.} let $\hat{B} \in \R^{K \times K}$ be a set of linearly independent constraints at $\optpol$ and $\hat{\boldsymbol{c}}$ be the corresponding values in $\boldsymbol{c}$ such that $\pol^* =\hat{B}^{-1} \hat{\boldsymbol{c}}$. For any $\pol' \in \neighbors$ we let $ B'^{-1}$ and $ \boldsymbol{c}'$ be the constraints such that $\pol' = B'^{-1} \boldsymbol{c}'$.

Specifically, $B'$ and $\boldsymbol{c}'$ can be retrieved from the following rank-1 updates 
\begin{align*}
    B' &= \hat{B} + \boldsymbol{e}_r(\boldsymbol{b}_r' - \hat{\boldsymbol{b}}_r)^\top \: ,\\
    \boldsymbol{c}' &= \widehat{\boldsymbol{c}} + (c_r' - c_r)  \boldsymbol{e}_r \: ,
\end{align*}
where $\hat{\boldsymbol{b}}_r$ a column vector corresponding to the constraint on the $r$-th row of $\hat{B}$ that we swap with $\boldsymbol{b}_r'$ in order to get $B'$ and $\boldsymbol{e}_r$ a column vector with all elements equal to $0$ except the $r$-th element which is equal to $0$.
Similarly, $(c_r' - c_r) \neq 0$ is the change that we perform on the $r$-th element in $\hat{\boldsymbol{c}}$ to get $\boldsymbol{c}'$. 

\textbf{Step 2: From perturbation in constraints to perturbations in policies.} Now, we observe that 
\begin{align*}
    B'\pol' - \hat{B}\optpol = (c_r' - c_r)\boldsymbol{e_r}\: .
\end{align*}
Since $\hat{B}$ is invertible, further rearrangement yields 
\begin{align*}
    \pol' - \optpol &= \hat{B}^{-1}\left((c_r' - c_r)  \boldsymbol{e}_r + \boldsymbol{e}_r(\hat{\boldsymbol{b}}_r - \boldsymbol{b}_r')^\top \pol' \right)\\
    &=\hat{B}^{-1}\left((c_r' - c_r) \boldsymbol{e}_r + \boldsymbol{e}_r\hat{\boldsymbol{b}}_r^\top \pol' - \boldsymbol{e}_r\boldsymbol{b}_r'^\top \pol' \right)\\
    &=\hat{B}^{-1}\left((c_r' - c_r) \boldsymbol{e}_r + \boldsymbol{e}_r\hat{\boldsymbol{b}}_r^\top \pol' - c_r'\boldsymbol{e}_r \right)\\
    &=\hat{B}^{-1}\left((\hat{\boldsymbol{b}}^\top_r\pol' - c_r) \boldsymbol{e}_r \right)
\end{align*}
The last part is the slack of $\pol'$ at the $r$-th constraint in $\hat{B}$, hereby referred to as $\xi$.

We bound the norm of $\hat{B}^{-1}\boldsymbol{e}_r$ as follows \begin{align*}
    \sigma_{\min}(\hat{B}^{-1}) = \inf_{\boldsymbol{v}: \|v\|_2=1}\|\hat{B}^{-1} v \|_2 \leq \|\hat{B}^{-1} \boldsymbol{e}_r \|_2 \leq \sup_{\boldsymbol{v}: \|v\|_2=1}\|\hat{B}^{-1} v \|_2 = \sigma_{\max}(\hat{B}^{-1})
\end{align*}
where $\sigma_{\min}(\hat{B}^{-1}) $ and $\sigma_{\max}(\hat{B}^{-1}) $ denote the smallest and largest singular value of $\hat{B}^{-1}$.
From the properties of the inverse, we get 
\begin{align*}
  \frac{1}{\sigma_{\max(\hat{B})}} \leq \|\hat{B}^{-1} \boldsymbol{e}_r \|_2 \leq   \frac{1}{\sigma_{\min(\hat{B})}}.
\end{align*}

Thus, we obtain a lower and upper bound on the perturbation in policies
\begin{align}
   \frac{|\xi|}{\sigma_{\max(\hat{B})}} \leq \|\pol' - \optpol\|_2 \leq   \frac{|\xi|}{\sigma_{\min(\hat{B})}}.
\end{align}

Now, using this new representation of change in policy in terms of the slacks in the constraints, we derive our two results.

\textbf{Step 3 for Part (a): A perspective of the zero-sum game.}
To get the expression in Equation~\eqref{eq:slack_diff} we simply take the expression for $\optpol - \pol'$, developed in the previous step, and plug into the expression of the characteristic time in Theorem~\ref{cor:projection}.  Hence, \begin{align*}
    \frac{1}{2\sigma^2} \frac{ \|\optpol - \pol'\|^2_{\bmu\bmu^{\top}}}{ \|\optpol - \pol'\|^2_{\mathrm{Diag}(1/w_a)}} &=  \frac{1}{2\sigma^2} \frac{ \|\hat{B}^{-1}\left((\hat{\boldsymbol{b}}^\top_r\pol' - c_r) \boldsymbol{e}_r \right)\|^2_{\bmu\bmu^{\top}}}{ \|\hat{B}^{-1}\left((\hat{\boldsymbol{b}}^\top_r\pol' - c_r) \boldsymbol{e}_r \right)\|^2_{\mathrm{Diag}(1/w_a)}} \\
    &=\frac{1}{2\sigma^2} \frac{ \|\hat{B}^{-1}\left(\xi \boldsymbol{e}_r \right)\|^2_{\bmu\bmu^{\top}}}{ \|\hat{B}^{-1}\left(\xi \boldsymbol{e}_r \right)\|^2_{\mathrm{Diag}(1/w_a)}} \\
    &= \frac{1}{2\sigma^2} \frac{ \|\hat{B}^{-1}\left(\boldsymbol{e}_r \right)\|^2_{\bmu\bmu^{\top}}}{ \|\hat{B}^{-1}\left(\boldsymbol{e}_r \right)\|^2_{\mathrm{Diag}(1/w_a)}} \\
    &= \frac{1}{2\sigma^2} \frac{ (\boldsymbol{\Delta}^\top \hat{B}^{-1}\left(\boldsymbol{e}_r \right))^2}{ \|\hat{B}^{-1}\left(\boldsymbol{e}_r \right)\|^2_{\mathrm{Diag}(1/w_a)}}.
\end{align*}
This gives the following expression for the characteristic time \begin{align*}
    \chartime(\bmu)^{-1} = \max_{\bw \in \allocationset} \min_{\pol' \in \neighbors} \frac{1}{2\sigma^2} \frac{ \left(\boldsymbol{\Delta}^\top \hat{B}_{\optpol}^{-1}\boldsymbol{e}_{r'} \right)^2}{ \|\hat{B}_{\optpol}^{-1}\boldsymbol{e}_{r'}\|^2_{\mathrm{Diag}(1/w_a)}}.
\end{align*}
This formulation of the inverse characteristic time allows us to perceive it as a zero-sum $\max-\min$ game, where the max-player chooses an exploration allocation and the min-player swaps one of the active constraints, at the optimal policy, with one inactive constraint.

\textbf{Step 3 for Part (b): Bounds on characteristic time from perturbation in policies.} From Corollary~\ref{prop:lb_lower_bound} we have 
\begin{align*}
\frac{1}{2\sigma^2} \frac{\left(\bmu^\top \left(\optpol - \pol' \right) \right)^2}{\|\optpol - \pol'\|_2^2} &= \frac{1}{2\sigma^2} \frac{\left(\bmu^\top \left(\optpol - \pol' \right) - \mu^*  \mathbf{1}^\top \left(\optpol - \pol' \right) \right)^2}{\|\optpol - \pol'\|_2^2}\\
     &= \frac{1}{2\sigma^2} \frac{\left(\bmu -\mu^*  \mathbf{1}\right)^\top \left(\optpol - \pol' \right)^2}{\|\optpol - \pol'\|_2^2}\\
    &= \frac{1}{2\sigma^2} \frac{\left(\boldsymbol{\Delta}^\top \hat{B}^{-1}\boldsymbol{e}_r \right)^2}{\|\hat{B}^{-1}\boldsymbol{e}_r \|_2^2} \\
    &\leq \frac{\|\boldsymbol{\Delta}\|_2^2}{2\sigma^2} \frac{\sigma_{\max}^2(\hat{B})}{\sigma^2_{\min}(\hat{B})} \: . 
\end{align*}  

\textbf{Step 4 for Part (b): Concluding with complexity of bandit instance and constraints.} By referring to $\kappa(\hat{B}) \triangleq \frac{\sigma_{\max}(\hat{B})}{\sigma_{\min}(\hat{B})}$ as the condition number of $\hat{B}$, and $H \triangleq \frac{2\sigma^2}{\|\boldsymbol{\Delta}\|_2^2}$ as the quantifier complexity of bandit instance $\bmu$, we get
\begin{align*}
    \chartime(\bmu)^{-1} \leq \min_{\pol' \in \neighbors}  \frac{\kappa^2(\hat{B})}{H} \: .
\end{align*}

Hence, for any $\bmu$, we have that $\chartime(\bmu) \geq \frac{H}{\kappa^2}$, 
where $\kappa^2$ is the minimum condition number of any sub-matrix $\hat{B} \in \R^{K \times K}$ of $B$ consisting of $K$ linearly independent active constraints at $\optpol$. This leads to a lower bound 
\begin{align*}
    \E[\tau] \geq \Omega\left(\frac{H}{\kappa^2} \binaryKL{\delta}{1-\delta}\right).
\end{align*}

\subsection{Theorem~\ref{cor:projection} reduces to the standard BAI bounds with simplex constraints}
Recall the theorem statement:\\
If the arms follow Gaussian distributions with identical variance $\sigma^2$ and $w_a > 0$ $\forall a$, we have that the projection $\min_{\blambda \in \Domain: \blambda^\top(\optpol - \pol') \leq 0} \sum_{a=1}^K w_a \KL{\mu_a}{\lambda_a}$ for any $\pol' \in \mathcal{V}_{\F}({\optpol})$ is satisfied by 
\begin{align}\label{eq:lagrange_proj}
    \lambda_{a, \pol'} = \mu_a - \gamma \frac{(\optpol - \pol')_a}{w_a},
\end{align}
for $\gamma = \frac{\bmu^\top \left(\optpol - \pol' \right) }{\sum_a \frac{(\optpol - \pol')^2}{w_a}}$, and the characteristic time is
\begin{align*}
   \chartime(\bmu)^{-1} &= \max_{\bw \in \allocationset} \min_{\pol' \in \neighbors} \frac{1}{2\sigma^2} \frac{\left(\bmu^\top \left(\optpol - \pol' \right) \right)^2}{\sum_{a} \frac{1}{w_a}(\optpol - \pol')^2_a} \\
   &= \max_{\bw \in \allocationset} \min_{\pol' \in \neighbors} \frac{1}{2\sigma^2} \frac{ \|\optpol - \pol'\|^2_{\bmu\bmu^{\top}}}{ \|\optpol - \pol'\|^2_{\mathrm{Diag}(1/w_a)}}
\end{align*}
Here, $\mathrm{Diag}(1/w_a)$ is a diagonal matrix with $a$-th entry of the diagonal as $1/w_a$.

In the case of simplex constraints all extreme points corresponds to deterministic policies and we let $\pol_a$ corresponds to the policy that only plays arm $a$ and let $\pol^* = \pol_1$. For some $\pol_a$ we have, due to Equation~\eqref{eq:lagrange_proj}, \begin{align*}
    \lambda_{a', \pol_a} = \mu_{a'}, \forall a' \neq 1, a
\end{align*}
we further have $\gamma = \frac{\Delta_a}{\frac{1}{w_1} + \frac{1}{w_a}}$ and \begin{align*}
    \lambda_{1, \pol_a} &= \mu_1 - \frac{\mu_1 - \mu_a}{\frac{1}{w_1} + \frac{1}{w_a}}\frac{1}{w_1} = \mu_1 - w_a\frac{\mu_1 - \mu_a}{w_1 + w_a}  =
    \frac{1}{{w_1 + w_a} }\left(w_1\mu_1 + w_a \mu_a \right)
    \\
    \lambda_{a, \pol_a} &= \mu_a + \frac{\mu_1 - \mu_a}{\frac{1}{w_1} + \frac{1}{w_a}}\frac{1}{w_a} = \mu_a + w_1\frac{\mu_1 - \mu_a}{w_1 + w_a} = 
    \frac{1}{w_1 + w_a}\left(w_1\mu_1 + w_a\mu_a\right).
\end{align*}
Hence, $\lambda_{1, \pol_a}= \lambda_{a, \pol_a}$ and these are exactly the confusing instance one gets, for each arm $a$, in the BAI setting~\citep{Kaufmann16}. Plugging back into the expression for the characteristic time yields \begin{align*}
   \chartime(\bmu)^{-1} =  \max_w \min_a \frac{w_1w_a}{w_1 + w_a} \Delta_a^2.
\end{align*}

\newpage
\section{Upper bounds on sample complexity}\label{app:upper}

\subsection{Stopping criterion}

\begin{lemma}[\citet{Magureanu14}]\label{lm:mag}
$\forall \gamma > K + 1$ and $t \in \mathbb{N}$ it holds \begin{align*}
    P\left(\sum_{a=1}^K N_{a, t} \KL{\hat{\mu}_a}{\mu_a} \geq \gamma \right) \leq e^{-\gamma} \left(\frac{\lceil\gamma \log t\rceil \gamma}{K} \right)^K e^{K+1}
\end{align*}
\end{lemma}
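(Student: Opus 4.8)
The statement is a time-uniform, self-normalized deviation bound of the kind established by \citet{Magureanu14}, so the plan is to reproduce its three-layer structure: a single-arm Chernoff bound, a peeling over the number of pulls to handle the adaptively chosen counts $N_{a,t}$, and a union over how the total budget $\gamma$ is distributed across arms. The key structural observation is that, because reward streams from distinct arms are mutually independent and each arm's stream is i.i.d.\ in its own pull-index, all of the adaptivity of the sampling rule can be absorbed into a ``uniform over $s \le t$'' statement per arm, after which per-arm events may be multiplied.

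First I would set up the single-arm exponential martingale. For a single-parameter exponential family, $s\,\KL{\hat{\mu}_{a,s}}{\mu_a}$ equals the log-likelihood ratio of the maximum-likelihood model against the true mean $\mu_a$ after $s$ pulls, so $\exp(\eta(S_{a,s}-s\mu_a)-s\psi_a(\eta))$ is a mean-one martingale. A Chernoff bound then gives, for each fixed $s$, $P(s\,\KL{\hat{\mu}_{a,s}}{\mu_a}\ge x)\le e^{-x}$ on the relevant monotone side of $\mu_a$. Since $N_{a,t}$ is data-dependent I cannot fix the counts; instead I make the bound uniform in $s$ by a peeling argument over $\{1,\dots,t\}$, which costs a factor of order $\lceil x\log t\rceil$ and yields $P\big(\exists s\le t:\ s\,\KL{\hat{\mu}_{a,s}}{\mu_a}\ge x\big)\le \lceil x\log t\rceil\, e^{-x}$.

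Next I would carry out the budget-splitting union bound. On the event $\sum_{a} N_{a,t}\KL{\hat{\mu}_a}{\mu_a}\ge\gamma$ there exist nonnegative integers $m_a$ with $\sum_a m_a\ge\gamma-K$ and $N_{a,t}\KL{\hat{\mu}_a}{\mu_a}\ge m_a$ for every $a$. Union bounding over such integer vectors and using independence across arms together with the per-arm time-uniform bound gives a sum of the form $\sum_{\sum_a m_a\ge\gamma-K}\prod_{a}\lceil m_a\log t\rceil\,e^{-m_a}$, which I would bound by $(\lceil\gamma\log t\rceil)^K e^{-\gamma}$ times a count of admissible allocations. Converting the binomial count of allocations via Stirling's estimate $K!\ge (K/e)^K$ produces the $(\,\cdot\,/K)^K e^{K}$ shape, and collecting the residual constant together with the geometric sum over the budget (which converges precisely because $\gamma>K+1$) yields the stated $e^{-\gamma}\big(\tfrac{\lceil\gamma\log t\rceil\,\gamma}{K}\big)^K e^{K+1}$.

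The main obstacle is the interplay between the peeling step and the counting step: keeping a clean $e^{-x}$ tail while extracting only a $\lceil x\log t\rceil$ (rather than $t$) prefactor from the time-uniformity, and then discretizing the budget so that the combinatorial count of allocations collapses, under Stirling, to exactly $(\lceil\gamma\log t\rceil\,\gamma/K)^K e^{K+1}$. The role of the hypothesis $\gamma>K+1$ is exactly to make the summation over budget allocations a convergent geometric-type series, so I would track that constraint carefully through the final estimate rather than treating it as cosmetic.
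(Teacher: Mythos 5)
The first thing to note is that the paper contains \emph{no proof} of this statement: the lemma is quoted, with attribution, from \citet{Magureanu14}, and the appendix merely applies it to justify the correctness of the stopping threshold $c(t,\delta)$ of Lemma~\ref{cor:stopping}. So there is no in-paper argument to compare yours against; the relevant benchmark is the proof in the cited source. Measured against that, your three-layer plan --- a per-arm Chernoff bound from the exponential-family likelihood-ratio martingale, peeling over $s\le t$ at a cost of $\lceil x\log t\rceil$, and a union bound over integer splittings of the budget $\gamma$ across the $K$ arms, collapsed by Stirling's inequality $K!\ge (K/e)^K$ --- is indeed the architecture of the original argument (a $K$-dimensional extension of the Garivier--Capp\'e time-uniform deviation bound), and your reduction of the adaptive counts $N_{a,t}$ to per-arm time-uniform events, which are then independent across arms and whose probabilities multiply, is the right way to dispose of the sampling rule's adaptivity.

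What keeps your write-up a plan rather than a proof is the constant bookkeeping, which here is precisely the substance of the lemma. Concretely: (i) the per-arm bound $P\bigl(s\,\KL{\hat{\mu}_{a,s}}{\mu_a}\ge x\bigr)\le e^{-x}$ is one-sided; handling both sides of the KL ball na\"ively costs $2^K$ (or $e^K$ via the Garivier--Capp\'e two-sided version) across arms, which together with the $e^K$ from Stirling and the $e^{K}$ from bounding $e^{-\sum_a m_a}\le e^{K}e^{-\gamma}$ already overshoots the stated $e^{K+1}$, so these factors cannot simply be stacked --- they must be folded into a single, more careful peeling. (ii) Your union bound ranges over all integer vectors with $\sum_a m_a\ge\gamma-K$, an infinite family; it should be restricted to compositions with equality (the event implies one such vector), whose number is a binomial coefficient of order $\gamma^{K-1}/(K-1)!$, and it is this finite count, not a geometric series, that Stirling converts into the $(\gamma/K)^K e^K$ shape. (iii) Consequently your reading of the hypothesis $\gamma>K+1$ as a convergence condition is misattributed --- the sum you write down converges for every $\gamma$; its role is rather to keep the residual budget $\gamma-K$ bounded away from zero so that the discretization is non-degenerate and the combinatorial count can be absorbed into the stated form. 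None of these is a fatal obstruction (they are resolved in \citet{Magureanu14}), but as written, the claim that everything collapses to exactly $e^{-\gamma}\bigl(\lceil\gamma\log t\rceil\,\gamma/K\bigr)^K e^{K+1}$ is asserted rather than established.
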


The correctness of our stopping rule in Equation~\eqref{eq:stopping} follows easily from Lemma~\ref{lm:mag}. Let $\pol_\tau$ be our recommendation at stopping \begin{align*}
    P(\pol_\tau \neq \optpol) &\leq P\left(\exists t \in \mathbb{N}: \sum_{a=1}^K N_{a, t} \KL{\hat{\bmu}_{a, t}}{\bmu_a} \geq c(t, \delta)\right) \\
    &\leq  \sum_{t=1}^\infty e^{-c(t, \delta)} \left(\frac{\lceil c(t, \delta) \log t\rceil c(t, \delta)}{K} \right)^K e^{K+1}.
\end{align*}
We plug in $c(t, \delta)=\log \frac{t^\alpha C}{ \delta}$ and choose $C$ such that  \begin{align*}
    \sum_{t=1}^\infty \left(\frac{\lceil c(t, \delta) \log t\rceil c(t, \delta)}{K} \right)^K e^{K+1} \leq C
\end{align*} 
which yields \begin{align*}
    P(\pol_\tau \neq \optpol) \leq \delta.
\end{align*}

\newpage
\subsection{Upper bound for CTnS}\label{app:ctns}
\begin{proof}[Proof of Theorem~\ref{thm:ctns}] $ $\\
\textbf{Step 1: Defining Good Event.} Let $T \in \mathbb{N}$. For $\epsilon>0$ and $h(T)=\sqrt{T}$,  let $\event_T$ be the event $$\event_T \triangleq \bigcap_{t=h(T)}^T \left\{\|\hat{\bmu}_t - \bmu\|_\infty \leq \xi(\epsilon)\right\},$$ where $\xi(\epsilon) < \max_{\pol' \in \neighbors} \frac{1}{4\sqrt{K}}\bmu^\top \left(\optpol - \pol' \right) $ is such that \begin{align*}
    \|\bmu' - \bmu\|_\infty \leq \xi(\epsilon) \implies \sup_{\bw' \in w^*(\bmu')} \sup_{\bw \in w^*(\bmu)} \|\bw' - \bw \| \leq \epsilon
\end{align*}
 This $\xi(\epsilon)$ exists due to the upper hemicontinuity of $w^*(\bmu)$, Theorem~\ref{lm:properties}. 

\textbf{Step 2: Concentrating to Good Event}. We will make use of the following Lemma from \citet{garivier2016optimal} which bounds the probability of the compliment $\event_T^c$.
\begin{lemma}[Concentration around means~\citep{garivier2016optimal}]
    There exist two constants $B, C$ such that \begin{align*}
        P(\event_T^c) \leq BT \exp{\left(-CT^{\frac{1}{8}}\right)}
    \end{align*}
\end{lemma}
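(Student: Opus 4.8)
The plan is to control $\event_T^c$ by a union bound over the window $[h(T),T]$ and over arms, and then to bound each single deviation using two ingredients: a \emph{deterministic} lower bound on the number of pulls forced by the exploration built into C-tracking, and a sub-Gaussian tail inequality made uniform over the random pull count. First I would write the complement as
\begin{align*}
\event_T^c = \bigcup_{t = h(T)}^{T} \bigcup_{a=1}^{K} \left\{ |\hat{\mu}_{a,t} - \mu_a| > \xi(\epsilon) \right\},
\end{align*}
so that $P(\event_T^c) \le KT \max_{a,t} P\!\left(|\hat{\mu}_{a,t} - \mu_a| > \xi(\epsilon)\right)$. The structural point is that $\xi(\epsilon)$ is a fixed constant, determined only by the modulus of upper hemicontinuity of $w^*(\cdot)$ from Theorem~\ref{lm:properties} and \emph{not} by $T$; hence the entire $T$-dependence must land in the exponent of the single-arm deviation probability.

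Next I would extract a deterministic floor on the pull counts. The tracking rule plays $A_t$ so as to match the averaged projected allocations $w^*_{a,s,\epsilon_s}$, which by construction place mass at least $\epsilon_s = \tfrac{1}{2\sqrt{K^2+s}}$ on every arm. Combining the C-tracking guarantee with $\sum_{s=1}^{t}\epsilon_s = \Theta(\sqrt{t})$ gives, for all $a$ and all $t$ large enough, a bound $N_{a,t} \ge n_0(t)$ where $n_0(t)$ is a positive power of $t$. Restricting to $t \ge h(T)$ turns this into a $T$-dependent floor $N_{a,t} \ge n_0(T) > 0$ that holds surely, because it comes from the (outcome-independent) exploration schedule rather than from any concentration event.

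The delicate step is that $\hat{\mu}_{a,t}$ averages over the \emph{random} count $N_{a,t}$, so a fixed-sample Hoeffding bound does not apply verbatim. I would introduce $\hat{\mu}_{a,n}$, the empirical mean of the first $n$ pulls of arm $a$, note $\hat{\mu}_{a,t} = \hat{\mu}_{a, N_{a,t}}$, and peel over the number of pulls using the sub-Gaussianity of Assumption~1:
\begin{align*}
P\!\left( |\hat{\mu}_{a,t} - \mu_a| > \xi(\epsilon) \right) \le \sum_{n \ge n_0(T)} 2\exp\!\left( -\frac{n\,\xi(\epsilon)^2}{2\sigma^2} \right) \le c\,\exp\!\left( -\frac{n_0(T)\,\xi(\epsilon)^2}{2\sigma^2} \right),
\end{align*}
the last inequality being the sum of a convergent geometric series. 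Since the exponent is linear in the pull floor $n_0(T)$, substituting the power of $T$ from the previous step and multiplying by the at most $KT$ union terms yields $P(\event_T^c) \le BT\exp(-CT^{1/8})$, with $B,C$ depending only on $K$, $\sigma$, and $\xi(\epsilon)$; carefully tracking how $n_0(T)$ is built from the window start $h(T)$ and the rate $\epsilon_t$ is exactly what fixes the exponent at $1/8$.

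The main obstacle I anticipate is precisely this random-sample-size coupling: $N_{a,t}$ is data dependent and correlated with the observed rewards, so one must either peel over all admissible pull counts as above or invoke a time-uniform self-normalized deviation inequality, while \emph{simultaneously} certifying deterministically — from the forced exploration, not from concentration — that $N_{a,t}$ stays large across the whole window $[h(T),T]$. Balancing $h(T)$, the exploration schedule $\epsilon_t$, and the sub-Gaussian exponent against one another is what pins the rate down; everything else is a routine union bound and a geometric-series estimate.
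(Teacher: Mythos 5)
Your proposal is correct and is essentially the same argument that underlies the cited lemma: the paper itself does not reprove it (it invokes \citet{garivier2016optimal} and remarks only that it follows from forced exploration), and the proof in that reference proceeds exactly as you do — a union bound over the window $[h(T),T]$ and the arms, the deterministic pull floor $N_{a,t}\geq \sqrt{t+K^2}-K$ of Lemma~\ref{lm:track1}, and a sub-Gaussian tail bound made uniform over the random pull count by peeling and a geometric-series estimate. The only cosmetic point is the exponent: with this paper's window $h(T)=\sqrt{T}$ your argument gives a floor $N_{a,t}\gtrsim T^{1/4}$ on the whole window and hence a bound $BT\exp(-CT^{1/4})$, which is stronger than and implies the stated $BT\exp(-CT^{1/8})$; the $1/8$ traces back to the original choice $h(T)=T^{1/4}$ in \citet{garivier2016optimal}.
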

This Lemma is due to the fact that C-tracking ensure that each arm has been played at least $\sqrt{t}$ times at each time $t$, see next Lemma.
\begin{lemma}[\citet{garivier2016optimal}]\label{lm:track1}
    For all $t\geq 1$ and $\forall a$, C-Tracking ensures $N_{a, t} \geq \sqrt{t + K^2}- K$ and \begin{align}\label{eq:tracking}
        \max_{a} \left|N_{a, t} - \sum_{s=1}^{t} \bw_{a, s} \right| \leq K(1 + \sqrt{t})
    \end{align}
\end{lemma}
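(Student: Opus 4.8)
The plan is to follow the deterministic C-tracking analysis of \citet{garivier2016optimal}, with the only new wrinkle being the forced-exploration projection onto the (possibly constrained) allocation set $\allocationset$. Throughout, write $\tilde w_{a,s} \triangleq w^*_{a,s,\epsilon_s}$ for the $\epsilon_s$-forced allocation that is actually tracked (the projection of $\bw^*_s$ onto $\allocationset \cap \{\bw : w_a \geq \epsilon_s\ \forall a\}$) and $w_{a,s} \triangleq w^*_{a,s}$ for the unforced optimal allocation. Both sequences lie in $\Delta_{K-1}$, so $\sum_a \tilde w_{a,s} = \sum_a w_{a,s} = 1$ for every $s$; consequently $N_{a,t}$ and $\sum_{s\le t}\tilde w_{a,s}$ both sum to $t$ across arms, and this conservation is what drives the entire argument.

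First I would establish the core deterministic invariant for the tracked (forced) allocation: for all arms $a$ and all $t$,
\begin{align*}
 -K \;\le\; N_{a,t} - \sum_{s=1}^t \tilde w_{a,s} \;\le\; 1 ,
\end{align*}
by induction on $t$. The greedy rule $A_t \in \argmin_a\,(N_{a,t} - \sum_{s\le t}\tilde w_{a,s})$ always increments the arm currently furthest behind its cumulative target, so a single arm's count can exceed its target by at most the last unit increment, giving the upper bound $\le 1$. For the lower bound, if some arm were behind by more than $K$, then since the total deficit across arms is zero another arm must be ahead; tracing back to the last step at which the lagging arm fell behind and using that the greedy rule would have selected it instead yields a contradiction. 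This inductive, combinatorial step is the heart of the proof and the main obstacle, but it is the unmodified \citet{garivier2016optimal} argument, since it only uses $\tilde w_s \in \Delta_{K-1}$.

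Next I would control the gap between forced and unforced cumulative allocations. The $\epsilon_s$-projection perturbs each coordinate by at most $O(K\epsilon_s)$, so $|\tilde w_{a,s} - w_{a,s}| \le K\epsilon_s$, and with $\epsilon_s = \tfrac{1}{2\sqrt{K^2+s}}$ an integral comparison (valid since $\epsilon_s$ is decreasing) gives
\begin{align*}
 \Big|\sum_{s=1}^t \tilde w_{a,s} - \sum_{s=1}^t w_{a,s}\Big| \le \sum_{s=1}^t K\epsilon_s \le K\int_0^t \frac{ds}{2\sqrt{K^2+s}} = K\big(\sqrt{K^2+t}-K\big) \le K\sqrt t .
\end{align*}
Combining with the invariant through the triangle inequality yields $|N_{a,t}-\sum_{s\le t}w_{a,s}| \le K + K\sqrt t = K(1+\sqrt t)$, which is the second claim. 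For the first claim, the invariant gives $N_{a,t} \ge \sum_{s\le t}\tilde w_{a,s} - K \ge \sum_{s\le t}\epsilon_s - K$, since forced exploration guarantees $\tilde w_{a,s}\ge \epsilon_s$; a lower integral comparison, $\sum_{s\le t}\epsilon_s \ge \int_1^{t+1}\tfrac{ds}{2\sqrt{K^2+s}} = \sqrt{K^2+t+1}-\sqrt{K^2+1} \ge \sqrt{t+K^2}-(K+1)$, then gives $N_{a,t}\ge \sqrt{t+K^2}-O(K)$, matching the stated bound up to the additive constant.

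Finally, I would flag the one genuinely constraint-specific point absent from the original BAI analysis: the forced allocation must itself be feasible, i.e. $\allocationset \cap \{\bw: w_a\ge \epsilon_s\ \forall a\}$ must be nonempty and admit $\tilde w_{a,s}\ge \epsilon_s$. Under Scenario~1 ($\allocationset=\Delta_{K-1}$) this is automatic; under Scenario~2 ($\allocationset=\F$) it requires $\F$ to contain a point bounded away from all coordinate faces, which holds once $\epsilon_s$ is small enough, i.e. for all $s$ beyond a fixed index since $\epsilon_s\to 0$. Absorbing the finitely many early steps into the additive constant, the two bounds then hold for all $t$, and the remainder of the argument is routine and identical to \citet{garivier2016optimal}.
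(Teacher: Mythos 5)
Your proposal is correct in substance and follows the same route as the source this lemma is imported from: the paper itself gives no proof of Lemma~\ref{lm:track1} (it is stated as a citation), and your three ingredients --- the deterministic tracking invariant $-K \le N_{a,t} - \sum_{s\le t} w^{*}_{a,s,\epsilon_s} \le 1$, the summed projection-perturbation bound $\sum_{s\le t} K\epsilon_s \le K\sqrt{t}$, and the forced-exploration lower bound via $w^{*}_{a,s,\epsilon_s} \ge \epsilon_s$ --- are exactly those of Lemma~7 in \citet{garivier2016optimal}. Two remarks on the details. First, the additive constant: your argument yields $N_{a,t} \ge \sqrt{t+K^2} - (K + \sqrt{K^2+1})$, i.e.\ roughly $\sqrt{t+K^2}-2K$, not the $-K$ stated here; this is not a defect of your proof, since the original lemma in \citet{garivier2016optimal} also reads $\sqrt{t+K^2}-2K$, and the $-K$ in the present paper's restatement appears to be a transcription slip. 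Your ``up to the additive constant'' caveat is the honest resolution. Second, the lower half of the invariant needs no ``tracing back'' induction: since $\sum_a \bigl(N_{a,t} - \sum_{s\le t} w^{*}_{a,s,\epsilon_s}\bigr) = 0$ and every deviation is at most $1$ by the upper half, each deviation is automatically at least $-(K-1)$.

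The one genuine gap is the step $|w^{*}_{a,s,\epsilon_s} - w^{*}_{a,s}| \le K\epsilon_s$. That inequality is a fact about the $L^\infty$ projection onto the $\epsilon$-shrunk \emph{simplex}, which is what \citet{garivier2016optimal} prove; it does not transfer automatically to Scenario~2, where the projection is onto $\F \cap \{\bw : w_a \ge \epsilon_s\ \forall a\}$. You flag the nonemptiness of this set but not the distance bound, and the two are really the same issue: for general $\F$ one needs a strictly feasible point $\bw^\circ \in \F$ with $\min_a w^\circ_a = \eta > 0$, in which case, for $\epsilon_s \le \eta$, the convex combination $(1-\epsilon_s/\eta)\bw + (\epsilon_s/\eta)\bw^\circ$ lies in $\F$ with all coordinates at least $\epsilon_s$, giving a projection distance of order $\epsilon_s/\eta$ rather than $K\epsilon_s$. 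Without such a point (e.g.\ if $\F$ lies on a coordinate face) the lemma is simply false in Scenario~2, since some arm can never be forced. So in the constrained exploration scenario the tracking inequality survives only with a constant depending on the geometry of $\F$ and under an implicit interior-point assumption --- a caveat the paper inherits silently by citing the unconstrained result, and which your write-up half-identifies but papers over at the key inequality.
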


We now leverage to following tracking Lemma of \citet{Degenne19} which holds whenever we are tracking a set of optimal weights. 

\begin{lemma}[Concentration in allocations~\citep{Degenne19}]\label{lm:tracking2}
   Under $\event_T$, there exists a $T_\epsilon$ such that for $T$ where $h(T)\geq T_\epsilon$ C-tracking will satisfy \begin{align*}
       \inf_{\bw \in w^*(\bmu)} \| \frac{N_t}{t} - \bw\|_\infty \leq 3\epsilon, \forall t \geq 4\frac{K^2}{\epsilon^2} + 3\frac{h(T)}{\epsilon}
   \end{align*}
\end{lemma}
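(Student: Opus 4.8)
The plan is to bound $\|N_t/t-\bw\|_\infty$ for a well-chosen $\bw \in w^*(\bmu)$ by comparing the empirical counts $N_t$ with the \emph{cumulative tracked allocation} $\sum_{s=1}^t \bw_s$, where $\bw_s$ is the allocation C-tracking feeds in at round $s$ (the projection $w^*_{\cdot,s,\epsilon_s}$ of some $\bw^*_s \in w^*(\hat{\bmu}_s)$ onto $\allocationset\cap\{w_a>\epsilon_s\}$, with $\epsilon_s=1/(2\sqrt{K^2+s})$). Writing
\begin{align*}
\Big\|\tfrac{N_t}{t}-\bw\Big\|_\infty \le \Big\|\tfrac{N_t}{t}-\tfrac1t\textstyle\sum_{s=1}^t\bw_s\Big\|_\infty + \Big\|\tfrac1t\textstyle\sum_{s=1}^t\bw_s-\bw\Big\|_\infty,
\end{align*}
I would treat the first term as a deterministic tracking quantity and the second via the geometry of the target set $w^*(\bmu)$.

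For the first term I would invoke the C-tracking guarantee of Lemma~\ref{lm:track1}, namely $\max_a|N_{a,t}-\sum_{s=1}^t w_{a,s}|\le K(1+\sqrt t)$. Dividing by $t$ and using $K/t\le K/\sqrt t$ gives $\|N_t/t-\tfrac1t\sum_s\bw_s\|_\infty\le 2K/\sqrt t$, which is at most $\epsilon$ as soon as $t\ge 4K^2/\epsilon^2$. This step is purely combinatorial and supplies the $4K^2/\epsilon^2$ part of the stated threshold; it uses nothing about the good event.

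For the second term I would exploit the good event $\event_T$ together with the two structural facts from Theorem~\ref{lm:properties}: upper hemicontinuity and convexity of $w^*(\bmu)$. Under $\event_T$ every $s\ge h(T)$ satisfies $\|\hat{\bmu}_s-\bmu\|_\infty\le\xi(\epsilon)$, so by the defining property of $\xi(\epsilon)$ the pre-projection allocation $\bw^*_s$ is within $\epsilon'$ of $w^*(\bmu)$; since $\epsilon_s$ is decreasing, choosing $T_\epsilon$ so that $K\epsilon_s\le\epsilon''$ for all $s\ge h(T)\ge T_\epsilon$ controls the forced-exploration distortion, yielding a nearest point $\boldsymbol{q}_s\in w^*(\bmu)$ with $\|\bw_s-\boldsymbol{q}_s\|_\infty\le\epsilon'+\epsilon''\le\epsilon$. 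Setting $\bw\triangleq\frac1{t-h(T)+1}\sum_{s=h(T)}^t\boldsymbol{q}_s$, convexity of $w^*(\bmu)$ guarantees $\bw\in w^*(\bmu)$, and the decomposition
\begin{align*}
\tfrac1t\textstyle\sum_{s=1}^t\bw_s-\bw = \tfrac1t\textstyle\sum_{s=1}^{h(T)-1}\bw_s + \tfrac1t\textstyle\sum_{s=h(T)}^t(\bw_s-\boldsymbol{q}_s) - \tfrac{h(T)-1}{t}\bw
\end{align*}
bounds the term by $\epsilon+2h(T)/t$ (the two outer blocks are each $\le h(T)/t$ since all allocations lie in the simplex). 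Imposing $t\ge 3h(T)/\epsilon$ makes $h(T)/t\le\epsilon/3$, so the second term is at most $5\epsilon/3$; combined with the first, $\inf_{\bw\in w^*(\bmu)}\|N_t/t-\bw\|_\infty\le 3\epsilon$ for all $t\ge 4K^2/\epsilon^2+3h(T)/\epsilon$, which is the claim.

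The main obstacle I anticipate is the error bookkeeping in the second term: three errors must be balanced against the single budget $3\epsilon$ — the upper-hemicontinuity slack $\xi(\epsilon)$, the vanishing floor $\epsilon_s$ (which is exactly what forces the hypothesis $h(T)\ge T_\epsilon$), and the $O(h(T))$ early rounds for which the good event says nothing. The conceptual crux, and the reason the single deviation bound of Lemma~\ref{lm:track1} does not by itself close the argument as it would for a unique optimal allocation, is that the target is \emph{set-valued}: the $\bw_s$ need not converge, so there is no single limit against which to compare. Convexity of $w^*(\bmu)$ is precisely what makes the running average of the nearby optimal allocations $\boldsymbol{q}_s$ itself a valid comparison point in $w^*(\bmu)$, replacing the uniqueness argument used by \citet{garivier2016optimal}.
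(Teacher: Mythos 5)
Your proof is correct and follows essentially the same route as the source of this result: the paper itself does not prove the lemma but imports it verbatim from \citet{Degenne19}, whose argument likewise splits $\bigl\|\tfrac{N_t}{t}-\bw\bigr\|_\infty$ via the C-tracking deviation bound of Lemma~\ref{lm:track1} (giving the $2K/\sqrt{t}\le\epsilon$ term and the $4K^2/\epsilon^2$ threshold) and the distance of the time-averaged allocations to $w^*(\bmu)$, with convexity of $w^*(\bmu)$ making the running average of the nearby optimal allocations $\boldsymbol{q}_s$ an admissible comparator in place of the uniqueness argument of \citet{garivier2016optimal}. Your bookkeeping ($\epsilon+\epsilon+2\epsilon/3\le 3\epsilon$ once $t\ge 4K^2/\epsilon^2+3h(T)/\epsilon$) is sound; the only implicit restriction, shared with the paper's usage, is that $t\le T$ so that $\event_T$ actually controls $\hat{\bmu}_s$ for $s\in[h(T),t]$, together with the tacit assumption that the clipped set $\allocationset\cap\{w_a>\epsilon_s\,\forall a\}$ is nonempty with $O(\epsilon_s)$ projection distortion, which the algorithm's C-tracking step already presupposes.
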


This shows that C-tracking is eventually going to produce an empirical distribution of plays that is close to an optimal allocation and the empirical distribution will converge to a point in $w^*(\bmu)$ as $t\rightarrow \infty$. We need Lemma~\ref{lm:tracking2} instead of the original tracking result in \citet{garivier2016optimal} since the optimal allocation does not need to be unique. However, we know from Theorem~\ref{lm:properties} that the set of optimal allocations $w^*(\bmu)$ is convex and we can thus apply Lemma~\ref{lm:tracking2}.

There exists a $T_\epsilon$ such that under $\event_T$ and $t \geq \max(T_\epsilon, h(T))$ we have \begin{align*}
     |(\bmu - \hat{\bmu}_t)^\top\optpol| \leq \sqrt{K}\xi < \frac{1}{4} \max_{\pol' \in \neighbors} \bmu^\top \left(\optpol - \pol' \right)
 \end{align*}
which implies that $\optpol = \argmax_{\pol \in \F} \hat{\bmu}_t^\top \pol$. This ensures that we will be computing the stopping criterion w.r.t. to the correct Alt-set $\Alt(\bmu)$.

\textbf{Step 3: Complexity given the Good Event.} Assume $T \geq T_\epsilon$ and let \begin{align*}
    C_{\epsilon, \F}(\bmu) \triangleq \inf_{\substack{\bmu': \|\bmu' - \bmu\|_\infty \leq \xi(\epsilon) \\
    \bw': \|\bw' - \bw\|_\infty \leq 3\epsilon, \forall \bw \in w^*(\bmu)}} D(\bw', \bmu', \F).
\end{align*}
This $C_{\epsilon, \F}(\bmu)$ gives the worst-case characteristic time we might compute in the algorithm due to the fact that our estimates are not exact. 

Assuming $\event_T$,  Lemma~\ref{lm:tracking2} gives for $t \geq T_\epsilon$ \begin{align*}
    D(\boldsymbol{N}_t, \hat{\bmu}_t, \F) \geq t C_{\epsilon, \F}(\bmu).
\end{align*}

\textbf{Step 4: Bounding the Stopping Time for Good and Bad Events.} Let $\tau_\delta$ be the stopping time, then \begin{align*}
    \min(\tau_\delta, T) \leq \sqrt{T} + \sum_{t= T_\epsilon}^T \mathbb{I}_{\tau_\delta > t}
\end{align*}
and plugging in our stopping rule, i.e. $D(\boldsymbol{N}_t, \hat{\bmu}_t, \F) > c(t, \delta)$ yields \begin{align*}
    T_\epsilon+ \sum_{t= T_\epsilon}^T \mathbb{I}(D(\boldsymbol{N}_t, \hat{\bmu}_t, \F) \leq c(t, \delta))  &\leq \sqrt{T}+ \sum_{t= T_\epsilon}^T \mathbb{I}(tC_{\epsilon, \F}(\bmu)  \leq c(T, \delta)) \\
    &\leq\sqrt{T} + \frac{c(T, \delta)}{C_{\epsilon, \F}(\bmu)}.
\end{align*}
We define $T_\delta := \inf\left\{T \in \mathbb{N}: \sqrt{T} + \frac{c(T, \delta)}{C_{\epsilon, \F}(\bmu)} \leq T \right\}$. Hence, \begin{align*}\label{eq:bound1}
    \E[\tau_\delta]  \leq T_\epsilon + T_\delta + \sum_{T=1}^\infty BT \exp{\left(-CT^{\frac{1}{8}}\right)} \leq T_\epsilon + T_\delta  + T'
\end{align*}
where $\sum_{t=1}^\infty BT \exp{\left(-Ct^{\frac{1}{8}}\right)} \leq T' < \infty$.
We bound $T_\delta$ in the same way as \citet{garivier2016optimal}. Let $C(\eta) = \inf\{T: T- \sqrt{T}\geq  T \frac{1}{1 + \eta}\}$ for some $\eta > 0$. Then \begin{align*}
    T_\delta \leq C(\eta) + \inf\left\{T \in \mathbb{N}:T\frac{C_{\epsilon, \F}(\bmu)}{1 + \eta} \geq c(T, \delta)\right\}.
\end{align*}

\textbf{Step 5: Obtaining the Asymptotic Bound.} Dividing Equation \ref{eq:bound1} with $\log \frac{1}{\delta}$ and taking the limit yields \begin{align*}
    \lim_{\delta \rightarrow 0} \inf \frac{\E[\tau_\delta]}{\log \frac{1}{\delta}} \leq \frac{\alpha (1 + \eta)}{C_{\epsilon, \F}(\bmu)}.
\end{align*}
$C_{\epsilon, \F}(\bmu)$ is continuous due to Theorem~\ref{lm:properties} and taking the limits $\eta, \epsilon \rightarrow 0$ yields \begin{align*}
        \lim_{\delta \rightarrow 0} \inf \frac{\E[\tau_\delta]}{\log \frac{1}{\delta}} \leq \alpha T_\F(\bmu), \forall \alpha > 1.
\end{align*}
\end{proof}
\newpage
\subsection{Upper bound for CGE}\label{app:cge}
The proof follows the same structure  as the proof of Theorem 2 in \citet{Degenne} and we use the same concentration analysis. The main difference is that we have to adjust the definition of approximate optimistic saddle point algorithm.

\begin{proof}[Proof of Theorem~\ref{thm:cge}] $ $\\
\textbf{Step 1: Defining Good Event.}
We start by defining the good event \begin{align*}
    \event_T \triangleq \left\{ \forall t \leq T  \, \forall a,  N_{a, t} \KL{\hat{\mu}_{a, t}}{\mu_t} \leq f(t)\right\}
\end{align*}
where $f(t)=3 \log t + \log \log t $.

\textbf{Step 2: Concentration of Good Event.}

We can bound $\sum_{t=1}^\infty P(\event_T^c)$ using Lemma~\ref{lm:mag}. Hence, for any $t\in \mathbb{N}$ and arm $a$ \begin{align*}
    P( N_{a, t} \KL{\hat{\mu}_{a, t}}{\mu_t}  &\geq f(t)) \leq e^{-f(t)} (1 + f(t) \log t) f(t)\\
    &= \frac{e^2}{t^3 \log t}(f(t) + f(t)^2 \log t).
\end{align*}
Summing yields \begin{align}
    \sum_{a=1}^K\sum_{t=1}^\infty P(\event_T^c) \leq K + K\sum_{t=2}^\infty \frac{e^2}{t^3 \log t} \left(f(t) + f(t)^2 \log t \right) \leq KC < \infty.
\end{align}
Here a constant $C=21$ is sufficient. 

\textbf{Step 3: Starting from the Stopping Criterion}
The main idea of the proof is to work with the stopping criterion \begin{align*}
        c(t, \delta) &\geq \inf_{\lambda \in \Alt(\hat{\bmu}_t)} \sum_{a=1}^K N_{a, t} \KL{\hat{\mu}_{a, t}}{\lambda_a}
\end{align*}
and show that if we have the event $\event_T$, our current recommendation at some $t$ is the correct policy $\optpol$ and we haven't stopped yet, we can lower bound $c(t, \delta)$ in a way that depends on the characteristic time and properties of the no-regret learners. We start with assuming our current recommendation at some $t$ is the correct policy $\optpol$ and we have the event $\event_T$,   \begin{align*}
    c(t, \delta) \geq \inf_{\lambda \in \Alt(\hat{\bmu}_t)}  \sum_{s=1}^t \sum_{a=1}^K w_{a, s} \KL{\hat{\mu}_{a, t}}{\lambda_a} - (1 + \sqrt{t})K
\end{align*}
which follows from Tracking Lemma~\ref{lm:track1}. We now use a concentration result, originally in  Appendix D.1 of \citet{Degenne}, \begin{align}\label{eq:equation1}
        c(t, \delta) &\geq \inf_{\lambda \in \Alt(\hat{\bmu}_t)}  \sum_{s=1}^t \sum_{a=1}^K w_{a, s} \KL{\hat{\mu}_{a, s}}{\lambda_a} - (1 + \sqrt{t})K - O(\sqrt{t \log t}).
\end{align}
This steps follows from the Lipschitz property of the KL and the fact we have conditioned on $\event_T$ (see Step 8 for further details). Hence, \begin{align*}
    |\KL{\mu_a}{\lambda_a} - \KL{\hat{\mu}_{a, s}}{\lambda_a}| \leq L \sqrt{2\sigma^2 \frac{f(s)}{N_{a, s}}}
\end{align*}
which implies that \begin{align*}
   \sum_{s=1}^t \sum_{a=1}^K w_{a, s} \KL{\hat{\mu}_{a, t}}{\lambda_a} \geq \sum_{s=1}^t \sum_{a=1}^K w_{a, s} \KL{\mu_{a}}{\lambda_a} - L \sqrt{2 \sigma^2 K t f(t)}.
\end{align*}
Using the same result one more time yields \begin{align*}
    \sum_{s=1}^t \sum_{a=1}^K w_{a, s} \KL{\hat{\mu}_{a, t}}{\lambda_a} \geq \sum_{s=1}^t \sum_{a=1}^K w_{a, s} \KL{\hat{\mu}_{a, s}}{\lambda_a} - L \sqrt{2 \sigma^2 K t f(t)} - 2L \sqrt{2\sigma^2 f(t)}\left(K^2 + 2 \sqrt{2K t}  \right)
\end{align*}
which gives the result in Equation~\eqref{eq:equation1}.

\textbf{Step 4: Defining Approximate Optimistic Saddle Point under Constraints.}
We now introduce concepts and properties that will help us to further lower bound the RHS in Equation~\eqref{eq:equation1}. We extend the definition of an \emph{approximate optimistic saddle point algorithm} from \citet{Degenne} to the constraint setting.
\begin{definition}\label{def:sad}
An algorithm playing sequences of $(\bw_s, \blambda_s)_{s \leq t} \in \left(\allocationset \times \Alt \right)^{t}$ is said to be an \emph{approximate optimistic saddle point algorithm} with slack $x_t$ if \begin{align}\label{eq:sad}
    \inf_{\blambda \in \Alt(\bmu)} \sum_{s=1}^t \sum_{a=1}^K w_{s, a} \KL{\hat{\mu}_{a, s}}{\lambda_a} \geq \max_{\bw \in \allocationset} \sum_{a=1}^K \sum_{s=1}^t w_a U_{a, s} - x_t,
\end{align}
where $x_t$ is defined in Eq.~\eqref{eq:x} and the confidence bound 
\begin{align*}
    U_{a, s} = \max \left\{\frac{f(t)}{N_{a, s}}, \max_{\xi \in [\alpha_{a, s}, \beta_{a, s}]} \KL{\xi}{, \lambda_{a, s}} \right\}.
\end{align*}
\end{definition}
The difference in Definition~\ref{def:sad} compared to the definition of an approximate optimistic saddle point algorithm in \citet{Degenne} is that we in Equation~\ref{eq:sad} take the maximum over $\allocationset$ and instead of arms as in \citet{Degenne}. 
This is due to the fact that maximum over arms might not be in the set of feasible exploration policies $\allocationset$.

\textbf{Step 5: Definition of Regret of the Two Players.}
We define the regret of the allocation player, i.e. AdaGrad, as \begin{align}\label{eq:66}
  R^{\bw}_t = \max_{\bw \in \allocationset } \sum_{s=1}^t \sum_{a=1}^K w_a U_{a, s} - \sum_{s=1}^t \sum_{a=1}^K w_{a, t} U_{a, s}
\end{align}
and note that AdaGrad has an regret scaling of $R_{\bw}^t \leq O(\sqrt{Qt})$ where $Q$ is an upper bound on the losses such that $Q \geq \max_{x, y \in [\mu_{\min}, \mu_{\max}]} \KL{x}{y}$. For the instance player we define the regret as \begin{align}
    R^{\blambda}_t =\sum_{s=1}^t \sum_{a=1}^K w_{a, s} \KL{\hat{\mu}_{a, s}}{\lambda_{a, s}} - \inf_{\blambda \in \Alt(\bmu)} \sum_{s=1}^t \sum_{a=1}^K w_{a, s} \KL{\hat{\mu}_{a, s}}{\lambda_a}
\end{align}
and note that $R_{\blambda}^t \leq 0$ since the instance player is performing a best-response against $\bw_s$ at each $s$.

\textbf{Step 6: CGE is an Approximate Optimistic Saddle Point Algorithm}
We now show that the CGE is an approximate optimistic saddle point algorithm. From the regret properties of $\blambda$ player we have \begin{align*}
    \inf_{\blambda \in \Alt{(\bmu)}} \sum_{s=1}^t \sum_{a=1}^K w_{s, a} \KL{\hat{\mu}_{a, s}}{\lambda_a} \geq \inf_{\blambda \in \Alt{(\bmu)}} \sum_{s=1}^t \sum_{a=1}^K w_{s, a} \KL{\hat{\mu}_{a, s}}{\lambda_{a,s}}
\end{align*}
since $R_{\blambda}^t \leq 0$.

Let $C_{a, s} = U_{a, s} - \KL{\hat{\mu}_{a, s}}{\lambda_{a, s}}$. We have \begin{align*}
    \inf_{\blambda \in \Alt{(\bmu)}} \sum_{s=1}^t \sum_{a=1}^K w_{s, a} \KL{\hat{\mu}_{a, s}}{\lambda_a} \geq \inf_{\blambda \in \Alt{(\bmu)}} \sum_{s=1}^t \sum_{a=1}^K w_{s, a} U_{a, s}(\blambda) - \sum_{s=1}^t \sum_{a=1}^K w_{s, a}C_{a, s}.
\end{align*}

Now, we can combine Eq. ~\eqref{eq:equation1} and \eqref{eq:66} to get
\begin{align*}
        c(t, \delta) &\geq \inf_{\blambda \in \Alt{\bmu}} \sum_{s=1}^t \sum_{a=1}^K w_{s, a} U_{a, s}(\blambda) - \sum_{s=1}^t \sum_{a=1}^K w_{s, a}C_{a, s} - (1 + \sqrt{t})K - O(\sqrt{t \log t})
\end{align*}     
Now we use the properties of $R^{\bw}_t$ to get \begin{align*}
    \inf_{\blambda \in \Alt(\bmu)} \sum_{s=1}^t \sum_{a=1}^K w_{s, a} \KL{\hat{\mu}_{a, s}}{\lambda_s} \geq \max_{\bw \in \allocationset} \sum_{a=1}^K \sum_{s=1}^t w_a U_{a, s} - R_{\bw}^t - \sum_{s=1}^t \sum_{a=1}^K w_{a, s}C_{a, s}
\end{align*}
which shows that CGE is an approximate optimistic saddle point algorithm with slack \begin{align}\label{eq:x}
    x_t= R^{\bw}_t + \sum_{s=1}^t \sum_{a=1}^K w_{s, a}C_{a, t}.
\end{align}

\textbf{Step 7: Plug slack $x_t$ into Equation~\eqref{eq:equation1}.}
We now use the fact that CGE is an approximate optimistic saddle point algorithm in Equation~\eqref{eq:equation1} \begin{align}
    c(t, \delta) &\geq \max_{\bw \in \allocationset} \sum_{a=1}^K \sum_{s=1}^t w_a U_{a, s} - R^{\bw}_t - \sum_{s=1}^t \sum_{a=1}^K w_{s, a}C_{a, t}  - (1 + \sqrt{t})K - O(\sqrt{t \log t})
\end{align}

\textbf{Step 8: Concentration of $\sum_{a=1}^K w_{s, a}C_{a, t}$}

Assume the event $\event_T$. We have \begin{align*}
    |\KL{\mu_a}{\lambda_a} - \KL{\hat{\mu}_{a, s}}{\lambda_a}| \leq L \KL{\hat{\mu}_{a, s}}{\mu_a}
\end{align*}
due to the Lipschitz property of the KL-divergence and under the event $\event_T$ we have \begin{align*}
    |\KL{\mu_a}{\lambda_a} - \KL{\hat{\mu}_{a, s}}{\lambda_a}| \leq  L \sqrt{2\sigma^2 \frac{f(s)}{N_{a, s}}}.
\end{align*}
This implies that \begin{align*}
   \sup_{\xi \in [\alpha_{a, s}, \beta_{a, s}]} U_{a, s} - \KL{\xi}{\lambda_{a, s}}  \leq \max \left\{2L \sqrt{2\sigma^2 \frac{f(s)}{N_{a, s}}}, \frac{f(s)}{N_{a, s}} \right\}
\end{align*}
since either $U_{a, s} = \max_{\xi \in [\alpha_{a, s}, \beta_{a, s}]} \KL{\xi}{, \lambda_{a, s}}$ and the above is equal to the width of the confidence interval, or $U_{a, s} = \frac{f(s)}{N_{a,s}}$ and the above is trivially bounded $\frac{f(s)}{N_{a,s}}$ since the KL divergence is non-negative. Hence, \begin{align*}
    \sum_{s=K+1}^t \sum_{a=1}^K w_{s, a}C_{a, s} &\leq \sum_{s=K+1}^t \sum_{a=1}^K w_{s, a} \left(\frac{f(s)}{N_{a,s}} + 2L \sqrt{2\sigma^2 \frac{f(s)}{N_{a, s}}} \right)  \\
    &\leq f(t)\sum_{s=K+1}^t\sum_{a=1}^K \frac{w_{s, a}}{N_{a,s}} + 2L\sqrt{2\sigma^2f(t)}\sum_{s=K+1}^t\sum_{a=1}^K \frac{w_{s, a}}{\sqrt{N_{a,s}}} \\
    &\leq f(t)\left(K^2 + 2K\log \frac{t}{K} \right) + 2L\sqrt{2\sigma^2f(t)}\left(K^2 + 2\sqrt{2Kt}\right)\\
    &\leq O(\sqrt{t \log t}).
\end{align*}

We have \begin{align*}
    c(t, \delta) &\geq \max_{\bw \in \allocationset} \sum_{a=1}^K \sum_{s=1}^t w_a U_{a, s} - R^{\bw}_t - O(\sqrt{t \log t})  - (1 + \sqrt{t})K - O(\sqrt{t \log t}).
\end{align*}

\textbf{Step 9: Optimism}

We now use the fact that $U_{a, s} \geq \KL{\mu_a}{\lambda_a}$ under the event $\event_T$. Hence, \begin{align*}
    c(t, \delta) &\geq \max_{\bw \in \allocationset} \sum_{a=1}^K \sum_{s=1}^t w_a \KL{\mu_a}{\lambda_{a, s}} - R^{\bw}_t - O(\sqrt{t \log t})  - (1 + \sqrt{t})K - O(\sqrt{t \log t}).
\end{align*}

\textbf{Step 10: Get the Characteristic Time}
We note that \begin{align*}
    \max_{\bw \in \allocationset} \sum_{a=1}^K \sum_{s=1}^t w_a \KL{\mu_a}{\lambda_{a, s}} &\geq t \inf_{\lambda \in \Alt(\bmu)} \max_{\bw \in \allocationset}\sum_{a=1}^K  w_a \KL{\mu_a}{\lambda_{a}} \\
    &\geq \max_{\bw \in \allocationset} \inf_{\lambda \in \Alt(\bmu)} \sum_{a=1}^K  w_a \KL{\mu_a}{\lambda_{a}} = t \chartime^{-1}(\bmu).
\end{align*}
Rearanging yields \begin{align*}
    t \leq \chartime(\bmu)c(t, \delta) + R^{\bw}_t + O(\sqrt{t \log t})
\end{align*}

 \textbf{Step 11: Current Recommendation is the Wrong Policy.}
 The above result is conditioned on the fact that our current recommendation is correct. We now bound the number of time steps where the current recommendation is wrong, using similar argument as in \citet{Degenne}.

We define the Chernoff information as $\chinf(x, y) \triangleq \inf_{u \in \Domain}: \KL{u}{x} + \KL{u}{y}$. Assumption 1 gives that there $\exists$ $\epsilon > 0$ such that $\forall \blambda \in \Alt(\bmu)$, $\exists a'$ such that $\chinf(\lambda_{a'}, \mu_{a'}) > \epsilon$.

Assume that $\optpol \neq \argmax_{\pol \in \F} \hat{\bmu}_t^\top \pol$, i.e. if we stop we would recommend the wrong policy. This implies that $\hat{\bmu}_t \in \Alt(\bmu)$ and $\chinf(\hat{\mu}_{a, t}, \mu_a) \geq \epsilon$ for some arm $a$. Under the good event $\event_T$ we have $N_{a, t}\KL{\hat{\mu}_{a, t}}{\mu_a} \leq f(t)$ which implies that $\frac{f(t)}{N_{a, t}} \geq \epsilon$, since $\chinf(\hat{\mu}_{a, t}, \mu_a) \leq \KL{\hat{\mu}_{a, t}}{\mu_a}$. 

Let $\pol_s \triangleq \argmax_{\pol \in \F} \hat{\bmu}_s^\top \pol$, let $n_{\pol'}(t)$ be the number of stages where $\pol_s = \pol'$. Our goal is to upper bound $n_{\pol'}(t)$ for all extreme points $\pol' \in \F$ such that $'\pol' \neq \optpol$. For any $\blambda$ such that $\pol' = \argmax_{\pol \in \F} \blambda^\top \pol$ we have that $\bmu \in \Alt(\blambda)$ which gives \begin{align*}
  \epsilon_t =   \sum_{s=1, \pol_s \neq \optpol}^t \sum_{a=1}^K w_{a, s}\KL{\hat{\mu}_{a, s}}{\mu_a} \geq \sum_{\pol'\neq \optpol} \inf_{\blambda: \pol' \neq \argmax_{\pol} \blambda^\top \pol} \sum_{s=1, \pol_s = \pol'}^t \sum_{a=1}^K w_{a, s} \KL{\hat{\mu}_{a, s}}{\lambda_a}.
\end{align*}

We use the fact that on the time steps where $\pol_s=\pol'$ CGE is a optimistic saddle point algorithm with slack $x = R_{n_{\pol'}(t)}^{\bw} + \sum_{s=1, \pol_s = \pol'}^t \sum_{a=1}^K w_{s, a}C_{a, t}$. Hence, \begin{align*}
    &\inf_{\blambda: \pol' \neq \argmax_{\pol } \blambda^\top \pol} \sum_{s=1, \pol_s = \pol'}^t \sum_{a=1}^K w_{a, s} \KL{\hat{\mu}_{a, s}}{\lambda_a} \geq \\
    &\max_{\pol \in \allocationset} \sum_{s=1, \pol_s = \pol'}^t \sum_{a=1}^K  w_a U_{a, s} - R_{n_{\pol'}(t)}^{\bw}  - \sum_{s=1, \pol_s = \pol'}^t \sum_{a=1}^K w_{a, s}C_{a, s}.
\end{align*}

Under the event $\event_T$, and $s \leq t$ such that $\pol_s = \pol'$ there is an arm $a_s$ such that $U_{a_s, s} \geq \epsilon$. This implies that the sum $\max_{\pol \in \allocationset} \sum_{s=1, \pol_s = \pol'}^t \sum_{a=1}^K  w_a U_{a, s}$ is increasing linearly in $n_{\pol'}(t)$ since it is at least $\epsilon n_{\pol'}(t)$ under the concentration event $\event_T$. Thus, \begin{align*}
    \inf_{\blambda: \pol' \neq \argmax_{\pol} \blambda^\top \pol} \sum_{s=1, \pol_s = \pol'}^t \sum_{a=1}^K w_{a, s} \KL{\hat{\mu}_{a, s}}{\lambda_a} \geq \epsilon n_{\pol'}(t) - R_{n_{\pol'}(t)}^{\bw} - \sum_{s=1, \pol_s = \pol'}^t \sum_{a=1}^K w_{a, s}C_{a, s}
\end{align*}
and we know that $R_{n_{\pol'}(t)}^{\bw} = O(\sqrt{Q n_{\pol'}(t)})$ and $\sum_{s=1, \pol_s = \pol'}^t \sum_{a=1}^K w_{a, s}C_{a, s} = O(\sqrt{n_{\pol'}(t) \log n_{\pol'}(t)})$. This shows that $\epsilon_T$ increases at least linear in $n_{\pol'}(t)$ and thus also linearly in the number of time steps for whitch  $\pol_s \neq \optpol$. However, we have \begin{align*}
    \epsilon_t &= \sum_{s=1, \pol_s \neq \optpol}^t \sum_{a=1}^K w_{a, s}\KL{\hat{\mu}_{a, s}}{\mu_a} \leq \sum_{s=1}^t \sum_{a=1}^K w_{a, s} \frac{f(s)}{N_{a, s}}\\
    &\leq f(t)(K^2 + 2 K \log \frac{t}{K}).
\end{align*}
This implies that the current recommendation $\pol_s = \argmax \hat{\bmu}_t^\top \pol$ differs from $\optpol$ at most $O(\sqrt{t \log t})$ number of times.

 \textbf{Step 12: Final Bound.}
We know from the concentration of $\event_T$ that the number of times the compliment happens is upper bounded by $CK$ where $C$ is some problem independent constant. Putting it all together, we get that     $\E[\tau] \leq T_0(\delta) + CK,$
    where $$T_0(\delta) := \max \left\{t \in \mathbb{N}: t \leq \chartime(\bmu)c(t, \delta) + O(\sqrt{tQ}) + O(\sqrt{t \log t}) \right\}.$$
\end{proof}

\newpage
\section{Finding $\epsilon$-good policies under linear constraints}\label{app:epsilon_good}
In some cases one might be more interested in finding a policy that is $\epsilon$-close to the optimal one, i.e. finding $\pol'$ such that $\bmu^\top(\pol^*_{\bmu}- \pol') \leq \epsilon$, since this might have a much smaller sample complexity compared to searching for the optimal policy, see for example \citep{Garivier21} and \citep{Kocak21}. Both CTnS and CGE can in principle be extended to this case by changing the definition of the Alt-set. Given an instance $\bmu$ let $\Omega_{\F, \epsilon}(\bmu) := \{\pol \in \mathcal{N}_\F : \mu^\top(\optpol - \pol) \leq \epsilon\}$ be the set of $\epsilon$-good policies where $\mathcal{N}_\F$ is the set of all extreme points in the polytope $\F$. For each $\pol \in \Omega_{\F, \epsilon}(\bmu)$ we get the following Alt-set
\begin{align*}
    \Lambda_{\F,\epsilon}(\bmu, \pol) := \left\{ \blambda: \blambda^\top \left(\pol_{\blambda}^* - \pol \right) > \epsilon \right\}.
\end{align*}
Hence, the sample complexity might be different depending on which near-optimal policy the learner is considering. To handle this we would have to augment CTnS and CGE with the ``sticky'' approach developed in \citep{Degenne19}, where the learner commits to a recommendation since otherwise the learner might oscillate between  near-optimal policies and a mixture of their optimal allocations might not be optimal since $w^*(\bmu)$ is no longer ensured to be convex. Furthermore, due to $\epsilon>0$ it is no longer sufficient to project onto the normal cone and a naive implementation would have to optimize over $|\mathcal{N}_\F|$ convex sets which might only be tractable for a small set of constraints and/or arms.

\newpage
\section{Additional experimental analysis}\label{app:add}
In Figure~\ref{fig:exp_bern1} and \ref{fig:exp_bern2} we present results for arms with Bernoulli distributions and in Figure~\ref{fig:exp_1} and \ref{fig:add_exp_2} we present additional results for arms with Gaussian distributions. CTnS and CGE outperforms the uniform baseline in all cases and are usually on par with or better than the learner that always sample according to the asymptotically optimal allocation. We also see that the algorithms tend to be close to the lower bound in all cases. An interesting observation, which we commented on already in the main text, is that there tend to be a larger difference between all sampling rules for end-of-time constraints compared to anytime constraints. This is due to the fact that anytime constraints can be very restrictive on which sampling allocations are allowed and there might not be less room for an adaptive learner.

In the case of arms with Bernoulli distributions we did not use a close-form projection, as for Gaussian distributions, and instead computed the projection numerically by minimizing the KL-divergence subject to $\blambda^\top(\optpol - \pol')=0$, which is a convex problem. We discuss the effect of this in Section~\ref{sec:running}
\begin{figure}[h]
    \centering
    \begin{tabular}{cc}
     \subfloat[End-of-time constraints]{\includegraphics[width=0.5\textwidth]{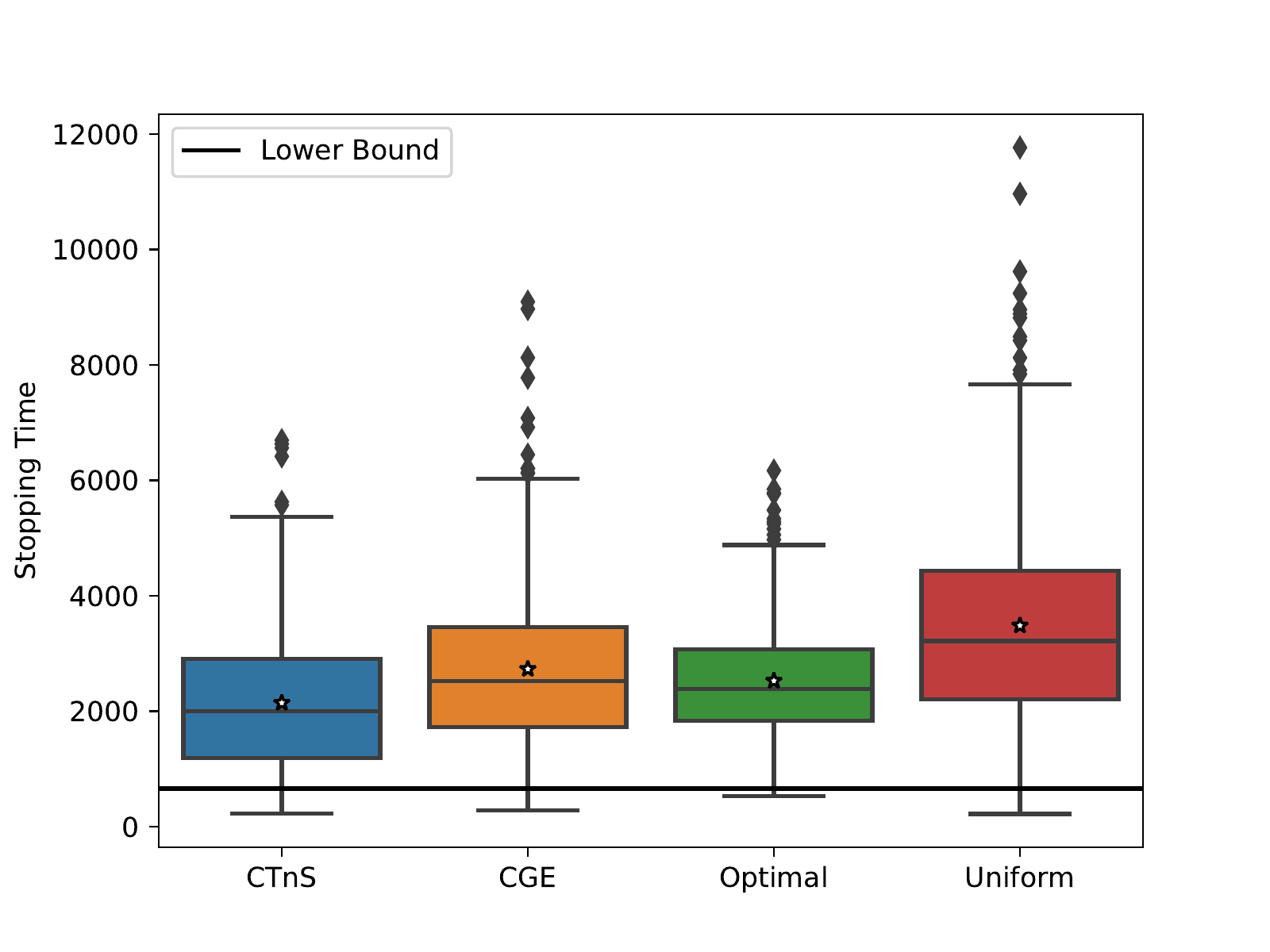}\label{fig:bexp}}   &
     \subfloat[Anytime constraints]{\includegraphics[width=0.5\textwidth]{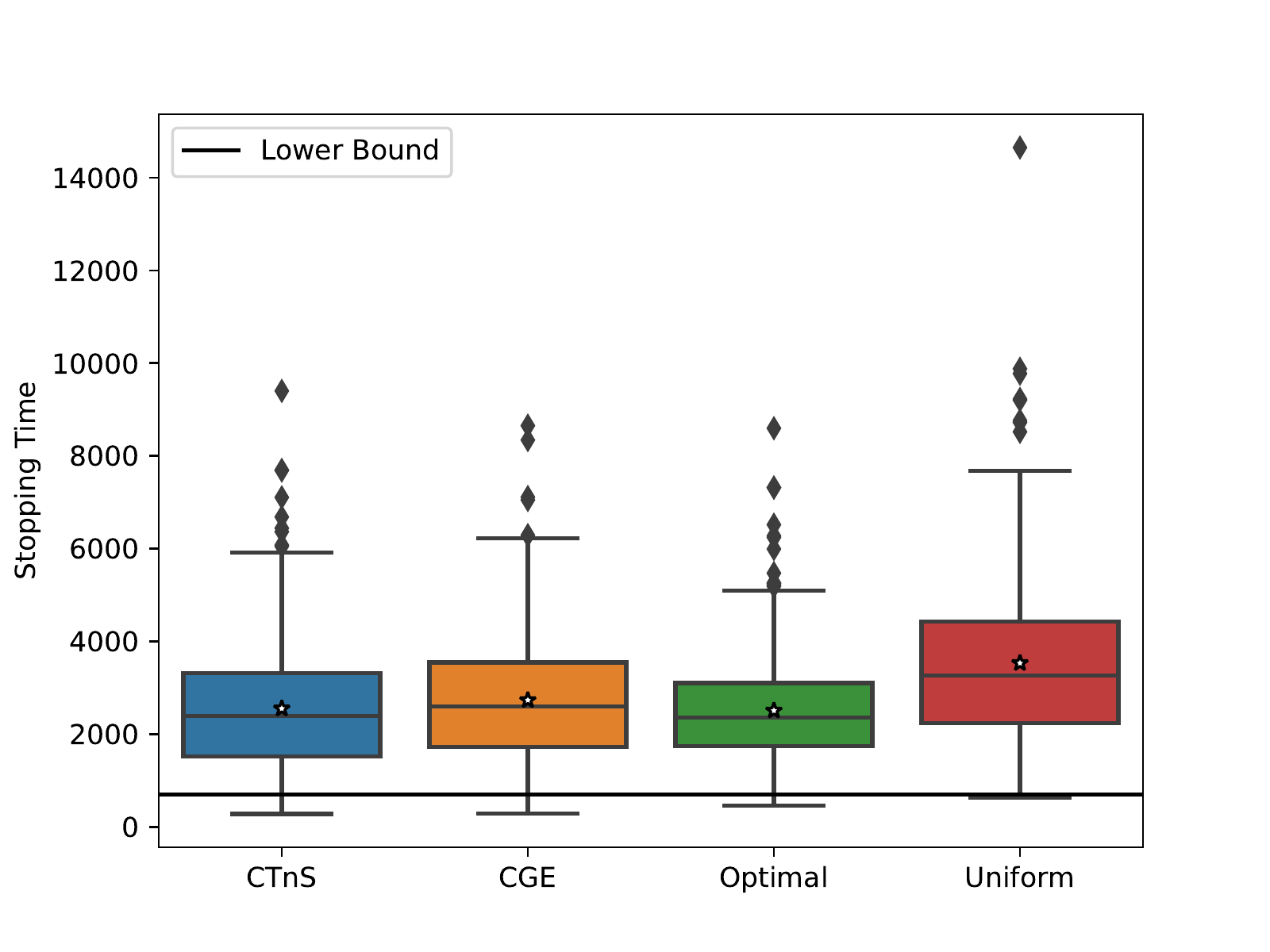}} 
        \end{tabular}
    \caption{End-of-time and Anytime constraints with \emph{Bernoulli} arms. The reward vector is $\bmu=(0.8, 0.7, 0.6, 0.5, 0.4, 0.3, 0.2)$ and the constraints are $\pol_1 + \pol_2 \leq 0.5$ and $\pol_3 +\pol_4 \leq 0.5$. Average over $500$ seeds and $\delta=0.1$. Optimal policy is $\pol_1=0.5$ and $\pol_3=0.5$.}
    \label{fig:exp_bern1}
\end{figure}

\begin{figure}[h]
    \centering
    \begin{tabular}{cc}
     \subfloat[End-of-time constraints]{\includegraphics[width=0.5\textwidth]{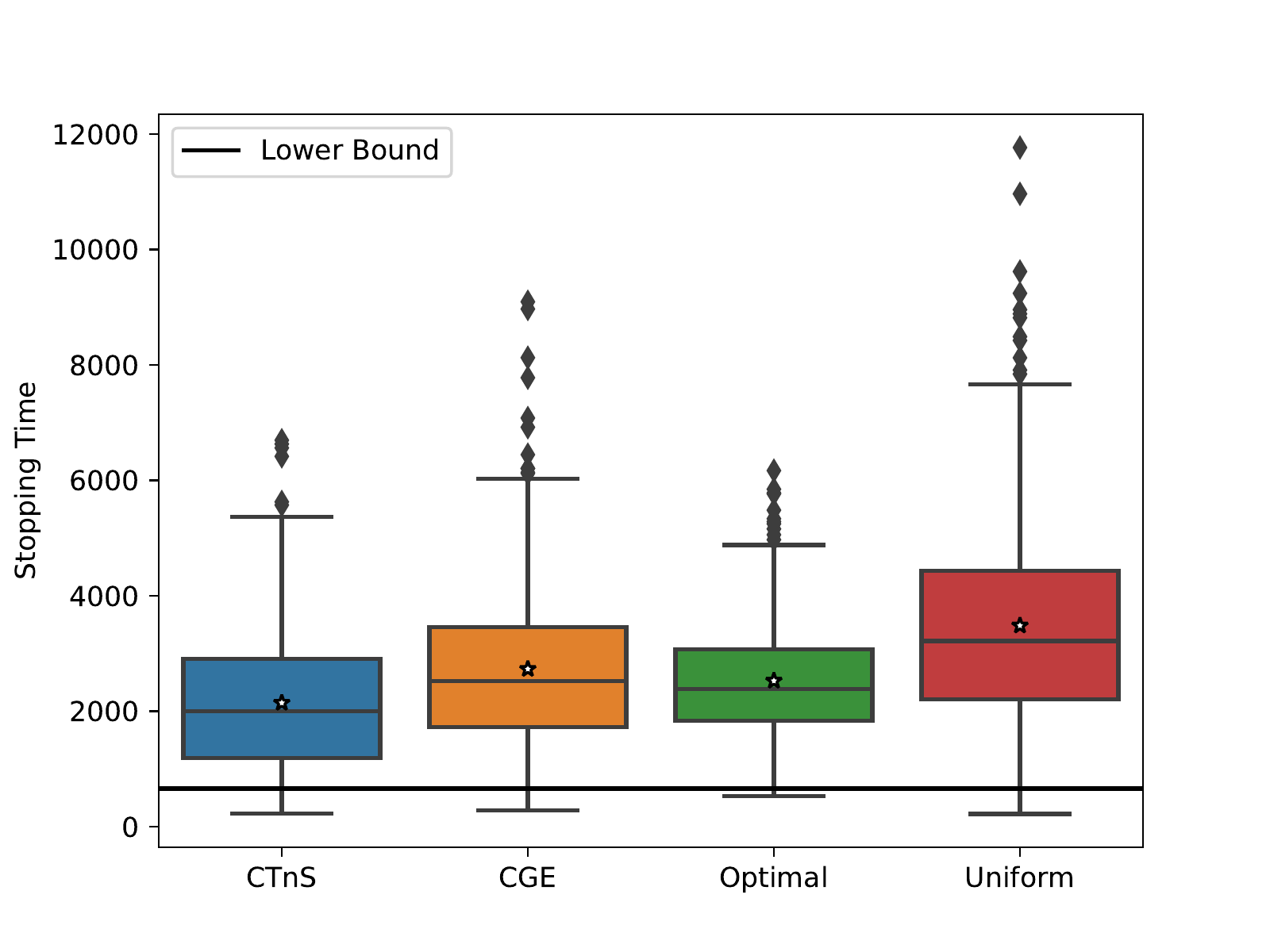}\label{fig:bexp2}}   &
     \subfloat[Anytime constraints]{\includegraphics[width=0.5\textwidth]{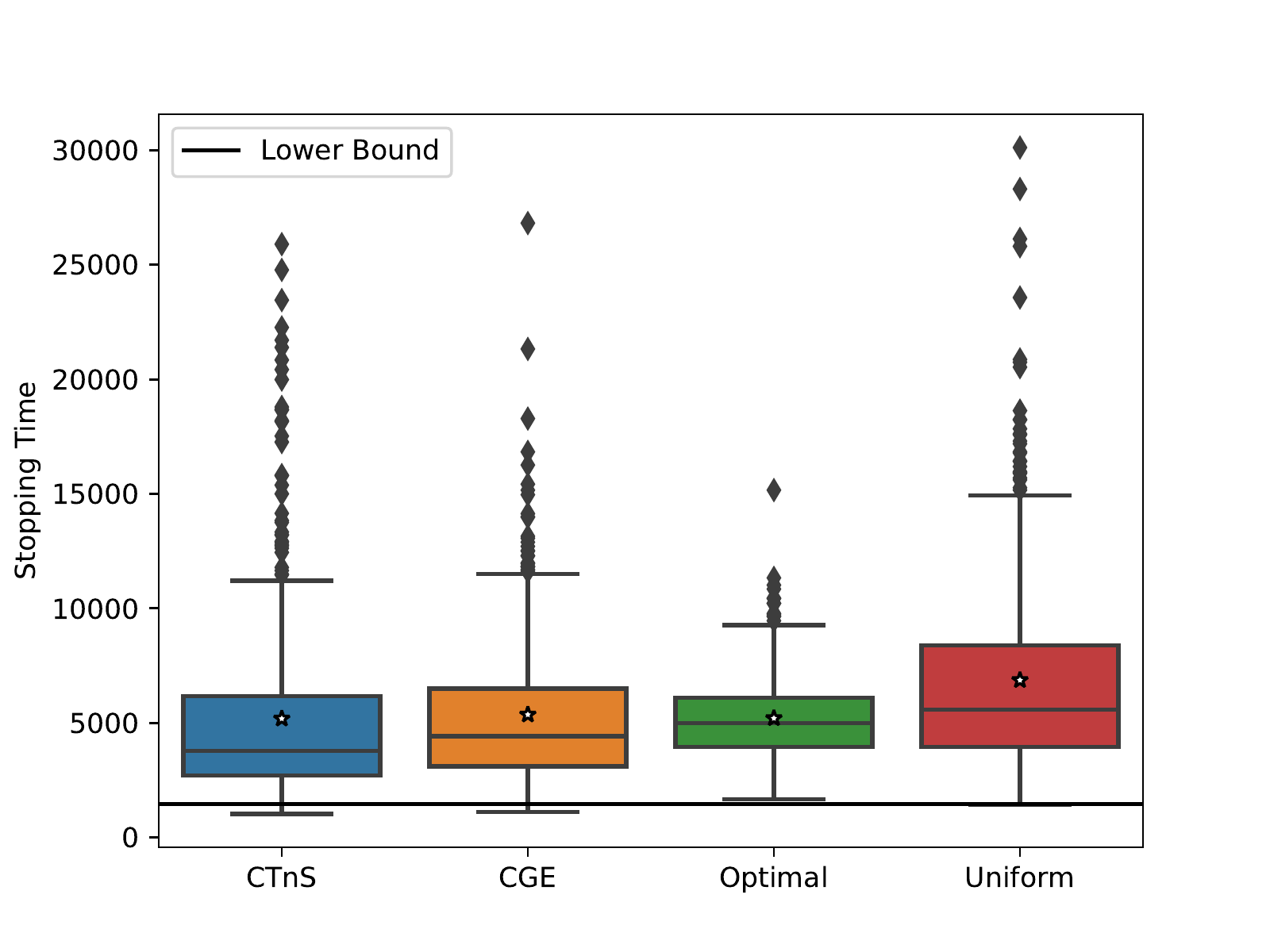}} 
        \end{tabular}
    \caption{End-of-time and Anytime constraints with Bernoulli arms. The reward vector is $\bmu=(0.8, 0.7, 0.6, 0.5, 0.4)$ and the constraints are $4\pol_1 - \pol_5 \leq 1$ and $3\pol_2 - \pol_4 \leq 1$. Average over $500$ seeds and $\delta=0.1$. Optimal policy is $\pol_1=0.25$, $\pol_2=0.33$ and $\pol_3=0.42$.}
    \label{fig:exp_bern2}
\end{figure}

\begin{figure}[h]
    \centering
    \begin{tabular}{cc}
     \subfloat[End-of-time constraints]{\includegraphics[width=0.5\textwidth]{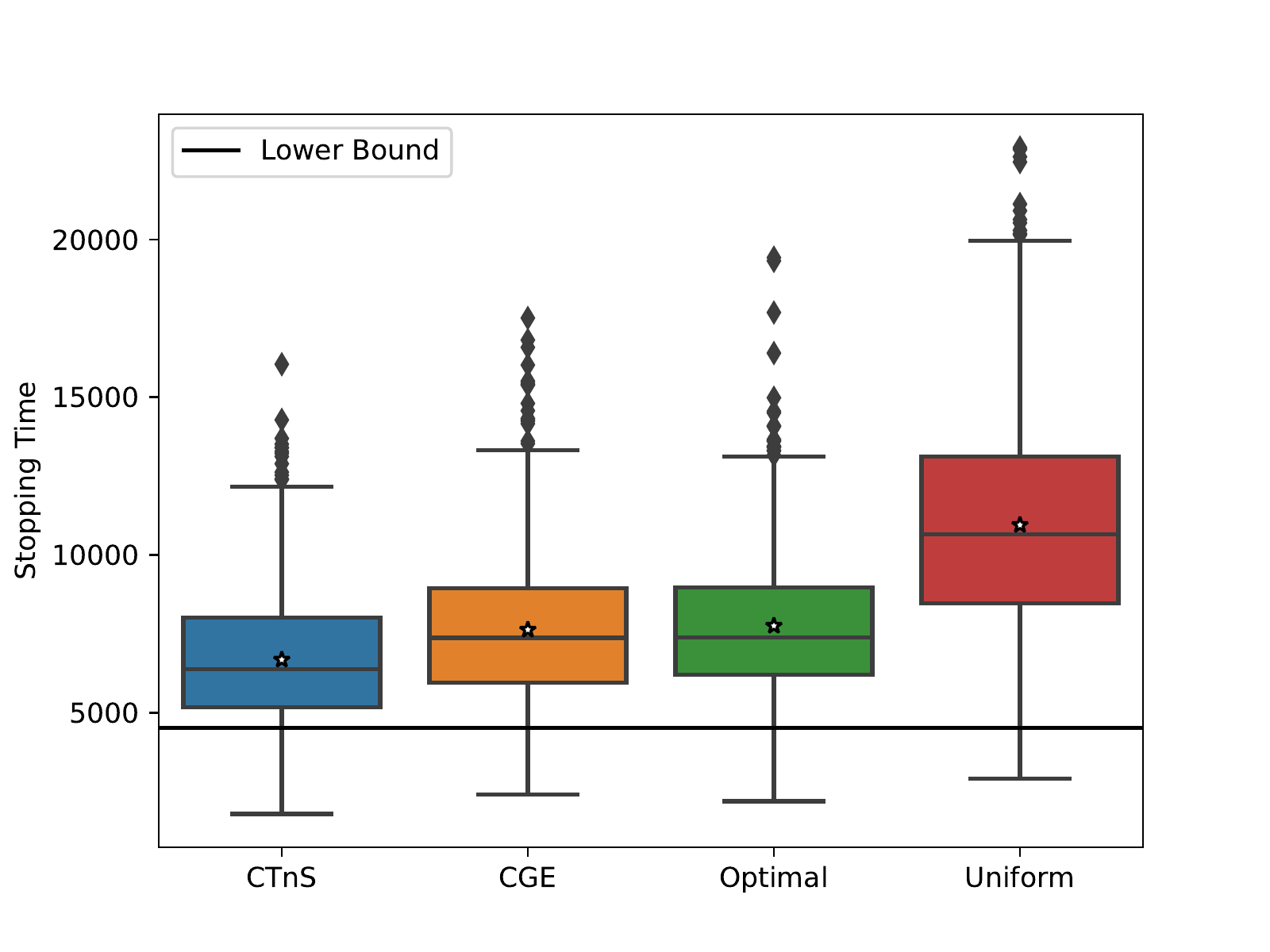}\label{fig:gexp}}    &
     \subfloat[Anytime constraints]{\includegraphics[width=0.5\textwidth] {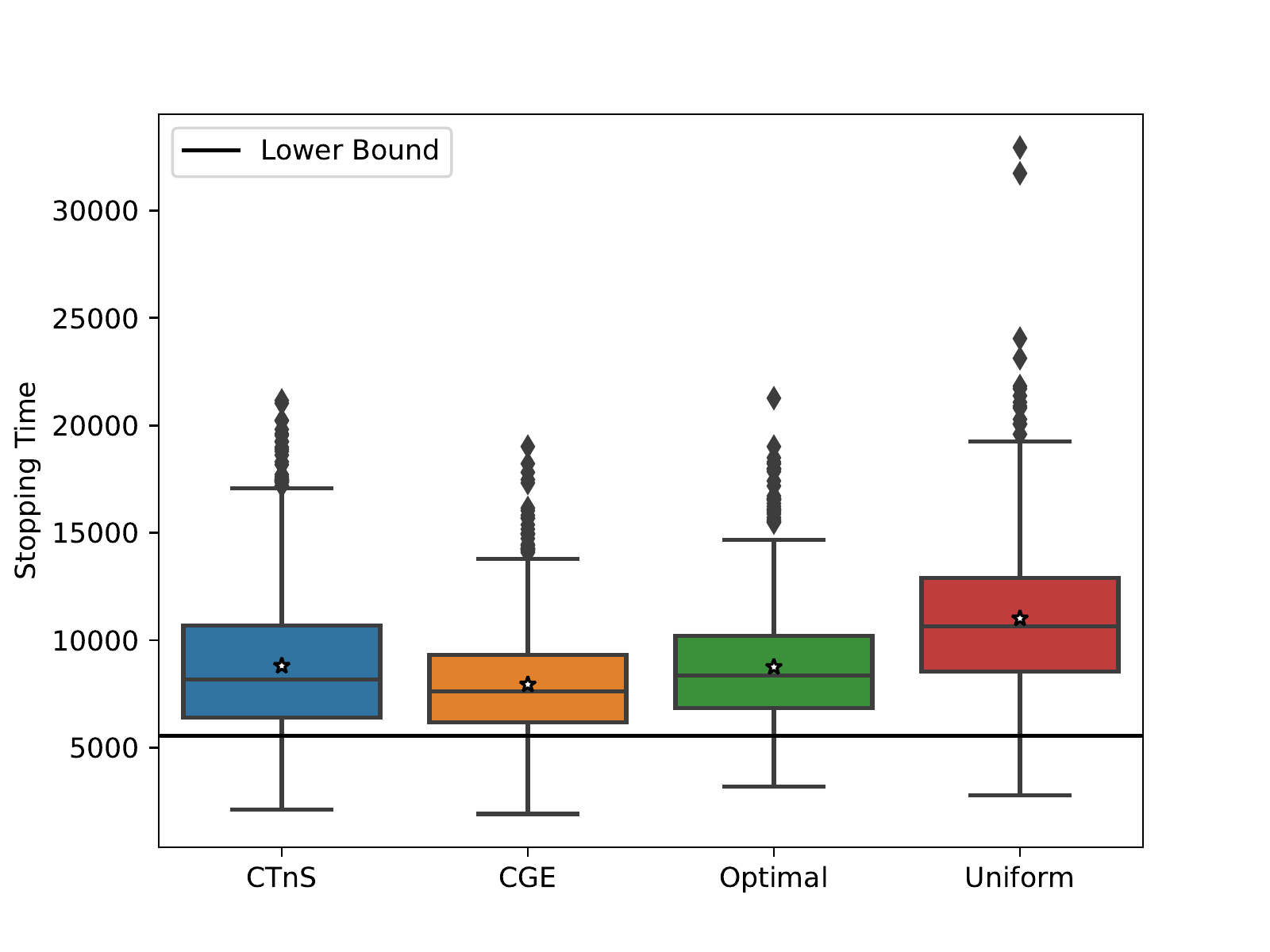}} 
        \end{tabular}
    \caption{End-of-time and Anytime constraints with \emph{Gaussian} arms $\sigma^2=1$. The reward vector is $\bmu=(2.0, 1.5, 1.45, 0.5, 0.3, -1.0, -1.0)$ and the constraints are $4\pol_1 + \pol_2 \leq 0.7$ and $\pol_2 +2\pol_3 \leq 0.5$. Average over $1000$ seeds and $\delta=10^{-4}$. Optimal policy is $\pol_1=0.05$, $\pol_2=0.5$ and $\pol_4=0.45.$}
    \label{fig:exp_1}
\end{figure}

\begin{figure}[h]
    \centering
    \begin{tabular}{cc}
     \subfloat[End-of-time constraints]{\includegraphics[width=0.5\textwidth]{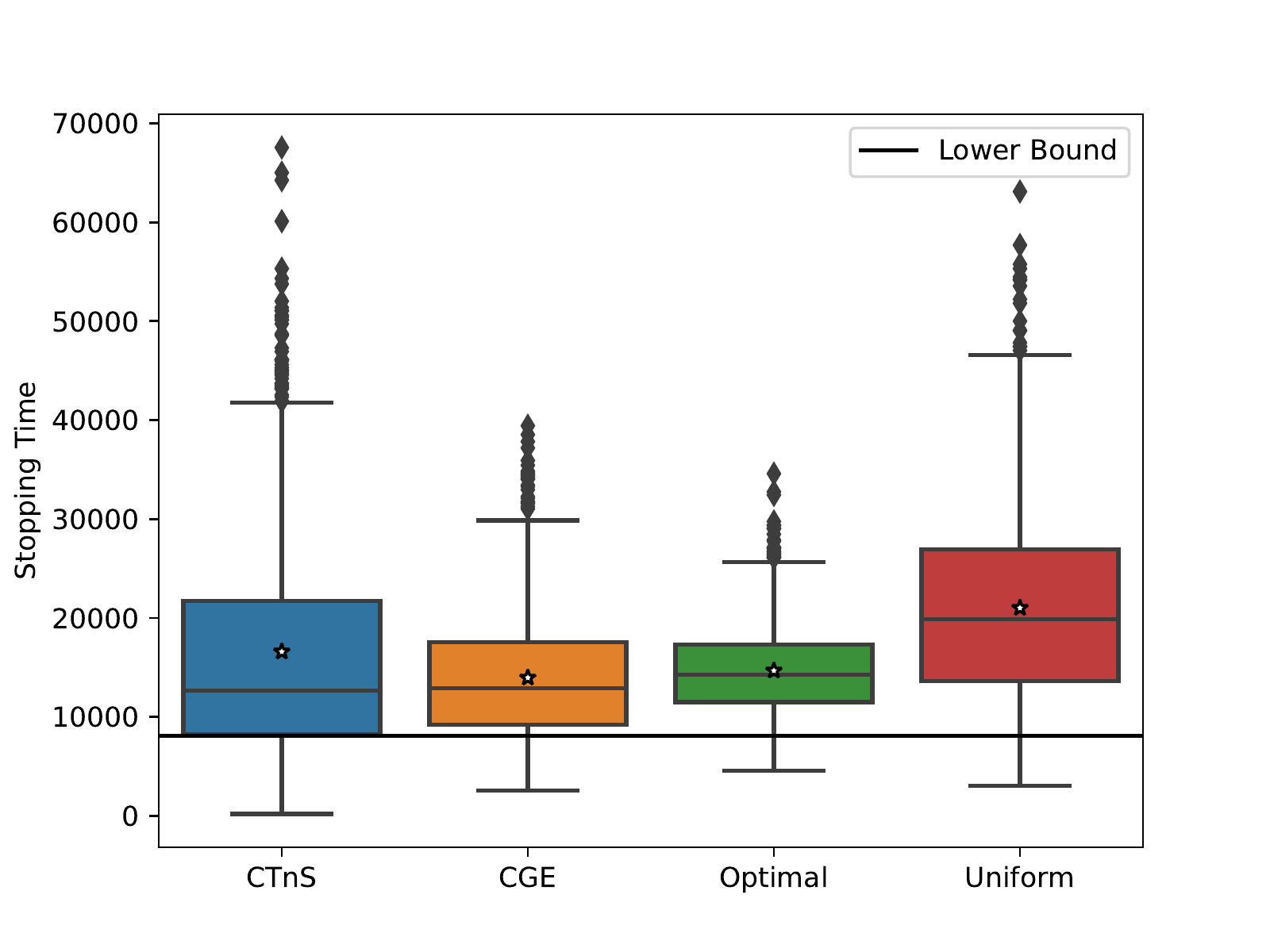}}    &
     \subfloat[Anytime constraints]{\includegraphics[width=0.5\textwidth]{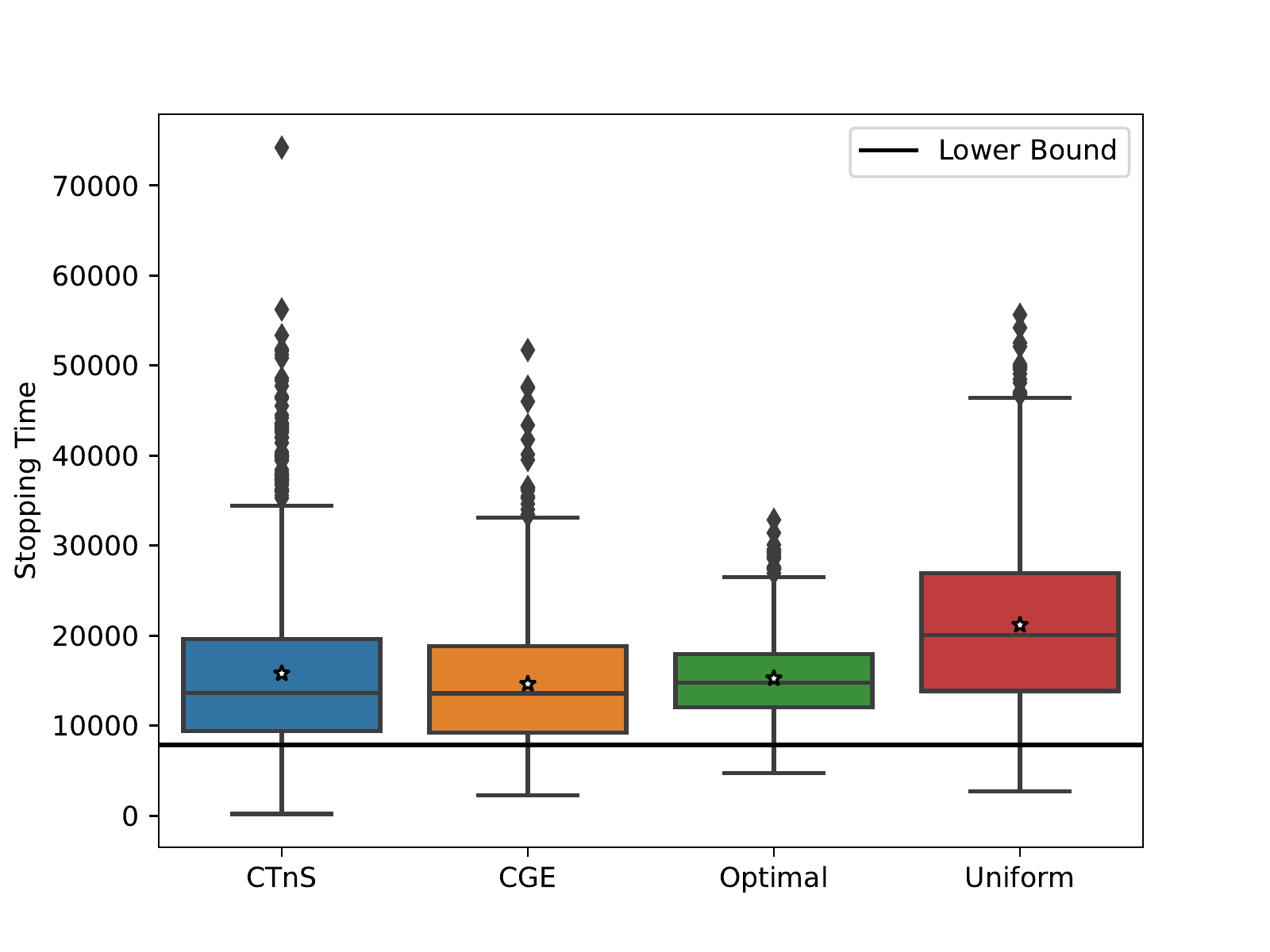}}
     \\
        \end{tabular}
    \caption{End-of-time and Anytime constraints with Gaussian arms $\sigma^2=1$. The reward vector is $\bmu=[1.0, 0.5, 0.4, 0.3, 0.2, 0.1]$ and the constraints are $\pol_1  -\pol_4 -\pol_5 -\pol_6 \leq 0.3$ and $\pol_2 \leq 0.7$. Average over $1000$ seeds and $\delta=10^{-3}.$ Optimal policy is $\pol_1 = 0.65$ and $\pol_4=0.35$.}
    \label{fig:add_exp_2}
\end{figure}
\newpage
\subsection{Running Times}\label{sec:running}
In Table~\ref{tab:running} we present the average time it take for the algorithms to check the stopping criterion and select a new arm to play. The test was performed on $1$ core of a Intel Xeon Gold 6130 CPU with $2.1$ GHz. Gaussian indicates the experiments in Figure~\ref{fig:gexp}, Bernoulli the experiments in Figure~\ref{fig:bexp} and IMDB the experiments in Figure~\ref{fig:imdb_end}. As expected CTnS is the algorithm requiring most computational time and the excessive running time it has on the experiment with Bernoulli distributions is due to the fact that we numerically solve the projection instead of relying on a close-form expression as in the case of Gaussian distributions. In contrast, we see that CGE has a relatively light computational footprint in all cases. Another advantage of CGE is that it performs a finite number of $\max$ calls at each iteration which can easily be parallelized for larger bandit instances with many constraints.  
\begin{table}[h!]
    \centering
    \begin{tabular}{c|c c c}
         Algorithm & Bernoulli & Gaussian & IMDB \\
         \hline
         CTnS & $1.00 \pm 0.244$ & $0.030 \pm 0.006$ & $0.033 \pm 0.015$ \\
         CGE & $0.02 \pm 0.001$ & $0.005 \pm 0.001$ & $0.008 \pm 0.001$ \\
         Uniform & $0.009 \pm 3 \times 10^{-4}$ & $0.001 \pm 1 \times 10^{-4}$ & $0.002 \pm 2 \times 10^{-4} $ \\
    \end{tabular}
    \caption{Average time, in seconds, it takes to check the stopping criterion and select a new arm for the different algorithms. The$\pm$ indicates one standard deviation. We omitted the optimal sampler since this one has the same running time as the uniform sampler.}
    \label{tab:running}
\end{table}
\newpage
\subsection{IMDB environment}
For resproducibility, here we provide the specifics of the IMDB data in the Table~\ref{tab:imdb} as used in the experiments (Figure~\ref{fig:exp_imdb}).

\begin{table}[h!]
    \centering
    \begin{tabular}{c|c c c c c}
        Movie & Average Rating & $\sigma$ & Action & Drama & Family  \\
        \hline
        The Net & $3.67$ & $1.26$ & $1$ & $1$ & $0$ \\
        Happily N'Ever After & $2.97$ & $1.30$ & $0$ & $0$ & $1$ \\
        Tomorrowland & $2.94$ & $1.31$ & $1$ & $0$ & $1$ \\
        American Hero & $3.52$ & $1.33$ & $1$ & $1$ & $0$ \\
        Das Boot & $3.18$ & $1.30$ & $0$ & $1$ & $0$ \\
        Final Destination 3 & $2.02$ & $0.93$ & $0$ & $0$ & $0$ \\
        Licence to Kill& $2.79$ & $1.22$ & $1$ & $0$ & $0$ \\
        The Hundred-Foot Journey& $2.97$ & $1.31$ & $0$ & $1$ & $0$ \\
        The Matrix& $2.32$ & $1.14$ & $1$ & $0$ & $0$ \\
        Creature& $2.53$ & $1.20$ & $0$ & $0$ & $0$ \\
        The Basket& $2.55$ & $1.19$ & $0$ & $1$ & $0$ \\
        Star Trek: The Motion Picture& $2.54$ & $1.16$ & $0$ & $0$ & $0$ \\
        \hline
    \end{tabular}
    \caption{Movies used in the experiments presented in Figure~\ref{fig:exp_imdb}. The optimal policy is $\optpol_1=0.3$, $\optpol_2=0.3$ and $\optpol_5=0.4$. We used the maximum $\sigma$ in the algorithms. This means that the algorithms didn't have access to the true $\sigma$ of each arm and instead modelled them all as Gaussian distributions with $\sigma=1.33$ but the rewards were sampled from the environment using the true $\sigma$.}
    \label{tab:imdb}
\end{table}

\clearpage
\section{Further discussion on the sub-optimality of PTnS}\label{app:ptns}
\begin{figure}[h]
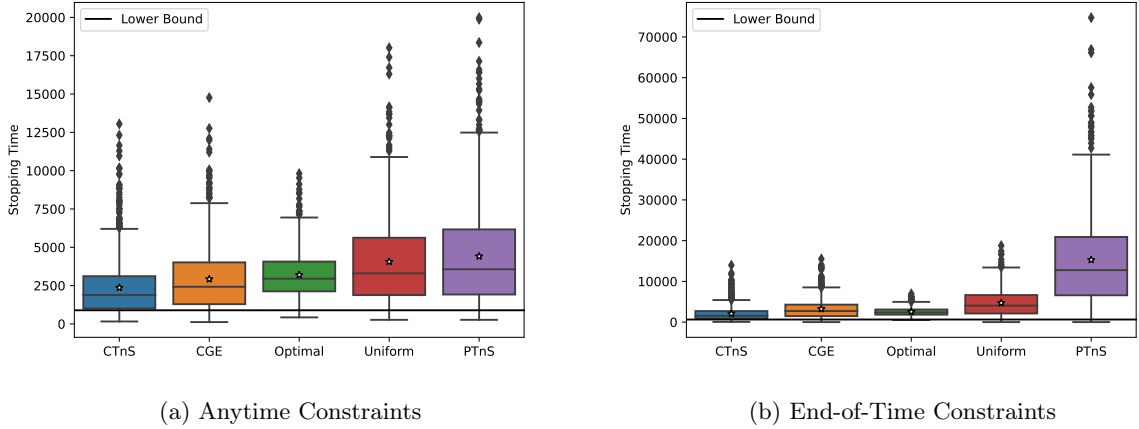

    \centering
    \begin{tabular}{cc}
    \subfloat[Anytime Constraints]{\includegraphics[width=0.45\textwidth]{rebuttal_figs/anytime.pdf}\label{fig:ptns_any_app}} & 
    \subfloat[End-of-Time Constraints]{\includegraphics[width=0.45\textwidth]{rebuttal_figs/ptnas_endtime.pdf}\label{fig:ptns_end_app}}
        \end{tabular}
    \caption{Problem instance with $8$ Gaussian arms with $\sigma=1$. The arm means are $\mu=[1.0, 0.7, 0.3, 0.0, -0.5, -1.0, -2.0, -3.0]$ and we have one constraint $7\pi_1 + 7\pi_2 + \pi_3 \leq 0.5$. The optimal policy is $\pi_3 = \pi_4=0.5$. Results for $\delta=0.1$ and $1000$ random seeds.}
    \label{fig:ptns_app}
\end{figure}

In Figure \ref{fig:ptns_app}, we consider an eight-armed bandit with Gaussian reward distributions with means 
\begin{align*}
   \bmu =  [1.0, 0.7, 0.3, 0.0, -0.5, -1.0, -2.0, -3.0],
\end{align*} 
variance $1$, and the constraint $7\pi_1 + 7\pi_2 + \pi_3 \leq 0.5$. 

We observe that PTnS performs the worst on this instance, specially in the end-of-time setting. This reflects the fact that \textit{the optimal allocation w.r.t. classical BAI bound does not have to be close to the optimal allocation given by the constraint version of the lower bound}. 

In Figure~\ref{fig:ptns_end_app}, the optimal allocation for the constraint problem is \begin{align*}
   \bw^* = [0.09, 0.02, { \bf 0.43}, {\bf0.36}, 0.03, 0.02, 0.02, 0.02],
\end{align*}
while the unconstrained optimal BAI allocation with the same $\bmu$ is 
\begin{align*}
    \hat{\bw} = [{ \bf 0.43}, {\bf 0.42}, 0.05, 0.03, 0.02, 0.02, 0.02, 0.02].
\end{align*}
Hence, PTnS focuses on exploring arm $1$ and $2$ the most, which makes sense without any constraints.
In contrast, the optimal allocation under constraint, i.e. $w^*$, suggests that one should focus on arm $3$ and $4$ as the constraint puts a disproportional cost on arm $1$ and $2$. 

In the anytime scenario, Figure~\ref{fig:ptns_any_app}, the optimal allocation is 
\begin{align*}
    w^* = [0.02, 0.01, { \bf 0.32}, {\bf 0.54}, 0.03, 0.03, 0.03, 0.03].
\end{align*}
In this scenario, the allocation $\hat{\bw}$, computed by PTnS, is no longer feasible and PTnS instead converges to the projected version
\begin{align*}
    w' = [0.03, 0.02, 0.12, 0.18, {\bf 0.16}, {\bf 0.16}, {\bf 0.16}, {\bf 0.16}].
\end{align*}
We observe that the previous issue is now mitigated by the projection, PTnS is no longer overly obsessed with arm $1$ and $2$. However, another issue arises as the projection distributes a substantial probability to the arms $5-8$, which are highly sub-optimal. These phenomena lead to worse performance of PTnS w.r.t. CTnS and CGE, as shown in Figure~\ref{fig:ptns_any_app}.

\clearpage
\section{Useful definitions and results}\label{app:useful}
\begin{definition}[Upper hemicontinuity]
We say that a set-valued function $C:\Theta \rightarrow \Omega$ is upper hemicontinuous at the point $\theta \in \Theta$ if for any open set $S \subset \Omega$ with $C(\theta) \in S$ there exists a neighborhood $U$ around $\theta$, such that $\forall x \in U$, $C(x)$ is a subset of S.
\end{definition}
\begin{theorem}[Berge's maximum theorem~\citep{berge1963topological}]\label{thm:berge}
    Let $X$ and $\Theta$ be topological spaces. Let $f:X \times \Theta \rightarrow \R$ be a continuous function and let $C: \Theta \rightarrow X$ be a compact-valued correspondence such that $C(\theta) \neq \emptyset$ $\forall \theta \in \Theta$.  
    If $C$ is continuous at $\theta$ then $f^*(\theta) = \sup_{x \in C(\theta)} f(x, \theta)$ is continuous and $C^*=\{x \in C(\theta): f(x, \theta) = f^*(\theta)\}$ is upper hemicontinuous.
\end{theorem}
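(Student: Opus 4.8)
The plan is to establish the two conclusions — continuity of $f^*$ and upper hemicontinuity of $C^*$ — in sequence, since the second will lean on the first. Before either, I would record a preliminary fact: for each fixed $\theta$ the set $C(\theta)$ is compact and nonempty and $x \mapsto f(x,\theta)$ is continuous, so by the extreme-value theorem the supremum defining $f^*(\theta)$ is attained. Hence $f^*(\theta)$ is finite, and $C^*(\theta) = \{x \in C(\theta): f(x,\theta) = f^*(\theta)\}$ is nonempty; being the intersection of the compact set $C(\theta)$ with the closed level set $\{x: f(x,\theta) = f^*(\theta)\}$, it is also compact. This justifies replacing $\sup$ by $\max$, exactly as Theorem~\ref{lm:properties} requires.

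For continuity of $f^*$ at a point $\theta_0$ I would prove lower and upper semicontinuity separately, matching each direction to one half of the continuity of $C$ (recall that continuity of the correspondence means both upper hemicontinuity and lower hemicontinuity, the latter asking that every open set meeting $C(\theta_0)$ still meets $C(\theta)$ for all $\theta$ in a neighborhood of $\theta_0$). For lower semicontinuity, fix $\epsilon > 0$ and a maximizer $x_0 \in C^*(\theta_0)$; continuity of $f$ at $(x_0,\theta_0)$ gives a product neighborhood $W \times U_1$ on which $f > f^*(\theta_0) - \epsilon$, and lower hemicontinuity of $C$ applied to $W$ (which meets $C(\theta_0)$ at $x_0$) gives a neighborhood $U_2$ with $C(\theta) \cap W \neq \emptyset$ throughout; any such point shows $f^*(\theta) > f^*(\theta_0) - \epsilon$ on $U_1 \cap U_2$. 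For upper semicontinuity, fix $\epsilon > 0$; for each $x \in C(\theta_0)$ continuity of $f$ supplies a product neighborhood $W_x \times U_x$ on which $f < f^*(\theta_0) + \epsilon$, and since the $W_x$ cover the compact set $C(\theta_0)$ a finite subfamily $W_{x_1}, \dots, W_{x_n}$ suffices, yielding an open $V \supseteq C(\theta_0)$ and a finite intersection of the $U_{x_i}$; upper hemicontinuity of $C$ then confines $C(\theta) \subseteq V$ nearby, so every $x \in C(\theta)$ lands in some $W_{x_i}$ and $f^*(\theta) \leq f^*(\theta_0) + \epsilon$. Combining the two directions gives continuity of $f^*$.

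The hard part will be upper hemicontinuity of $C^*$, where I would exploit the continuity of $f^*$ just proved. Fix $\theta_0$ and an open set $V \supseteq C^*(\theta_0)$; the goal is a neighborhood $U$ of $\theta_0$ with $C^*(\theta) \subseteq V$ for all $\theta \in U$. The set $A = C(\theta_0) \setminus V$ is compact, and if it is empty the claim follows directly from upper hemicontinuity of $C$. Otherwise $A$ is disjoint from the maximizers, so $f(\cdot,\theta_0) < f^*(\theta_0)$ on $A$, and compactness yields a margin $\eta > 0$ with $f(\cdot,\theta_0) \leq f^*(\theta_0) - 2\eta$ on $A$. By continuity of $f$ and a finite-subcover argument over $A$ I would produce an open $W \supseteq A$ and a neighborhood $U_2$ on which $f < f^*(\theta_0) - \eta$, while continuity of $f^*$ gives a neighborhood $U_1$ with $f^*(\theta) > f^*(\theta_0) - \eta$. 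Since $C(\theta_0) \subseteq V \cup W$, upper hemicontinuity of $C$ supplies $U_3$ with $C(\theta) \subseteq V \cup W$ nearby. For $\theta \in U_1 \cap U_2 \cap U_3$ and any maximizer $x \in C^*(\theta)$ we have $f(x,\theta) = f^*(\theta) > f^*(\theta_0) - \eta$, which rules out $x \in W$; as $x \in C(\theta) \subseteq V \cup W$, necessarily $x \in V$, proving $C^*(\theta) \subseteq V$.

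The main obstacle is precisely this last step: isolating the non-maximizing part $A$ of $C(\theta_0)$, separating its $f$-values from $f^*(\theta_0)$ by a uniform margin via compactness, and then deploying both the freshly established continuity of $f^*$ and the upper hemicontinuity of $C$ to trap maximizers at nearby parameters inside $V$. The semicontinuity arguments for $f^*$ are routine once the correct half of the continuity of $C$ is paired with each direction, and I would emphasize that no metrizability of $X$ or $\Theta$ is invoked — every step is phrased through open covers and neighborhoods, so the statement holds for arbitrary topological spaces as written.
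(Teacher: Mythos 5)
The paper offers no proof of this statement at all: Berge's maximum theorem is imported verbatim from \citet{berge1963topological} and recorded in the appendix solely so it can be invoked in the proof of Theorem~2 (continuity of $D(\bw,\bmu,\F)$ and upper hemicontinuity of $w^*(\bmu)$, via the substitution $f(x,\theta)=D(\bw,\bmu,\F)$, $C(\theta)=\allocationset$). So there is no in-paper argument to compare yours against; what you have written is the classical textbook proof, and it is correct. Your preliminary step (extreme-value theorem on the nonempty compact $C(\theta)$, so the supremum is attained and $C^*(\theta)$ is a nonempty closed subset of a compact set, hence compact) is sound in arbitrary topological spaces; the two semicontinuity halves of the continuity of $f^*$ are each correctly paired with the matching half of the continuity of $C$ (lower hemicontinuity plus continuity of $f$ at a maximizer for lower semicontinuity; upper hemicontinuity plus a finite subcover of $C(\theta_0)$ for upper semicontinuity); and the final trapping argument for upper hemicontinuity of $C^*$ — isolating the compact non-maximizing set $A=C(\theta_0)\setminus V$, extracting a uniform margin $2\eta$ by compactness, covering $A$ by product neighborhoods on which $f<f^*(\theta_0)-\eta$, and using $C(\theta)\subseteq V\cup W$ together with $f^*(\theta)>f^*(\theta_0)-\eta$ to exclude maximizers from $W$ — is exactly right, including the degenerate case $A=\emptyset$. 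Two small remarks: you only need the lower-semicontinuity half of the continuity of $f^*$ in that last step, as you implicitly do; and the paper's own definition of upper hemicontinuity contains the typo ``$C(\theta)\in S$'' where $C(\theta)\subseteq S$ is meant — your proof uses the intended subset reading, which is also what the proof of Theorem~2 in the paper requires.
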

Below we restate the upper bound on the sample complexity of the Gamified Explorer (GE) of \citet{Degenne}.
\begin{theorem}[Theorem 2 in \citet{Degenne}]
    The sample complexity of GE is \begin{align*}
        \E[\tau] \leq T_0(\delta) +\frac{eK}{a}
    \end{align*}
    where \begin{align*}
        T_0(\delta) = \max \{t \in \mathbb{N}: t \leq T(\bmu)c(t, \delta) + C_{\bmu}(R_t^{\blambda} + R_t^{\bw} + O(\sqrt{t \log t}))\}
    \end{align*}
    where $R_t^{\blambda}$ is the regret of the instance player, $R_t^{\bw}$ the regret of the allocation player and $C_{\bmu}$ an instance-dependent constant.
\end{theorem}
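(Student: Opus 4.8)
The plan is to follow the gamification template of \citet{Degenne} for the characteristic-time game $\sup_{\bw\in\allocationset}\inf_{\blambda\in\Alt(\bmu)}\sum_a w_a\KL{\mu_a}{\lambda_a}$, adapting it at the three points where the linear constraints enter: the Alt-set is now the complement of the normal cone, the instance player's best response is the weighted projection from Lemma~\ref{lm:proj}, and the allocation player optimizes over the feasible set $\allocationset$ rather than over individual arms. First I would fix a horizon $T$ and define a \emph{good event} $\event_T = \{\forall t\leq T,\ \forall a:\ N_{a,t}\KL{\hat{\mu}_{a,t}}{\mu_a}\leq f(t)\}$ with $f(t)=3\log t+\log\log t$. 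A Magureanu-type deviation inequality \citep{Magureanu14} then bounds $\sum_{t}P(\event_T^c)$ by a finite constant times $K$; this is what will eventually contribute the additive $CK$ term.

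The core argument runs on $\event_T$ and on those steps where the empirical LP solution already equals $\optpol$. On such a step $t$ before stopping, the Chernoff stopping rule gives $c(t,\delta)\geq \inf_{\blambda\in\Alt(\hat{\bmu}_t)}\sum_a N_{a,t}\KL{\hat{\mu}_{a,t}}{\lambda_a}$. I would then perform two replacements, each standard but requiring care: (i) a tracking lemma in the style of \citet{garivier2016optimal} lets me swap the counts $N_{a,t}$ for the cumulative allocations $\sum_{s\leq t}w_{a,s}$ at an additive cost $O(\sqrt{t})$; and (ii) the local Lipschitz property of $\KL{\cdot}{\lambda}$, combined with the width $\sqrt{2\sigma^2 f(s)/N_{a,s}}$ of the confidence intervals on $\event_T$, lets me swap the terminal empirical mean $\hat{\mu}_{a,t}$ for the per-step mean $\hat{\mu}_{a,s}$ at cost $O(\sqrt{t\log t})$.

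Next I would cast CGE as an approximate optimistic saddle-point algorithm in the constrained sense: the difference from \citet{Degenne} is that the inner objective is maximized over $\allocationset$, since the arm-wise maximizer of the optimistic losses need not be a feasible exploration policy under anytime constraints. Two regret facts drive this step. The instance player computes an exact best response via Lemma~\ref{lm:proj}, so its regret is nonpositive; the allocation player is AdaGrad \citep{Duchi11}, which over the bounded convex set $\allocationset$ guarantees regret $R^{\bw}_t=O(\sqrt{Qt})$ with $Q$ an upper bound on the losses. Writing the optimistic surrogate $U_{a,s}$ and its gap to the true divergence, the accumulated optimism is again $O(\sqrt{t\log t})$ on $\event_T$. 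Chaining these with the optimism property $U_{a,s}\geq \KL{\mu_a}{\lambda_{a,s}}$ and then lower bounding $\max_{\bw\in\allocationset}\sum_{s\leq t}\sum_a w_a\KL{\mu_a}{\lambda_{a,s}}\geq t\,\chartime^{-1}(\bmu)$ by the definition of the characteristic time, I would rearrange to obtain, on the good recommendation steps, $t\leq \chartime(\bmu)\,c(t,\delta)+O(\sqrt{tQ})+O(\sqrt{t\log t})$.

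Finally I would bound the steps on which the recommendation is wrong, i.e. $\optpol\neq\argmax_{\pol\in\F}\hat{\bmu}_t^\top\pol$. Assumption 1 supplies a uniform Chernoff-information gap $\epsilon>0$, so on each such step some arm injects at least $\epsilon$ into the optimistic objective; re-running the same saddle-point bookkeeping restricted to these steps shows the optimistic objective grows linearly in their number, while $\event_T$ caps the total accumulated divergence at $O(\log t)$, forcing the count of wrong-recommendation steps to be $O(\sqrt{t\log t})$ and hence absorbable. Taking $T_0(\delta)$ to be the largest $t$ for which the chained inequality can hold and adding the $CK$ contribution from $\event_T^c$ then yields $\E[\tau]\leq T_0(\delta)+CK$. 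I expect the two genuinely new obstacles to be establishing the constrained saddle-point property correctly, where replacing the maximum-over-arms by a maximum over $\allocationset$ changes the optimism accounting, and the wrong-recommendation argument, which must make the Chernoff gap interact cleanly with the two players' regret terms without destroying the $O(\sqrt{t\log t})$ order.
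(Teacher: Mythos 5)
Your proposal follows essentially the same route as the paper's own proof of the corresponding bound (Appendix~\ref{app:cge}): the same good event with $f(t)=3\log t+\log\log t$ and Magureanu-type concentration yielding the additive $CK$ term, the same tracking/Lipschitz replacements starting from the Chernoff stopping criterion, the same constrained approximate optimistic saddle-point property with the maximum taken over $\allocationset$ instead of over arms, AdaGrad's $O(\sqrt{Qt})$ regret combined with the nonpositive-regret best-response instance player via Lemma~\ref{lm:proj}, and the same Chernoff-information gap argument to bound the wrong-recommendation steps by $O(\sqrt{t\log t})$. The proposal is correct and matches the paper's argument step for step.
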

\end{document}